\documentclass{article}

\usepackage[preprint]{neurips_2026}
\usepackage{tabularx}
\usepackage{makecell}
\usepackage[utf8]{inputenc}
\usepackage[T1]{fontenc}
\usepackage{hyperref}
\usepackage{url}
\usepackage{booktabs}
\usepackage{multirow}
\usepackage{amsmath}
\usepackage{amsfonts}
\usepackage{nicefrac}
\usepackage{microtype}
\usepackage{xcolor}
\usepackage{graphicx}
\usepackage{subcaption}
\usepackage{tikz}
\usetikzlibrary{arrows.meta,calc,decorations.pathreplacing,positioning}
\usepackage{etoc}
\newtheorem{theorem}{Theorem}
\newtheorem{proposition}{Proposition}
\newtheorem{lemma}{Lemma}

\newcommand{\vt}[1]{\mathbf{#1}}
\DeclareMathOperator{\softmax}{softmax}

\title{Score Broadcast and Decorrelation: A General Framework for Broadcast-Based Credit Assignment}

\author{%
\begin{minipage}{0.98\textwidth}
\centering
\bfseries\small
Mustafa Uzun\textsuperscript{1,2} \quad
Mete Erdogan\textsuperscript{3} \quad
Cengiz Pehlevan\textsuperscript{4,5,6} \quad
Alper T. Erdogan\textsuperscript{1,2}\\[5pt]
\normalfont\footnotesize
\textsuperscript{1}KUIS AI Center, Koc University, Turkey\\[-1pt]
\textsuperscript{2}Electrical and Electronics Engineering, Koc University, Turkey\\[-1pt]
\textsuperscript{3}Department of Electrical Engineering, Stanford University, USA\\[-1pt]
\textsuperscript{4}John A. Paulson School of Engineering \& Applied Sciences, Harvard University, USA\\[-1pt]
\textsuperscript{5}Kempner Institute, Harvard University, USA \quad
\textsuperscript{6}Center for Brain Science, Harvard University, USA\\[4pt]
\texttt{\{muzun22, alperdogan\}@ku.edu.tr} \quad
\texttt{merdogan@stanford.edu} \quad
\texttt{cpehlevan@seas.harvard.edu}
\end{minipage}
}

\begin{document}

\etocdepthtag.toc{mtchapter}
\etocsettagdepth{mtchapter}{subsection}
\etocsettagdepth{mtappendix}{none}

\maketitle

\begin{abstract}
We introduce \textit{Score Broadcast and Decorrelation} (SBD), a principled framework for broadcast-based credit assignment for general families of differentiable losses. Error broadcast is a biologically plausible alternative to backpropagation that sends output information to hidden layers without weight transport. The Error Broadcast and Decorrelation (EBD) framework, recently introduced for the mean-squared-error (MSE) setting, grounded this mechanism in the stochastic orthogonality of optimal estimators, under which the optimal residual is orthogonal to functions of the input. We generalize that foundation by introducing an orthogonality principle between the \textit{output score} (the gradient of loss with respect to the final-layer output) and hidden-layer activations, which holds whenever the optimal score has conditional mean zero. This single principle unifies broadcast-based credit assignment across the standard differentiable-loss families, including cross-entropy, Bregman divergences (with MSE as a special case), proper scoring rules through unconstrained links, and exponential-family negative log-likelihoods. The framework supplies a theoretical grounding for the three-factor learning rule under general losses, with the neuromodulatory factor derived as the broadcast loss score rather than postulated. We derive the cross-entropy case explicitly, characterize the admissible loss class, and introduce a \textit{score vector expansion} technique that enriches the broadcast signal while preserving the orthogonality framework. Experiments on CIFAR-10 and Tiny ImageNet show that SBD substantially improves over existing broadcast approaches, with score vector expansion delivering further gains. Overall, this work identifies the loss score as the signal to broadcast, supplies the orthogonality theory and theoretical grounding for the three-factor learning rule from neuroscience, and shows how score vector expansion enriches the decorrelation directions of the resulting objective.\looseness=-1
\end{abstract}

\section{Introduction}
\label{sec:introduction}

Neural networks serve as the fundamental computational models of natural
intelligence and form the core engine of modern AI. Across both domains, a key
challenge is the credit assignment problem: how to update local
synaptic weights to optimize a global performance metric. The dominant solution
in machine learning is backpropagation (BP), which derives exact gradients by
transmitting output errors backward through the network \citep{Rumelhart:86}.
Although highly effective, BP requires symmetric backward pathways and exact
weight transport, architectural constraints long viewed as
biologically implausible \citep{crick1989recent,lillicrap2020backpropbrain} and unsuitable for efficient hardware implementations.\looseness=-1

These constraints have driven the search for alternative credit assignment
frameworks that relax the strict routing requirements of BP
\citep{whittington2019theories}. One notable family of such approaches relies
on \textit{error broadcast}: distributing global output information directly to
hidden layers rather than propagating it sequentially. Although attractive and
biologically plausible, this raises a fundamental question: \emph{ for a given loss, what specific quantity should be broadcasted, and what theoretical principle justifies that such a decentralized mechanism can drive learning? }\looseness=-1

The \textit{Error Broadcast and Decorrelation} (EBD) framework
\citep{erdoganerror} provided a principled answer in the mean-squared-error
(MSE) setting. Its starting point is the stochastic orthogonality property of
MMSE estimation: at optimality, the residual error is orthogonal to suitable
functions of the input. EBD turns this into layerwise decorrelation
objectives between hidden activations and the broadcast output error,
yielding local \textit{three-factor learning rules}, a long-standing class of
biologically motivated synaptic update rules, in which a presynaptic activity
term and a postsynaptic sensitivity term are gated by a third, neuromodulatory
factor \citep{fremaux2016neuromodulated, gerstner2018eligibility,  kusmierz2017learning,schultz1998predictive}. However, this
foundation is specific to squared error. In classification, the standard objective is cross-entropy, and more generally one wishes to optimize
differentiable losses for which the Euclidean residual is not the natural
error signal. A general theory of broadcast learning therefore requires a
loss-dependent notion of what should be broadcast.\looseness=-1

In this article, we identify that quantity as the \textit{output score}, defined
as the gradient of the loss with respect to the final layer output. For cross
entropy, this score is the probability residual $\boldsymbol{\delta}=\mathbf{p}-\mathbf{y}$. We show that, at
the population cross entropy optimum, this score is conditionally mean zero and
therefore orthogonal to any deterministic function of the input, including hidden
layer activations. More generally, the same principle applies to differentiable
losses whose conditional risks are characterized by a zero score
condition. Thus, the MSE residual used by EBD and the cross entropy residual used
in classification are instances of the more general score based orthogonality principle.

\begin{figure}[tt]
    \centering
    \begin{subfigure}[t]{0.62\linewidth}
        \centering
        \resizebox{\linewidth}{!}{    \begin{tikzpicture}[
        font=\normalsize,
        >=Stealth,
        neuron/.style={circle, draw=black!75, line width=0.8pt, minimum size=17mm, inner sep=0pt},
        presyn/.style={neuron, top color=gray!5, bottom color=gray!30},
        postsyn/.style={neuron, top color=blue!5, bottom color=blue!30},
        scoreblock/.style={rectangle, draw=black!65, line width=0.65pt, minimum width=26mm, minimum height=7.5mm, inner sep=2.5pt, font=\small, fill=orange!14},
        modblock/.style={rectangle, rounded corners=1.2pt, draw=black!65, line width=0.55pt, minimum width=32mm, minimum height=8mm, inner sep=2.5pt, font=\small, fill=orange!14},
        synapse/.style={line width=3.4mm, gray!50, -{Triangle[length=4.6mm,width=5.8mm]}},
        broadcast/.style={line width=1.0pt, draw=black!75},
        sectionlabel/.style={font=\normalsize\itshape, color=black!70}
    ]

    \node[sectionlabel] at (0.0, 2.85) {Pre-synaptic};
    \node[sectionlabel] at (3.0, 2.85) {Post-synaptic};
    \node[sectionlabel] at (7.4, 2.85) {Output score};

    \node[presyn] (pre)  at (0.0, 0.7) {$h_j^{(k-1)}$};
    \node[postsyn] (post) at (3.0, 0.7) {$h_i^{(k)}$};

    \draw[synapse] (pre.east) -- (post.west) coordinate[pos=0.55] (synmid);
    \node[font=\small, color=black!65, above=2.5mm of synmid] {synapse};

    \node[scoreblock] (score1) at (7.4,  1.95) {$\boldsymbol{\delta}$};
    \node[scoreblock] (score2) at (7.4,  0.95) {$\phi_2(\mathbf{x}) \odot \boldsymbol{\delta}$};
    \node[font=\Large, color=black!70] (dots) at (7.4, 0.0) {$\vdots$};
    \node[scoreblock] (scoreM) at (7.4, -0.95) {$\phi_M(\mathbf{x}) \odot \boldsymbol{\delta}$};

    \draw[decorate, decoration={brace, amplitude=5.5pt}, line width=0.55pt, draw=black!70]
        (score1.north east) -- (scoreM.south east);
    \node[font=\small, color=black!75, anchor=west] at
        ($(score1.north east)!0.5!(scoreM.south east) + (0.45, 0)$)
        {$\tilde{\boldsymbol{\delta}} \in \mathbb{R}^{M D_{\mathrm{out}}}$};

    \node[modblock] (proj) at (3.0, -1.85)
        {$q_i^{(k)} = \hat{\tilde{\mathbf{R}}}^{(k)}_i\, \tilde{\boldsymbol{\delta}}$};

    \coordinate (busTop) at (5.95,  1.95);
    \coordinate (busBot) at (5.95, -0.95);

    \draw[broadcast] (score1.west) -- (busTop);
    \draw[broadcast] (score2.west) -- (5.95, 0.95);
    \draw[broadcast] (scoreM.west) -- (busBot);
    \draw[broadcast] (busTop) -- (busBot);

    \draw[broadcast, ->] (5.95, -1.85) -- (proj.east);
    \draw[broadcast] (busBot) -- (5.95, -1.85);

    \coordinate (synBottom) at ($(synmid) + (0, -0.36)$);
    \draw[broadcast, ->] (proj.north) -- ++(0, 0.45) -| (synBottom);

    \node[font=\normalsize] at (3.35, -3.75) {%
        $\Delta W_{ij}^{(k)} \;\propto\;
        \tikz[baseline]\node[fill=gray!15, rounded corners=1pt, inner sep=2.5pt, draw=black!50, line width=0.4pt]{$h_j^{(k-1)}$};
        \;\cdot\;
        \tikz[baseline]\node[fill=orange!18, rounded corners=1pt, inner sep=2.5pt, draw=black!50, line width=0.4pt]{$q_i^{(k)}$};
        \;\cdot\;
        \tikz[baseline]\node[fill=blue!18, rounded corners=1pt, inner sep=2.5pt, draw=black!50, line width=0.4pt]{$g_i^{\prime(k)}(h_i^{(k)})\, f^{\prime(k)}(u_i^{(k)})$};$
    };

    \end{tikzpicture}}
        \caption{Three-factor SBD update with score-vector expansion. At layer $k$, the synaptic update uses presynaptic activity $h_j^{(k-1)}$, postsynaptic sensitivity $g_i^{\prime(k)}(h_i^{(k)})\, f^{\prime(k)}(u_i^{(k)})$, and a broadcast modulation $q_i^{(k)}$ obtained by projecting the expanded score vector $\tilde{\boldsymbol{\delta}} = [\boldsymbol{\delta};\, \phi_2(\mathbf{x}) \odot \boldsymbol{\delta};\, \ldots;\, \phi_M(\mathbf{x}) \odot \boldsymbol{\delta}]$, where $\boldsymbol{\delta} = \nabla_{\mathbf{a}} \mathcal{L}$ is the output score.}
        \label{fig:sbd-three-factor}
    \end{subfigure}\hfill
    \begin{subfigure}[t]{0.34\linewidth}
        \centering
        \includegraphics[width=\linewidth]{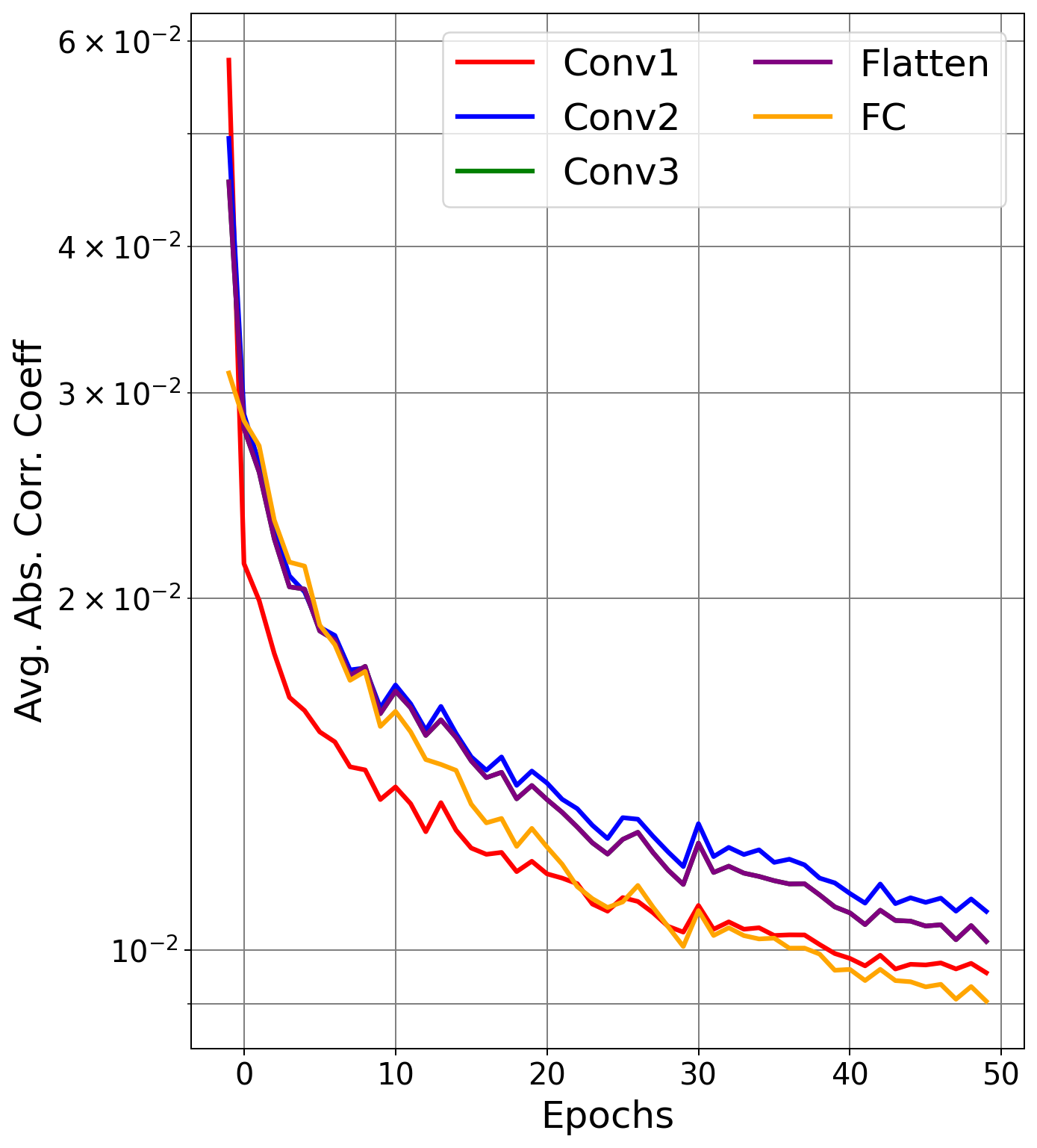}
        \caption{Empirical correlations between the cross entropy output score and hidden-layer activations in a backpropagation-trained CIFAR-10 CNN. These correlations decrease during optimization, consistent with Theorem~\ref{thm:ce_orthogonality}.}
        \label{fig:cnn-cifar10-correlations}
    \end{subfigure}
   \caption{Illustrations of the SBD framework and the empirical score--activation orthogonality.}
    \label{fig:intro-figures}
\end{figure}

Building on this principle, we formulate \textit{Score Broadcast and Decorrelation} (SBD), a broadcast-and-decorrelate framework for general differentiable losses that supplies a unified theoretical grounding for the three-factor learning rule introduced above, with the neuromodulatory factor \textit{derived} as the broadcast loss score rather than postulated; Figure~\ref{fig:sbd-three-factor} depicts the resulting update, including the score-vector expansion of Section~\ref{sec:score_expansion}. The MSE-based EBD update is recovered as one specific instance.\looseness=-1

We also introduce score vector expansion. The  output score has dimension
equal to the number of output coordinates, which can limit the number of
independent decorrelation directions available to wide hidden layers. SBD can
expand the broadcast vector by multiplying the score with deterministic
modulators, such as functions of the predictive distribution. These modulated
scores preserve the conditional mean zero property at the population
optimum while providing a richer set of decorrelation directions.\looseness=-1

We evaluate SBD on CIFAR-10 and Tiny ImageNet using CNN architectures matched to
the original EBD setting. The experiments are intended as controlled proof of
concept tests within broadcast learning rather than as claims of superiority over
exact BP. BP is included as a reference optimizer, while the main comparisons are
to MSE based EBD and other broadcast baselines. In this setting, SBD improves over other broadcast formulations, and score vector expansion provides additional gains. The remainder of the paper is organized as follows.
Section~\ref{sec:problem_statement} states the supervised-learning problem and
Section~\ref{sec:review_ebd} reviews EBD.
Section~\ref{sec:cross_entropy_extension} develops the cross entropy case,
Section~\ref{sec:general_losses_sbd} presents the general SBD framework,
Section~\ref{sec:conditional_mean_zero_score} characterizes when a loss yields
a conditionally mean-zero score, and Section~\ref{sec:score_expansion} introduces score-vector expansion. 
Section~\ref{sec:numerical_experiments} reports experiments, and
Section~\ref{sec:conclusion} concludes.\looseness=-1

\subsection{Related work}

Alternatives to BP span several major families, including \textit{contrastive methods} such as
Equilibrium Propagation \citep{scellier2017equilibrium,scellier2019equivalence}, \textit{target
propagation} \citep{bengio2014auto,lee2015difference}, \textit{forward-only
methods} \citep{hinton2022forward}, \textit{predictive methods}
\citep{rao1999predictive,whittington2017approximation,golkar2022constrained}, \textit{similarity
matching} \citep{qin2021contrastive}, and \textit{feedback alignment}
\citep{lillicrap2016random,akrout2019deep}.

Most closely related to the present work are \textit{error broadcast} methods,
which send global output information directly to hidden layers. Direct
Feedback Alignment routes random feedback projections from the output to each
hidden layer \citep{Nokland:16,bartunov2018assessing,han2019efficient,launay2019principled,launay2020direct,bordelon2022influence}, while Clark et al. \citep{Clark:21} broadcast
a non-negative global error vector to modulate local plasticity. Most
directly, the \textit{Error Broadcast and Decorrelation} (EBD) framework
\citep{erdoganerror} provided a theoretical basis for broadcast learning by
deriving layerwise decorrelation objectives from the MMSE orthogonality
principle. Our work builds on this line by extending that foundation from MSE
to general losses.

\subsection{Contributions}
\begin{itemize}
\item \textbf{A loss-score principle for broadcast learning.}
We identify the output score $\boldsymbol{\delta} = \nabla_{\mathbf{a}} \mathcal{L}(\mathbf{y},\mathbf{a})$ as the natural broadcast
quantity for a general differentiable loss, recovering the EBD MSE residual as
a special case.
\item \textbf{Score orthogonality beyond MSE.}
We prove that, for cross entropy, the population-optimal score satisfies
$\mathbb{E}[\mathbf{p}^\star(X) - \mathbf{Y} \mid X] = \mathbf{0}$ and is therefore orthogonal to any
input-measurable feature. We further characterize a broad family of losses
admitting the same conditional-mean-zero property, including Bregman
divergences \citep{bregman1967relaxation, gneiting2007strictly}, exponential-family negative log-likelihoods, and proper scoring
rules through unconstrained links.
\item \textbf{The SBD rule and a grounding of the three-factor rule under
general losses.}
We propose Score Broadcast and Decorrelation, which uses layerwise
decorrelation objectives between hidden activations and the output score. This
provides a theoretical grounding for the three-factor learning rule of neuroscience under
general losses: the neuromodulatory factor is \emph{derived} as the broadcast
loss score rather than postulated. \looseness=-1
\item \textbf{Score vector expansion and experiments.}
We introduce score vector expansion, enriching the broadcast signal with
deterministic modulators that preserve population orthogonality and increase the
decorrelation directions of the layerwise objective. Experiments  show that SBD improves over broadcast baselines, with
consistent gains from expansion; update-alignment diagnostics show positive
cosine similarity with BP gradients throughout training.\looseness=-1
\end{itemize}

\section{Problem statement}
\label{sec:problem_statement}

We study a supervised learning problem using a multilayer perceptron with $L$ layers, including the output layer. Let $\mathbf{h}^{(k)} \in \mathbb{R}^{N^{(k)}}$ denote the activation vector at layer $k$, where $\mathbf{h}^{(0)} = \mathbf{x}$ is the input and $\mathbf{h}^{(L)}$ is the output. For each layer $k=1,\ldots,L$, the pre-activation and activation are written as
\begin{equation*}
\mathbf{u}^{(k)} = \mathbf{W}^{(k)}\mathbf{h}^{(k-1)} + \mathbf{b}^{(k)},
\qquad
\mathbf{h}^{(k)} = f^{(k)}\!\left(\mathbf{u}^{(k)}\right),
\end{equation*}
where $\mathbf{W}^{(k)}$ and $\mathbf{b}^{(k)}$ are the weights and biases of layer $k$, and $f^{(k)}(\cdot)$ is its activation function.

The goal is to learn the network parameters so that the output $\mathbf{h}^{(L)}$ matches a target $\mathbf{y}$ under a task-specific objective. We therefore keep the learning formulation general and write the criterion as
 $\mathcal{L}\bigl(\mathbf{h}^{(L)}, \mathbf{y}\bigr)$. 
This objective may take different forms depending on the application. For example,  the mean-squared-error loss in regression,  while  cross entropy loss in classification. \looseness=-1

\section{A review of error broadcast and decorrelation method}
\label{sec:review_ebd}

EBD \citep{erdoganerror} is a broadcast-based alternative to backpropagation grounded in the MMSE orthogonality principle: the optimal estimator's error should be orthogonal to suitable nonlinear functions of the input. EBD applies this idea to hidden-layer representations and posits that, at layer $k$,\looseness=-1
\begin{equation*}
\mathbf{R}^{(k)}_{ge} = \mathbb{E}\bigl[g^{(k)}(\mathbf{h}^{(k)})\mathbf{e}^T\bigr] = \mathbf{0},
\end{equation*}
where $g^{(k)}(\mathbf{h}^{(k)})$ is any nonlinear function of hidden-layer activations (typically, $g((\mathbf{h}^{(k)}))=(\mathbf{h}^{(k)})$).
In practice, the correlation is estimated online from minibatches of size $B$ using
\begin{equation*}
\hat{\mathbf{R}}^{(k)}[m] = \lambda \hat{\mathbf{R}}^{(k)}[m-1] + (1-\lambda)B^{-1}\sum_{l=1}^{B}g^{(k)}(\mathbf{h}^{(k)}[mB+l])\mathbf{e}[mB+l]^T,
\end{equation*}
where $m$ is the batch index. EBD minimizes the layerwise-defined decorrelation objective
\begin{equation*}
\mathcal{J}^{(k)}_{\mathrm{EBD}}[m] = \frac{1}{2}\left\|\hat{\mathbf{R}}^{(k)}[m]\right\|_F^2.
\end{equation*}
Thus the MMSE orthogonality condition is turned into a layerwise training criterion that pushes hidden features to decorrelate from the broadcast output error \citep{erdoganerror}.

Differentiating this loss yields a local term and additional terms that would require propagating the output error through deeper layers. EBD  uses the local term to define a projected error signal
\begin{equation*}
\mathbf{q}^{(k)}[n] = \hat{\mathbf{R}}^{(k)}[m]\,\mathbf{e}[n], \hspace{0.2in} n=mB+1, \ldots, (m+1)B,
\end{equation*}
where $n$ is the sample index. The resulting single-sample hidden-layer update, for $B=1$, is
\begin{equation*}
\Delta W_{ij}^{(k)}[n] = \zeta\, g_i^{\prime (k)}\!\left(h_i^{(k)}[n]\right) f^{\prime (k)}\!\left(u_i^{(k)}[n]\right) q_i^{(k)}[n] h_j^{(k-1)}[n],
\end{equation*}
This update has the three-factor structure in neuroscience: pre- and post-synaptic  activity terms, and a modulatory broadcast term. EBD's derivation, however, is tied to the MSE loss only.  The proposed SBD framework generalizes the derivation of the three-factor rule to a wider family of losses, where MSE is a special instance.

\section{Extension to cross entropy loss}
\label{sec:cross_entropy_extension}

We now extend the EBD framework from MSE to multiclass classification under
cross entropy loss. Let $(X,\mathbf{Y})\sim\mathbb{P}_{X,\mathbf{Y}}$ be
jointly distributed input--label pairs taking values in
$\mathcal{X}\times\{\mathbf{e}_1,\ldots,\mathbf{e}_D\}$, where
$\mathbf{e}_d\in\mathbb{R}^D$ is the $d$-th canonical basis vector. As before,
the network produces hidden activations $\mathbf{h}^{(k)}$ for
$k=1,\ldots,L-1$, and the output layer produces logits
\begin{equation}
\label{eq:net_out}
\mathbf{a}=\mathbf{W}^{(L)}\mathbf{h}^{(L-1)}+\mathbf{b}^{(L)},
\end{equation}
with corresponding predicted probabilities $\mathbf{p}=\softmax(\mathbf{a})$, and cross entropy loss
\begin{equation*}
\mathcal{L}_{\mathrm{CE}}(\mathbf{y},\mathbf{a})=-\sum_{d=1}^{D} y_d\log p_d.
\end{equation*}
We refer to the gradient with respect to the logits $\mathbf{a}$ as the
\textit{output score}, which here takes the form
\begin{equation*}
\boldsymbol{\delta}_{\mathrm{CE}}=\nabla_{\mathbf{a}}\mathcal{L}_{\mathrm{CE}}(\mathbf{y},\mathbf{a})=\mathbf{p}-\mathbf{y}.
\end{equation*}
We will show that this probability residual plays the role of the Euclidean
error $\mathbf{e}$ in the MSE version of EBD, satisfying the analogous
orthogonality property at population optimality.

To obtain the cross entropy analog of MMSE orthogonality, define 
\begin{equation*}
\mathbf{q}(\mathbf{x})=\left[\begin{array}{ccc}\mathbb{P}(\mathbf{Y}=\mathbf{e}_1\mid X=\mathbf{x}),\ldots,\mathbb{P}(\mathbf{Y}=\mathbf{e}_D\mid X=\mathbf{x})\end{array}\right]^T,
\end{equation*}
as the true
conditional class-probability vector. For any predictor $\mathbf{p}(\mathbf{x})$, the 
cross entropy risk is
\begin{equation*}
\mathcal{R}_{\mathrm{CE}}(\mathbf{p})=\mathbb{E}\Bigl[-\sum_{d=1}^{D} Y_d\log p_d(X)\Bigr]=\mathbb{E}_{X}\bigl[H(\mathbf{q}(X),\mathbf{p}(X))\bigr],
\end{equation*}
where $H(\mathbf{q},\mathbf{p})$ is the cross entropy between the true
and predicted conditional distributions. Using the standard decomposition
$H(\mathbf{q},\mathbf{p})=H(\mathbf{q})+\mathrm{KL}(\mathbf{q}\|\mathbf{p})$,
the unique population minimizer is
$\mathbf{p}^{\star}(\mathbf{x})=\mathbf{q}(\mathbf{x})$ almost surely, which
yields the following identity replacing the MMSE residual
property.\looseness=-1

\begin{theorem}[Conditional mean-zero property for cross entropy]
\label{thm:ce_conditional_mean_zero}
Let $\mathbf{p}^{\star}(X)$ denote the population minimizer of the
cross entropy risk. Then
\begin{equation*}
\mathbb{E}\bigl[\mathbf{p}^{\star}(X)-\mathbf{Y}\mid X\bigr]=\mathbf{0}.
\end{equation*}
\end{theorem}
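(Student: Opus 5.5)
The argument has two parts: first identify the population minimizer as $\mathbf{p}^{\star}(X)=\mathbf{q}(X)$ almost surely, then compute the conditional expectation of $\mathbf{Y}$ coordinatewise.

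For the first part, I will condition on $X$ and use the tower property to write the risk as $\mathcal{R}_{\mathrm{CE}}(\mathbf{p})=\mathbb{E}_X[H(\mathbf{q}(X),\mathbf{p}(X))]$. This uses the fact that $\mathbf{p}(X)$ is $X$-measurable, so it factors out of the inner conditional expectation of $-\sum_d Y_d\log p_d(X)$. Applying the decomposition $H(\mathbf{q},\mathbf{p})=H(\mathbf{q})+\mathrm{KL}(\mathbf{q}\|\mathbf{p})$ pointwise in $\mathbf{x}$, the first term does not depend on $\mathbf{p}$. By Gibbs' inequality the second term is nonnegative and vanishes only when $\mathbf{p}(\mathbf{x})=\mathbf{q}(\mathbf{x})$.

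Any predictor differing from $\mathbf{q}$ on a set of positive $\mathbb{P}_X$-measure therefore has strictly larger risk, provided $\mathbb{E}_X[H(\mathbf{q}(X))]<\infty$. This condition is automatic, since $H(\mathbf{q})\le\log D$. Hence $\mathbf{p}^{\star}=\mathbf{q}$ almost surely. This is the step needing the most care. One subtlety is that the softmax parametrization never attains probabilities equal to $0$, so I interpret $\mathbf{p}^\star$ as the minimizer over all measurable probability-vector-valued predictors, as in the discussion preceding the statement, rather than over logits. Another subtlety is that the minimizer is only determined up to null sets, which is harmless because conditional expectations are themselves defined only almost surely.

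For the second part, since $\mathbf{p}^{\star}(X)$ is $\sigma(X)$-measurable, it pulls out of the conditional expectation, giving $\mathbb{E}[\mathbf{p}^{\star}(X)-\mathbf{Y}\mid X]=\mathbf{p}^{\star}(X)-\mathbb{E}[\mathbf{Y}\mid X]$. Because $\mathbf{Y}$ takes values in $\{\mathbf{e}_1,\ldots,\mathbf{e}_D\}$, each coordinate $Y_d$ is the indicator $\mathbf{1}\{\mathbf{Y}=\mathbf{e}_d\}$. Its conditional expectation is $\mathbb{P}(\mathbf{Y}=\mathbf{e}_d\mid X)=q_d(X)$, so $\mathbb{E}[\mathbf{Y}\mid X]=\mathbf{q}(X)$. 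Substituting $\mathbf{p}^{\star}(X)=\mathbf{q}(X)$ almost surely gives $\mathbf{0}$ almost surely, which proves the claim.
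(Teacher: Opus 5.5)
Your proposal is correct and follows the same route as the paper: identify $\mathbf{p}^{\star}(X)=\mathbf{q}(X)=\mathbb{E}[\mathbf{Y}\mid X]$ via strict propriety of cross entropy, then pull the $X$-measurable $\mathbf{p}^{\star}(X)$ out of the conditional expectation. The paper's appendix proof simply cites strict propriety, with the $H(\mathbf{q})+\mathrm{KL}(\mathbf{q}\|\mathbf{p})$ decomposition given just before the theorem; your added details are exactly what its two-line argument leaves implicit, namely Gibbs' inequality, the bound $H(\mathbf{q})\le\log D$ that makes the risk strictly larger off a null set, and reading $\mathbf{p}^{\star}$ as a minimizer over measurable probability-valued predictors because softmax cannot reach the boundary of the simplex.
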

\noindent\emph{The proof is deferred to Appendix~\ref{app:proof_ce_conditional_mean_zero}.}

The conditional mean-zero identity becomes unconditional orthogonality with
respect to arbitrary measurable functions of $X$ via the tower property of
conditional expectation.

\begin{lemma}[Tower-property orthogonality]
\label{lem:tower_orthogonality}
Let $\mathbf{U}$ be a random vector and let $g(X)$ be any
measurable function of $X$ such that $\mathbf{U}\,g(X)^{T}$ is
integrable, i.e., $\mathbb{E}\bigl[\|\mathbf{U}\,g(X)^{T}\|\bigr]<\infty$.
If $\mathbb{E}[\mathbf{U}\mid X]=\mathbf{0}$, then
$\mathbb{E}\bigl[\mathbf{U}\,g(X)^{T}\bigr]=\mathbf{0}$.\looseness=-1
\end{lemma}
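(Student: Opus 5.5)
The plan is to prove the identity entrywise, using the tower property together with the ``taking out what is known'' rule for conditional expectations. Fix indices $i,j$ and consider the scalar random variable $Z_{ij}=U_i\,g_j(X)$, the $(i,j)$ entry of $\mathbf{U}\,g(X)^{T}$. By the integrability hypothesis $\mathbb{E}\bigl[\|\mathbf{U}\,g(X)^{T}\|\bigr]<\infty$ and equivalence of norms in finite dimensions, each $Z_{ij}$ is integrable. The tower property then gives $\mathbb{E}[Z_{ij}]=\mathbb{E}\bigl[\mathbb{E}[U_i\,g_j(X)\mid X]\bigr]$.

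The next step is to pull out the $X$-measurable factor $g_j(X)$ and write $\mathbb{E}[U_i\,g_j(X)\mid X]=g_j(X)\,\mathbb{E}[U_i\mid X]$. The hypothesis $\mathbb{E}[\mathbf{U}\mid X]=\mathbf{0}$ makes this zero almost surely, so $\mathbb{E}[Z_{ij}]=0$ for all $i,j$, which is the claim.

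The only delicate point is justifying the pull-out step. The standard version of that rule requires both $U_i g_j(X)$ and $U_i$ to be integrable, and $U_i$ itself is integrable implicitly, since $\mathbb{E}[\mathbf{U}\mid X]$ is assumed to exist. If one prefers not to rely on the textbook form, I would truncate. Set $A_n=\{|g_j(X)|\le n\}$; then $g_j(X)\mathbf{1}_{A_n}$ is bounded and $X$-measurable, and for bounded $X$-measurable multipliers the pull-out rule is immediate from the defining property of conditional expectation. This gives $\mathbb{E}[U_i\,g_j(X)\mathbf{1}_{A_n}]=\mathbb{E}\bigl[g_j(X)\mathbf{1}_{A_n}\mathbb{E}[U_i\mid X]\bigr]=0$. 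Letting $n\to\infty$, dominated convergence with dominating function $|U_i g_j(X)|$, which is integrable by hypothesis, yields $\mathbb{E}[U_i g_j(X)]=0$.

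Finally, collecting all entries gives $\mathbb{E}\bigl[\mathbf{U}\,g(X)^{T}\bigr]=\mathbf{0}$. Combined with Theorem~\ref{thm:ce_conditional_mean_zero} (taking $\mathbf{U}=\mathbf{p}^\star(X)-\mathbf{Y}$), this yields orthogonality of the cross-entropy score to hidden activations.
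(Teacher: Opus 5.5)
Your proof is correct and follows the same route as the paper: condition on $X$ via the tower property, pull the $X$-measurable factor $g(X)$ out of the conditional expectation, and use $\mathbb{E}[\mathbf{U}\mid X]=\mathbf{0}$. The entrywise truncation with $A_n=\{|g_j(X)|\le n\}$ and dominated convergence makes rigorous the pull-out step, which the paper simply asserts from the integrability hypothesis. You also state explicitly the final substitution of zero, which the paper's appendix proof leaves implicit.
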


Applying the lemma with $\mathbf{U}=\mathbf{p}^{\star}(X)-\mathbf{Y}$ yields
the main cross entropy orthogonality theorem.

\begin{theorem}[Cross entropy orthogonality theorem]
\label{thm:ce_orthogonality}
Assume that the predictor class contains the true conditional distribution,
so that $\mathbf{p}^{\star}(X)=\mathbf{q}(X)$. Then for any measurable
function $g(X)$ such that $(\mathbf{p}^{\star}(X)-\mathbf{Y})\,g(X)^{T}$
is integrable, in particular, for any $g$ with $\mathbb{E}\|g(X)\|<\infty$,
since $\mathbf{p}^{\star}(X)-\mathbf{Y}$ is bounded,
\begin{equation*}
\mathbb{E}\bigl[(\mathbf{p}^{\star}(X)-\mathbf{Y})g(X)^T\bigr]=\mathbf{0}.
\end{equation*}
In particular, choosing $g(X)=g^{(k)}(\mathbf{h}^{(k)}(X))$ gives
\begin{equation*}
\mathbb{E}\bigl[(\mathbf{p}^{\star}(X)-\mathbf{Y})g^{(k)}(\mathbf{h}^{(k)}(X))^T\bigr]=\mathbf{0}.
\end{equation*}
\end{theorem}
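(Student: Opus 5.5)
The plan is to reduce the statement to Lemma~\ref{lem:tower_orthogonality} with $\mathbf{U}=\mathbf{p}^{\star}(X)-\mathbf{Y}$. Two hypotheses of that lemma need checking: the conditional mean-zero condition and integrability.

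\emph{Conditional mean zero.} This is exactly Theorem~\ref{thm:ce_conditional_mean_zero}. Spelled out under the assumption $\mathbf{p}^{\star}(X)=\mathbf{q}(X)$: since $\mathbf{p}^{\star}(X)$ is $X$-measurable, $\mathbb{E}[\mathbf{p}^{\star}(X)\mid X]=\mathbf{q}(X)$. Since $\mathbf{Y}$ takes values in $\{\mathbf{e}_1,\ldots,\mathbf{e}_D\}$, we have $\mathbb{E}[Y_d\mid X]=\mathbb{P}(\mathbf{Y}=\mathbf{e}_d\mid X)=q_d(X)$. Subtracting the two gives $\mathbb{E}[\mathbf{U}\mid X]=\mathbf{0}$ almost surely.

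\emph{Integrability.} If integrability of $\mathbf{U}g(X)^T$ is assumed directly, the lemma applies immediately. For the ``in particular'' clause I will bound $\mathbf{U}$. Both $\mathbf{p}^{\star}(X)$ and $\mathbf{Y}$ lie in the probability simplex, so each entry of $\mathbf{U}$ lies in $[-1,1]$ and $\|\mathbf{U}\|\le\sqrt{2}$. Then
\[
\|\mathbf{U}g(X)^T\|_F=\|\mathbf{U}\|\,\|g(X)\|\le\sqrt{2}\,\|g(X)\|,
\]
so $\mathbb{E}\|g(X)\|<\infty$ implies $\mathbb{E}\|\mathbf{U}g(X)^T\|<\infty$. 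Lemma~\ref{lem:tower_orthogonality} then gives $\mathbb{E}[\mathbf{U}g(X)^T]=\mathbf{0}$. If one prefers not to cite the lemma, the same conclusion follows from the tower property, $\mathbb{E}[\mathbf{U}g(X)^T]=\mathbb{E}\bigl[\mathbb{E}[\mathbf{U}\mid X]\,g(X)^T\bigr]$, which is valid because $g(X)$ is $X$-measurable and the product is integrable.

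\emph{Hidden-layer specialization.} With the network parameters fixed, each $\mathbf{h}^{(k)}(X)$ is a composition of affine maps and measurable activations $f^{(j)}$, so it is a measurable function of $X$. Assuming $g^{(k)}$ is measurable (for example continuous), $g(X):=g^{(k)}(\mathbf{h}^{(k)}(X))$ is measurable, and the first identity applies under the same integrability condition.

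The only real subtlety, and so the step I expect to be the main obstacle, is this measurability and integrability bookkeeping. I would state explicitly that $g^{(k)}$ is Borel measurable and that $\mathbb{E}\|g^{(k)}(\mathbf{h}^{(k)}(X))\|<\infty$, which holds for instance when inputs have finite first moment and the activations are Lipschitz. The rest is a direct application of the earlier results.
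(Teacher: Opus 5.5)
Your proposal is correct and follows the paper's proof essentially step for step. Like the paper, you invoke Theorem~\ref{thm:ce_conditional_mean_zero} for $\mathbb{E}[\mathbf{p}^{\star}(X)-\mathbf{Y}\mid X]=\mathbf{0}$, use boundedness of the residual to get integrability from $\mathbb{E}\|g(X)\|<\infty$, apply Lemma~\ref{lem:tower_orthogonality}, and substitute $g=g^{(k)}\circ\mathbf{h}^{(k)}$. Your explicit bound $\|\mathbf{U}\|\le\sqrt{2}$ and your measurability remarks for the hidden-layer case are more careful versions of steps the paper states without detail.
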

\noindent\emph{The proof is deferred to Appendix~\ref{app:proof_ce_orthogonality}.}

Figure~\ref{fig:cnn-cifar10-correlations} illustrates this empirically:
although the theorem is a population statement, the measured score--hidden
correlations in a CNN trained with cross entropy steadily decline during
backpropagation, indicating that optimization drives the network toward the
predicted orthogonality regime.

Using this cross entropy counterpart of orthogonality, we define
the layerwise decorrelation condition
\begin{equation*}
\mathbf{R}_{\mathrm{CE}}^{(k)}=\mathbb{E}\bigl[g^{(k)}(\mathbf{h}^{(k)})(\mathbf{p}-\mathbf{y})^T\bigr]=\mathbf{0},
\end{equation*}
with objective
$\mathcal{J}_{\mathrm{CE}}^{(k)}=\tfrac{1}{2}\|\mathbf{R}_{\mathrm{CE}}^{(k)}\|_F^2$.
For batch optimization, the batch probability and label matrices \looseness=-1
\begin{equation*}
\mathbf{P}[m]=\bigl[\mathbf{p}[mB+1],\ldots,\mathbf{p}[mB+B]\bigr],
\qquad
\mathbf{Y}[m]=\bigl[\mathbf{y}[mB+1],\ldots,\mathbf{y}[mB+B]\bigr]
\end{equation*}
yield the score matrix
$\mathbf{\Delta}_{\mathrm{CE}}[m]=\mathbf{P}[m]-\mathbf{Y}[m]$ and the
autoregressive correlation estimate
\begin{equation*}
\hat{\mathbf{R}}_{\mathrm{CE}}^{(k)}[m]=\lambda \hat{\mathbf{R}}_{\mathrm{CE}}^{(k)}[m-1]+(1-\lambda)B^{-1}\,\mathbf{G}^{(k)}[m]\mathbf{\Delta}_{\mathrm{CE}}[m]^T,
\end{equation*}
where $\mathbf{G}^{(k)}[m]=\left[\begin{array}{ccc}g^{(k)}(\mathbf{h}^{(k)}[mB+1]),\ldots, g^{(k)}(\mathbf{h}^{(k)}[mB+B])\end{array}\right]$, with online decorrelation loss
$\mathcal{J}_{\mathrm{CE}}^{(k)}[m]=\tfrac{1}{2}\|\hat{\mathbf{R}}_{\mathrm{CE}}^{(k)}[m]\|_F^2$.
This is the exact EBD analog with the MSE error replaced by the cross entropy
score.

As shown in Appendix~\ref{app:ce_sbd_derivation}, following the 
derivation as in the EBD ~\citep{erdoganerror}, differentiating
$\mathcal{J}_{\mathrm{CE}}^{(k)}[m]$ yields a local Jacobian term plus terms
that would require propagating the score through deeper layers; we keep the
local no-propagation term as the practical broadcast rule. The
layer-specific projected score is
\begin{equation*}
\mathbf{q}_{\mathrm{CE}}^{(k)}[n]=\hat{\mathbf{R}}_{\mathrm{CE}}^{(k)}[m]\bigl(\mathbf{p}[n]-\mathbf{y}[n]\bigr),  \hspace{0.2in} n=mB+1, \ldots, (m+1)B.
\end{equation*}
Hence, for $B=1$, the update rule can be written as 
\begin{equation*}
\Delta W_{ij}^{(k)}[n]=\zeta\, g_i^{\prime(k)}\!\left(h_i^{(k)}[n]\right) f^{\prime(k)}\!\left(u_i^{(k)}[n]\right) q_{\mathrm{CE},i}^{(k)}[n] h_j^{(k-1)}[n],
\end{equation*}
which preserves the three-factor structure of EBD with the modulatory
broadcast term now derived from the cross entropy score.
More details, including $B>1$ case, are provided in Appendix~\ref{app:ce_sbd_algorithm}.\looseness=-1

\paragraph{Sufficiency under a dense-feature assumption.}
Theorem~\ref{thm:ce_orthogonality} shows that score--feature
orthogonality is \emph{necessary} at the population cross entropy optimum.
The converse, that enforcing this orthogonality drives the predictor
\emph{to} the optimum, requires the feature family used in the decorrelation
to be sufficiently rich. Let $\mathbf{p}(X)$ denote a \emph{candidate}
predictor (not assumed to be the population minimizer), and define
$m_{\mathbf{p}}(X) := \mathbb{E}[\mathbf{p}(X)-\mathbf{Y}\mid X] = \mathbf{p}(X)-\mathbf{q}(X)$,
the gap between the candidate predictor and the true conditional class probabilities.
The decorrelation condition $\mathbb{E}[(\mathbf{p}(X)-\mathbf{Y})\mathbf{z}(X)^T]=0$ is, by the tower property, exactly the orthogonality of $m_{\mathbf{p}}$ against the test feature $\mathbf{z}$ in $L^2(P_X)$. If the hidden-layer features (e.g., first-layer activations in the wide-network limit) have linear span dense in $L^2(P_X)$, then enforcing the decorrelation condition against \emph{all} such features forces $m_{\mathbf{p}}$ to be orthogonal to a dense subspace. Since $m_{\mathbf{p}}\in L^2(P_X)$, this forces $m_{\mathbf{p}}(X)=0$ almost surely, which gives $\mathbf{p}(X)=\mathbf{q}(X)=\mathbf{p}^{\star}(X)$, i.e., the candidate predictor coincides with the population minimizer of Theorem~\ref{thm:ce_conditional_mean_zero}.
The same dense-feature reasoning was used in EBD~\citep[Appendix B.2]{erdoganerror} for MSE; under SBD it operates identically with the cross entropy score. The finite-width algorithm should
be viewed as an approximation to this idealized condition.
\looseness=-1

\section{Extension to more general losses: score broadcast and decorrelation}
\label{sec:general_losses_sbd}

The cross entropy derivation reveals the central idea behind the generalized framework: what matters is not the  MSE residual or the specific probability residual of cross entropy, but the score of the loss with respect to the network output. For a general differentiable loss $\mathcal{L}(\mathbf{y},\mathbf{a})$, we define the score by
\begin{equation*}
\boldsymbol{\delta}(\mathbf{y},\mathbf{a}) := \nabla_{\mathbf{a}}\mathcal{L}(\mathbf{y},\mathbf{a}).
\end{equation*}
This provides a universal notion of broadcast error. In particular,
$\boldsymbol{\delta}_{\mathrm{MSE}}=\mathbf{a}-\mathbf{y}$,
and, 
$\boldsymbol{\delta}_{\mathrm{CE}}=\mathbf{p}-\mathbf{y}$, so both MSE and cross entropy fit naturally into the same score-based formulation.
For arbitrary differentiable losses, the first universal orthogonality statement arises from stationarity of the last-layer weights. Let $\mathbf{a}$  denote network output as in Eq.~(\ref{eq:net_out}), define the population risk
$\mathcal{R}=\mathbb{E}\bigl[\mathcal{L}(\mathbf{Y},\mathbf{a}(\mathbf{X}))\bigr]$.
Then differentiation with respect to the last-layer weight matrix gives
\begin{equation*}
\nabla_{\mathbf{W}^{(L)}}\mathcal{R}=\mathbb{E}\bigl[\boldsymbol{\delta}\,\mathbf{h}^{(L-1)T}\bigr].
\end{equation*}
Consequently, every stationary point of the population risk satisfies the following proposition.

\begin{proposition}[Stationary score-feature orthogonality]
\label{prop:stationary_score_feature_orthogonality}
Assume that $\mathcal{L}(\mathbf{y},\mathbf{a})$ is differentiable in $\mathbf{a}$ and that gradient and expectation may be interchanged in the expression for $\mathcal{R}$. If $\mathbf{W}^{(L)}$ is stationary while all earlier parameters are fixed, then
$\mathbb{E}\bigl[\boldsymbol{\delta}\,\mathbf{h}^{(L-1)T}\bigr]=\mathbf{0}.$
\end{proposition}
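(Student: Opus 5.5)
The plan is a direct chain-rule computation followed by the first-order stationarity condition. With all parameters of layers $1,\ldots,L-1$ held fixed, the hidden representation $\mathbf{h}^{(L-1)}(X)$ is a fixed measurable function of the input. It does not depend on $\mathbf{W}^{(L)}$. Hence, by Eq.~(\ref{eq:net_out}), the output $\mathbf{a}(X)=\mathbf{W}^{(L)}\mathbf{h}^{(L-1)}(X)+\mathbf{b}^{(L)}$ is affine in $\mathbf{W}^{(L)}$, and
\begin{equation*}
\frac{\partial a_d}{\partial W^{(L)}_{ij}}=\mathbb{1}[d=i]\,h^{(L-1)}_j .
\end{equation*}

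First, I compute the per-sample gradient. By the chain rule and differentiability of $\mathcal{L}$ in $\mathbf{a}$,
\begin{equation*}
\frac{\partial}{\partial W^{(L)}_{ij}}\mathcal{L}\bigl(\mathbf{Y},\mathbf{a}(X)\bigr)=\sum_{d}\delta_d\,\frac{\partial a_d}{\partial W^{(L)}_{ij}}=\delta_i\,h^{(L-1)}_j .
\end{equation*}
In matrix form this reads $\nabla_{\mathbf{W}^{(L)}}\mathcal{L}=\boldsymbol{\delta}\,\mathbf{h}^{(L-1)T}$, where $\boldsymbol{\delta}=\nabla_{\mathbf{a}}\mathcal{L}(\mathbf{Y},\mathbf{a}(X))$ is the output score defined above.

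Second, I invoke the interchange hypothesis of the proposition. It gives $\nabla_{\mathbf{W}^{(L)}}\mathcal{R}=\mathbb{E}\bigl[\boldsymbol{\delta}\,\mathbf{h}^{(L-1)T}\bigr]$, which is the displayed identity preceding the statement. In particular, the hypothesis implicitly includes integrability of $\boldsymbol{\delta}\,\mathbf{h}^{(L-1)T}$, so this expectation is well defined.

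Third, I use stationarity. Saying that $\mathbf{W}^{(L)}$ is stationary for $\mathcal{R}$ with the other parameters fixed means the partial gradient vanishes, $\nabla_{\mathbf{W}^{(L)}}\mathcal{R}=\mathbf{0}$. Combining this with the second step yields $\mathbb{E}[\boldsymbol{\delta}\,\mathbf{h}^{(L-1)T}]=\mathbf{0}$. If $\mathbf{b}^{(L)}$ is also stationary, the same argument gives $\mathbb{E}[\boldsymbol{\delta}]=\mathbf{0}$, which is orthogonality to the constant feature. This extra fact is not needed for the claim.

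There is no genuine obstacle; the only delicate point is the interchange of differentiation and expectation. The statement assumes this interchange rather than requiring it to be proved. I would simply note that it holds, for instance by dominated convergence, when $\nabla_{\mathbf{a}}\mathcal{L}\,\mathbf{h}^{(L-1)T}$ is dominated by an integrable function uniformly over a neighbourhood of $\mathbf{W}^{(L)}$. That is the case for cross entropy, since $\mathbf{p}-\mathbf{y}$ is bounded, whenever $\mathbb{E}\|\mathbf{h}^{(L-1)}\|<\infty$. Unlike Theorem~\ref{thm:ce_orthogonality}, this result requires no realizability or conditional-mean-zero assumption; it is a consequence of first-order optimality in the last layer alone.
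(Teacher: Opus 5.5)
Your proposal is correct and follows the paper's proof. The paper likewise uses $\nabla_{\mathbf{W}^{(L)}}\mathcal{R}=\mathbb{E}[\boldsymbol{\delta}\,\mathbf{h}^{(L-1)T}]$ and sets it to zero at stationarity; your chain-rule step $\partial \mathcal{L}/\partial W^{(L)}_{ij}=\delta_i h^{(L-1)}_j$ only makes explicit the computation behind that identity, which the paper states without derivation.
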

\noindent\emph{The proof is deferred to Appendix~\ref{app:proof_stationary_score_feature_orthogonality}.}

The proposition only covers the last hidden layer. Orthogonality with arbitrary measurable input functions requires the conditional mean-zero score property of the following theorem:\looseness=-1

\begin{theorem}[General score orthogonality theorem]
\label{thm:general_score_orthogonality}
Let $\mathbf{a}^{\star}(X)$ be an optimal predictor at the population level for
a differentiable loss $\mathcal{L}(\mathbf{y},\mathbf{a})$, and let
$\boldsymbol{\delta}^{\star}=\nabla_{\mathbf{a}}\mathcal{L}(\mathbf{Y},\mathbf{a}^{\star}(X))$
be the corresponding optimal score. Assume that the optimal score satisfies
the conditional mean-zero property
$\mathbb{E}[\boldsymbol{\delta}^{\star}\mid X]=\mathbf{0}$. Then for every measurable function $g(X)$ such that
$\boldsymbol{\delta}^{\star}\,g(X)^{T}$ is integrable (equivalently,
$\mathbb{E}\bigl[\|\boldsymbol{\delta}^{\star}\,g(X)^{T}\|\bigr]<\infty$),
\begin{equation*}
\mathbb{E}\bigl[\boldsymbol{\delta}^{\star}g(X)^T\bigr]=\mathbf{0}.
\end{equation*}
In particular, taking $g(X)=g^{(k)}(\mathbf{h}^{(k)}(X))$ yields layerwise
score orthogonality.
\end{theorem}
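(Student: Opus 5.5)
The plan is to derive Theorem~\ref{thm:general_score_orthogonality} as a direct instance of Lemma~\ref{lem:tower_orthogonality}, mirroring how Theorem~\ref{thm:ce_orthogonality} followed from Theorem~\ref{thm:ce_conditional_mean_zero}. Set $\mathbf{U}:=\boldsymbol{\delta}^{\star}=\nabla_{\mathbf{a}}\mathcal{L}(\mathbf{Y},\mathbf{a}^{\star}(X))$. By assumption $\mathbb{E}[\mathbf{U}\mid X]=\mathbf{0}$, and the integrability hypothesis of the lemma, $\mathbb{E}\|\mathbf{U}\,g(X)^T\|<\infty$, is exactly the one imposed in the statement. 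The lemma then gives $\mathbb{E}[\boldsymbol{\delta}^{\star}g(X)^T]=\mathbf{0}$.

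For completeness I would also record the one-line argument behind the lemma. Since $g(X)$ is $\sigma(X)$-measurable and $\mathbf{U}g(X)^T$ is integrable, we can pull out what is known:
\[
\mathbb{E}[\mathbf{U}g(X)^T\mid X]=\mathbb{E}[\mathbf{U}\mid X]\,g(X)^T=\mathbf{0}.
\]
Taking expectations and using the tower property then finishes the argument. This is done entrywise, with the scalar identity $\mathbb{E}[U_i g_j(X)\mid X]=g_j(X)\mathbb{E}[U_i\mid X]$ applied to each pair $(i,j)$.

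One subtlety deserves attention. Pulling out $g_j(X)$ normally requires either that $U_i g_j(X)$ be integrable, which is given, or a truncation argument. Integrability of the product alone suffices for the generalized "taking out what is known" rule. Even so, the cleanest route is to apply it on the truncations $g_j(X)\mathbf{1}\{|g_j(X)|\le n\}$ and pass to the limit by dominated convergence, with $|U_i g_j(X)|$ as the dominating function. I expect this measure-theoretic step to be the only real obstacle, and it is already absorbed by Lemma~\ref{lem:tower_orthogonality}.

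The layerwise claim follows by taking $g(X)=g^{(k)}(\mathbf{h}^{(k)}(X))$. This is legitimate because $\mathbf{h}^{(k)}$ is a composition of affine maps and (Borel) activation functions of the input, so it is a measurable function of $X$. Composing with a measurable $g^{(k)}$ preserves measurability, provided the same integrability condition holds.
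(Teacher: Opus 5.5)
Your proposal is correct and follows the paper's proof essentially verbatim. The paper writes $\mathbb{E}[\boldsymbol{\delta}^{\star}g(X)^T]=\mathbb{E}\bigl[\mathbb{E}[\boldsymbol{\delta}^{\star}\mid X]\,g(X)^T\bigr]=\mathbf{0}$ by pulling the $X$-measurable $g(X)$ out of the conditional expectation, which is exactly Lemma~\ref{lem:tower_orthogonality} with $\mathbf{U}=\boldsymbol{\delta}^{\star}$. Your truncation-plus-dominated-convergence remark, with $|U_i g_j(X)|$ as dominating function, makes rigorous the pull-out step that the paper takes for granted; it relies only on the integrability of $\boldsymbol{\delta}^{\star}$ that the hypothesis $\mathbb{E}[\boldsymbol{\delta}^{\star}\mid X]=\mathbf{0}$ already presupposes.
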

\noindent\emph{The proof is deferred to
Appendix~\ref{app:proof_general_score_orthogonality}.}

\noindent\emph{Remark.} The conditional mean-zero hypothesis assumed in this
theorem is broadly satisfied: in
Section~\ref{sec:conditional_mean_zero_score} and
Appendix~\ref{app:sec6_appendix} we show that it holds whenever the
conditional risk has an interior minimizer in an open parameter domain. This
characterization covers the standard differentiable-loss families used in
supervised learning, including Bregman divergences, exponential-family
negative log-likelihoods. MSE and cross-entropy enter the framework as the canonical instances of
the first and third families, respectively, rather than as the loss families
themselves: SBD is the general principle, and these familiar losses are two
cases among many.\looseness=-1

\paragraph{Sufficiency under a dense-feature assumption.}
The dense-feature argument for cross-entropy in Section~\ref{sec:cross_entropy_extension} depends on the loss only through its score
and therefore extends to general losses. Let $\mathbf{a}(X)$ denote a \emph{candidate} predictor (not assumed optimal) and let
$\boldsymbol{\delta}(X) := \nabla_{\mathbf{a}}\mathcal{L}(\mathbf{Y},\mathbf{a}(X))$
be the corresponding candidate score. By the tower property, decorrelating
$\boldsymbol{\delta}(X)$ against a hidden-layer feature family whose linear span is dense in $L^2(P_X)$ forces
$\mathbb{E}[\boldsymbol{\delta}(X)\mid X]=\mathbf{0}$ almost surely. By the conditional-risk characterization of
Section~\ref{sec:conditional_mean_zero_score}, this conditional mean-zero identity is sufficient for $\mathbf{a}(X)$ to be a population-risk optimum, i.e., $\mathbf{a}(X)=\mathbf{a}^{\star}(X)$ and hence $\boldsymbol{\delta}(X)=\boldsymbol{\delta}^{\star}(X)$, whenever the corresponding conditional risk is convex. The cross-entropy case of Section~\ref{sec:cross_entropy_extension} and the MSE
case of EBD~\citep[Appendix B.2]{erdoganerror} are two instances; under SBD
the same argument applies uniformly across the differentiable-loss families
covered by the conditional-risk characterization.\looseness=-1

The resulting generalized score-broadcast objective is
\begin{equation*}
\mathcal{J}_{\mathrm{Score}}^{(k)}=\frac{1}{2}\left\|\mathbb{E}\bigl[g^{(k)}(\mathbf{h}^{(k)})\boldsymbol{\delta}^T\bigr]\right\|_F^2,
\end{equation*}
with minibatch estimator
\begin{equation*}
\hat{\mathbf{R}}_{\mathrm{Score}}^{(k)}[m]=\lambda \hat{\mathbf{R}}_{\mathrm{Score}}^{(k)}[m-1]+(1-\lambda)B^{-1}\,\mathbf{G}^{(k)}[m]\mathbf{\Delta}[m]^T,
\end{equation*}
where $\mathbf{\Delta}[m]$  collects arbitrary score vectors $\boldsymbol{\delta}[mB+1],\ldots,\boldsymbol{\delta}[mB+B]$. The layerwise  broadcast is
\begin{equation*}
\mathbf{q}_{\mathrm{Score}}^{(k)}[n]=\hat{\mathbf{R}}_{\mathrm{Score}}^{(k)}[m]\,\boldsymbol{\delta}[n],  \hspace{0.2in} n=mB+1, \ldots, (m+1)B,
\end{equation*}
and for $B=1$, the weight update keeps the generic local form
\begin{equation*}
\Delta W_{ij}^{(k)}[n]=\zeta\, g_i^{\prime(k)}\!\left(h_i^{(k)}[n]\right) f^{\prime(k)}\!\left(u_i^{(k)}[n]\right) q_{\mathrm{Score},i}^{(k)}[n] h_j^{(k-1)}[n].
\end{equation*}
The update rule above provides a derivation of the three-factor learning
rule, a long-standing model of synaptic plasticity in computational
neuroscience \citep{fremaux2016neuromodulated, kusmierz2017learning}, under a
broad family of differentiable losses. In the neuroscience literature the
neuromodulatory factor is typically postulated; under SBD it is derived
\emph{from the loss itself} as the broadcast loss score. Because the
derivation depends on the loss only through its score, the same three-factor
update covers every loss to which the conditional-mean-zero characterization
applies, including Bregman divergences, exponential-family negative
log-likelihoods, and proper scoring rules through unconstrained links. SBD is
thus a unified derivation of the three-factor rule across the standard
differentiable losses of supervised learning, with MSE-based EBD recovered as
one instance.As in EBD implementation \citep{erdoganerror},  the score-broadcast
objective is augmented with a layer-entropy regularizer
\citep{ozsoy2022self, bozkurt2023correlative} to prevent
hidden-layer activations from collapsing into a low-dimensional subspace,
and a small output-layer $\ell_1$ term for training stabilization. Full
implementation details are in Appendix~\ref{app:cemsen-sbd}.

\section{When does an unconstrained loss parametrization yield a conditionally mean-zero score?}
\label{sec:conditional_mean_zero_score}

Theorem~\ref{thm:general_score_orthogonality} shows that full layerwise
orthogonality follows once the population-optimal score satisfies
$\mathbb{E}[\boldsymbol{\delta}^{\star}\mid X]=\mathbf{0}$. We now ask when
this hypothesis holds. The key step is to forget the network parameterization
and study the loss as a function of the prediction variable alone.

For each input value $x$, define the conditional risk
\begin{equation*}
C_x(\mathbf{a}) = \mathbb{E}\bigl[\mathcal{L}(\mathbf{Y},\mathbf{a}) \mid X = x\bigr],
\end{equation*}
where $\mathbf{a}$ is the output parameter with respect to which the SBD score
$\boldsymbol{\delta} = \nabla_{\mathbf{a}}\mathcal{L}(\mathbf{Y},\mathbf{a})$ is taken,
and assume that $\mathbf{a}$ ranges over an open set
$\mathcal{A}\subseteq\mathbb{R}^m$. Losses with constrained natural prediction
variables (e.g., probability losses on $\mathbf{p}\in\Delta_D$) are brought into
this setting by composing with an unconstrained link
$\mathbf{p}=\psi(\mathbf{a})$ and writing
$\mathcal{L}(\mathbf{Y},\mathbf{a})=\ell(\mathbf{Y},\psi(\mathbf{a}))$. The
following theorem then characterizes when the conditional-mean-zero score
property holds.

\begin{theorem}[Conditional-risk characterization, informal]
\label{thm:cond_risk_informal}
Suppose $C_x$ has an interior minimizer $\mathbf{a}^\star(x)\in\mathcal{A}$ for
almost every $x$, with mild regularity allowing differentiation under the
conditional expectation. Then the optimal score satisfies
\begin{equation*}
\mathbb{E}\bigl[\nabla_{\mathbf{a}}\mathcal{L}(\mathbf{Y},\mathbf{a}^\star(X))\mid X\bigr]
= \mathbf{0}.
\end{equation*}
If, in addition, $C_x(\mathbf{a})$ is convex in $\mathbf{a}$ for almost every $x$, the converse also holds:
any predictor whose score has zero conditional mean almost surely minimizes
$C_x$ pointwise.
\end{theorem}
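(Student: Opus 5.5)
The proof works pointwise in $x$ and reduces both directions to first-order optimality of the conditional risk $C_x$ on the open set $\mathcal{A}$. We make the "mild regularity" precise as follows. For almost every $x$, the map $\mathbf{a}\mapsto\mathcal{L}(\mathbf{y},\mathbf{a})$ is differentiable on $\mathcal{A}$ for $\mathbb{P}_{\mathbf{Y}\mid X=x}$-almost every $\mathbf{y}$. In addition, for each $\mathbf{a}_0\in\mathcal{A}$ there is a neighborhood $U\subseteq\mathcal{A}$ of $\mathbf{a}_0$ and an integrable envelope $M_x(\mathbf{Y})$ with $\sup_{\mathbf{a}\in U}\|\nabla_{\mathbf{a}}\mathcal{L}(\mathbf{Y},\mathbf{a})\|\le M_x(\mathbf{Y})$ and $\mathbb{E}[M_x(\mathbf{Y})\mid X=x]<\infty$. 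Under this assumption we apply the mean value theorem along segments inside $U$ and then dominated convergence to the difference quotients $(\mathcal{L}(\mathbf{Y},\mathbf{a}_0+t\mathbf{v})-\mathcal{L}(\mathbf{Y},\mathbf{a}_0))/t$. This gives differentiability of $C_x$ on $\mathcal{A}$ together with the interchange identity
\begin{equation*}
\nabla C_x(\mathbf{a})=\mathbb{E}\bigl[\nabla_{\mathbf{a}}\mathcal{L}(\mathbf{Y},\mathbf{a})\mid X=x\bigr],\qquad \mathbf{a}\in\mathcal{A}.
\end{equation*}
This identity is the only analytic ingredient; everything else is elementary.

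\emph{Forward direction.} Fix $x$ outside the exceptional null set. Since $\mathbf{a}^\star(x)$ is a minimizer of the differentiable function $C_x$ and lies in the open set $\mathcal{A}$, Fermat's rule gives $\nabla C_x(\mathbf{a}^\star(x))=\mathbf{0}$. By the interchange identity, this is $\mathbb{E}[\nabla_{\mathbf{a}}\mathcal{L}(\mathbf{Y},\mathbf{a}^\star(x))\mid X=x]=\mathbf{0}$. To pass from the pointwise statement to the random-variable statement $\mathbb{E}[\boldsymbol{\delta}^\star\mid X]=\mathbf{0}$ a.s., we need $x\mapsto\mathbf{a}^\star(x)$ to be measurable, which we assume as part of the regularity. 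We can then evaluate the regular conditional distribution $\mathbb{P}_{\mathbf{Y}\mid X=x}$ at $\mathbf{a}=\mathbf{a}^\star(x)$; this is the standard substitution (freezing) property of conditional expectations for jointly measurable integrands. The result combines directly with Theorem~\ref{thm:general_score_orthogonality} to give layerwise orthogonality.

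\emph{Converse.} Let $\mathbf{a}(\cdot)$ be a measurable predictor with values in $\mathcal{A}$ and $\mathbb{E}[\nabla_{\mathbf{a}}\mathcal{L}(\mathbf{Y},\mathbf{a}(X))\mid X]=\mathbf{0}$ a.s. By the same freezing argument, $\nabla C_x(\mathbf{a}(x))=\mathbf{0}$ for almost every $x$. If $C_x$ is convex and differentiable on the open set $\mathcal{A}$, the first-order characterization $C_x(\mathbf{b})\ge C_x(\mathbf{a})+\nabla C_x(\mathbf{a})^T(\mathbf{b}-\mathbf{a})$ shows that any stationary point is a global minimizer over $\mathcal{A}$. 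Hence $\mathbf{a}(x)$ minimizes $C_x$ pointwise. Integrating over $X$ then shows that $\mathbf{a}(\cdot)$ minimizes the population risk among measurable predictors. We should check that the first-order inequality is applied only between points of $\mathcal{A}$. This is automatic if $\mathcal{A}$ is convex; otherwise we read convexity of $C_x$ as convexity on a convex open domain, which the statement implicitly requires.

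The main obstacle is not the optimization step but justifying the gradient--expectation interchange uniformly enough to cover the random evaluation point $\mathbf{a}^\star(X)$. We will handle this with the local domination hypothesis above. Boundedness of scores in the cross-entropy case, and the standard exponential-family moment conditions, show that the hypothesis is met in the paper's main examples.
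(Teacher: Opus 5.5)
Your proposal is correct and follows the same route as the paper's proof of Theorem~\ref{thm:conditional_risk_score}. The forward direction uses Fermat's rule at the interior minimizer together with the gradient--expectation interchange. The converse applies the interchange at the candidate predictor and then the first-order convexity inequality. Your additions only make explicit regularity that the paper leaves implicit in its ``mild regularity'' assumption: the local-domination hypothesis, the measurability and freezing step needed to evaluate at the random point $\mathbf{a}^\star(X)$, and the remark that $\mathcal{A}$ must be convex for the convexity argument.
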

The full statement of the theorem and its proof are deferred to
Appendix~\ref{app:sec6_appendix} (Theorem~\ref{thm:conditional_risk_score}).
Three canonical loss families satisfy the hypothesis of
Theorem~\ref{thm:cond_risk_informal} and are worked out in
Appendix~\ref{app:sec6_appendix}: Bregman divergences (with MSE as the special
case $\phi(\mathbf{u})=\tfrac12\|\mathbf{u}\|_2^2$), proper scoring rules
through unconstrained links (with softmax cross-entropy as the canonical
instance), and exponential-family negative log-likelihoods (where the
conditional-mean-zero property reduces to moment matching). The main takeaway
is that SBD subsumes the standard differentiable losses of supervised learning
under a single principle, with MSE-based EBD recovered as one instance.\looseness=-1

\section{Score vector expansion}
\label{sec:score_expansion}

The output score $\boldsymbol{\delta}(\mathbf{Y},\mathbf{a})$ has dimension $D_{\mathrm{out}}$, which is typically much less than the hidden layer width $d_k$. Hence the layerwise correlation matrix has rank at most $D_{\mathrm{out}}$, and this bottleneck limits the number of independent decorrelation constraints imposed by SBD on the hidden representation. This limitation can be relaxed by enlarging the broadcast signal with deterministic modulations that preserve the conditional mean-zero property. The modulator need only be a function of $X$. Given any  modulator $\boldsymbol{\phi}:\mathcal{X}\to\mathbb{R}^{D_{\mathrm{out}}}$, define the modulated score at the population optimum by \looseness=-1
\begin{equation*}
\boldsymbol{\eta}^{\boldsymbol{\phi},\star}(X)
\;:=\;\boldsymbol{\phi}(X)\odot
\boldsymbol{\delta}\bigl(\mathbf{Y},\mathbf{a}^{\star}(X)\bigr).
\end{equation*}
It has also zero conditional mean as $\boldsymbol{\phi}(X)$ pulls out of the conditional expectation.  We stack $M$ such modulated scores into the expanded broadcast vector
\begin{equation*}
\tilde{\boldsymbol{\delta}}(X)
\;:=\;\bigl[\boldsymbol{\phi}_{1}(X)\odot\boldsymbol{\delta}(\mathbf{Y},\mathbf{a}(X));\;\ldots;\;\boldsymbol{\phi}_{M}(X)\odot\boldsymbol{\delta}(\mathbf{Y},\mathbf{a}(X))\bigr]
\in\mathbb{R}^{MD_{\mathrm{out}}}.
\end{equation*}
This preserves the layerwise orthogonality, providing up to $M$ times more decorrelation directions at linear cost (see Appendix~\ref{app:ce_sbd_complexity}), with output dependent choices  $\boldsymbol{\phi}(X)=\psi(\mathbf{a}(X))$ as a special case. \looseness=-1

The general construction only requires the modulators
$\boldsymbol{\phi}_{\ell}$ to be deterministic functions of $X$, enabling substantial freedom in their design; a natural and
computationally inexpensive choice is to build them from quantities
already produced by the forward pass, such as the predictive
distribution $\mathbf{p}(X)$, which is itself a nonlinear deterministic
function of $X$. Following this guideline, for the experiments in Section~\ref{sec:numerical_experiments} we adopt the rank-$3D$
instance
$\tilde{\boldsymbol{\delta}}
=\bigl[\boldsymbol{\delta};\;\mathbf{p}(X)\odot\boldsymbol{\delta};\;\mathrm{roll}_{5}(\mathbf{p}(X))\odot\boldsymbol{\delta}\bigr]\in\mathbb{R}^{3D}$, 
which augments the raw score with a confidence-weighted residual and a shifted cross-class interaction term. These blocks are heuristic choices among many admissible families; Appendix~\ref{app:score-expansion-ablation} reports an ablation over alternatives, and more principled, data-adaptive constructions are left as future work. The numerical experiments of Section~\ref{sec:numerical_experiments}, most clearly the $3.1$-point gain on Tiny ImageNet, nevertheless confirm that even this empirically motivated instance delivers substantial improvements over unexpanded SBD. Full derivations and the expanded algorithm are deferred to Appendix~\ref{app:score_expansion_appendix}.\looseness=-1

\section{Numerical experiments}
\label{sec:numerical_experiments}

Because our main contribution is conceptual and theoretical, we designed our  experiments as controlled tests of the score broadcast principle within broadcast
learning. BP with cross entropy is included as a reference optimizer; the main
comparisons are to MSE based EBD and DFA based broadcast baselines. For CIFAR-10
\citep{krizhevsky2009cifar}, we use the CNN architecture and training setup of
\citet{erdoganerror}, replacing MSE by cross entropy where appropriate.
Table~\ref{tab:cifar10-cnn-comparison} shows that among broadcast methods,
SBD with score expansion (SBD Exp) performs best: replacing the MSE residual by the cross entropy score
improves EBD from $66.4\%$ to $69.2\%$, and score expansion further improves
performance to $70.0\%$. A $4\times$ width run raises BP to $83.1\%$ and
SBD Exp to $74.5\%$, indicating that both benefit from additional capacity.
Appendix~\ref{app:cemsen-sbd-bp-cosine} also reports that SBD updates
have positive layerwise cosine similarity with exact BP gradients on the same
minibatches, supporting a first order descent aligned component of the local
no propagation approximation. \looseness=-1
\begin{table}[t]
\caption{Test accuracy (\%; mean $\pm$ standard deviation over $5$ independent seed runs) for the CIFAR-10 CNN experiment. BP(MSE), DFA(MSE), MS-GEVB, and EBD(MSE) are taken from \citet{erdoganerror}; BP(CE), DFA(CE), SBD(CE, Ours), and SBD Exp(CE, Ours) are obtained here under the same CNN setup with cross entropy and score broadcast, averaged over $5$ independent seed runs. BP(CE) is the best result and SBD Exp(CE, Ours) is the second best.}
\label{tab:cifar10-cnn-comparison}
\centering
{\setlength{\tabcolsep}{3pt}
\begin{tabular}{cccccccc}
\toprule
BP & DFA & GEVB & EBD & BP & DFA & SBD & SBD Exp \\
(MSE) & (MSE) & (MSE) & (MSE) & (CE) & (CE) & (CE, Ours) & (CE, Ours) \\
\midrule
$75.2 \pm 0.3$ & $58.4 \pm 1.6$ & $61.57$ & $66.4 \pm 0.4$ & $\mathbf{78.5 \pm 0.5}$ & $65.3 \pm 1.2$ & $69.2 \pm 0.7$ & $\underline{70.0 \pm 0.7}$ \\
\bottomrule
\end{tabular}}
\end{table}

\begin{table}[t]
\caption{Test accuracy (\%; mean $\pm$ standard deviation over $5$ independent seed runs) for the Tiny ImageNet CNN experiment. BP(MSE) and EBD(MSE) report MSE-trained baselines under the same Tiny ImageNet setup; BP(CE), DFA(CE), SBD(CE, Ours), and SBD Exp(CE, Ours) are obtained here under the same CNN setup with cross entropy, with SBD using score broadcast. BP(CE) is the best result and SBD Exp(CE, Ours) is the second best.}
\label{tab:tiny-imagenet-cnn-comparison}
\centering
{\setlength{\tabcolsep}{4pt}
\begin{tabular}{cccccc}
\toprule
BP & EBD & BP & DFA & SBD & SBD Exp\\
(MSE) & (MSE) & (CE) & (CE) & (CE, Ours) & (CE, Ours)\\
\midrule
$36.0 \pm 0.7$ & $18.5 \pm 0.7$ & $\mathbf{39.9 \pm 0.3}$ & $17.5 \pm 0.4$ & $28.3 \pm 0.4$ & $\underline{31.4 \pm 0.4}$ \\
\bottomrule
\end{tabular}}
\end{table}

For Tiny ImageNet \citep{le2015tiny}, we use the $200$ class, $6$ layer CNN
setup described in Appendix~\ref{app:tinet}. Table~\ref{tab:tiny-imagenet-cnn-comparison}
shows the same qualitative ordering on a harder benchmark. BP remains the
strongest reference optimizer, while SBD improves substantially over DFA,
and EBD (MSE). Score expansion again gives the best broadcast result. The experiments thus support the loss score as a better broadcast signal than the MSE residual under matched settings, with score vector expansion delivering consistent further gains across both benchmarks; the larger Tiny ImageNet improvement is consistent with its role of enriching the decorrelation directions of the layerwise objective. Beyond cross-entropy loss, Appendix~\ref{app:poisson_demo} additionally reports a Poisson regression
demonstration verifying the conditional-mean-zero property of
Theorem~\ref{thm:conditional_risk_score} on an exponential-family-NLL loss. Appendices~\ref{app:cemsen}--\ref{app:tinet} provide experimental details, and the codes are included in the supplementary material for reproducibility. \looseness=-1

\section{Conclusions, limitations, and future work}
\label{sec:conclusion}

We introduced \textit{Score Broadcast and Decorrelation} (SBD), a broadcast
credit assignment framework for general differentiable losses, in which the
quantity broadcast to hidden layers is the \textit{output score}, the gradient
of the loss with respect to the network output. At the population optimum, the
score is conditionally mean zero and hence orthogonal to any input-measurable
feature, including hidden-layer activations. This principle applies across the
standard differentiable-loss families, e.g. cross-entropy, Bregman divergences,
exponential-family negative log-likelihoods, giving a principled answer to what should be broadcast: a
signal derived from the task loss itself. The framework supplies a unified
theoretical grounding for the three-factor learning rule of computational
neuroscience, with the neuromodulatory factor \emph{derived} as the broadcast
loss score rather than postulated; the MSE-based EBD update is recovered as
one instance. We further introduce \textit{score vector expansion}, which
enriches the broadcast signal with deterministic modulators that preserve the
population orthogonality and provide a richer set of decorrelation directions
for the layerwise objective. Experiments on CIFAR-10 and Tiny ImageNet show
that SBD improves over MSE-based and random-feedback broadcast rules, with
consistent further gains from score vector expansion. The framework thus
offers both a practical broadcast-learning algorithm and a theoretical lens
on biologically observed neuromodulatory plasticity.\looseness=-1

\paragraph{Limitations.}
The strongest results are population statements and rely on assumptions such
as realizability or conditional-mean-zero scores, which need not hold exactly
in finite models and finite data. The practical hidden-layer update also keeps
the local no-propagation approximation of EBD, so its dynamics do not
generally coincide with exact backpropagation. Finally, the experiments are
controlled proof-of-concept studies on standard classification benchmarks
rather than large-scale evaluations.\looseness=-1

\paragraph{Future work.}
This line of work has two long-term aims: developing broadcast-based credit
assignment as a practical alternative to backpropagation, and providing a
theoretical foundation for biologically observed three-factor learning
rules driven by neuromodulatory signals. The contribution of this paper is
the orthogonality framework that supports both. Natural next steps include
scaling SBD to modern architectures, sharper characterization of when
the local no-propagation approximation aligns with exact gradients,
data-adaptive score-vector-expansion modulators with provable rank
guarantees, and connections between the SBD orthogonality principle and
biological measurements of neuromodulatory plasticity. \looseness=-1

\section*{Acknowledgements}

This work was supported by KUIS AI Center Research Award. C.P. was supported by an NSF CAREER Award (IIS-2239780) and a Sloan Research Fellowship. This work has been made possible in part by a gift from the Chan Zuckerberg Initiative Foundation to establish the Kempner Institute for the Study of Natural and Artificial Intelligence.

\newpage
\appendix
\begin{center}
{\Large \fontseries{bx}\selectfont Appendix}
\end{center}

\renewcommand{\contentsname}{Table of contents}
\etocdepthtag.toc{mtappendix}
\etocsettagdepth{mtchapter}{none}
\etocsettagdepth{mtappendix}{subsection}
\tableofcontents
\newpage
\appendix

\section{Deferred proofs for main-text results}
\label{app:deferred_main_proofs}

\subsection{Proof of theorem~\ref{thm:ce_conditional_mean_zero}}
\label{app:proof_ce_conditional_mean_zero}
Since cross entropy is strictly proper, its population minimizer is the true conditional label distribution, i.e., $\mathbf{p}^{\star}(X)=\mathbf{q}(X)=\mathbb{E}[\mathbf{Y}\mid X]$. Therefore,
\begin{equation*}
\mathbb{E}\bigl[\mathbf{p}^{\star}(X)-\mathbf{Y}\mid X\bigr]=\mathbf{p}^{\star}(X)-\mathbb{E}[\mathbf{Y}\mid X]=\mathbf{q}(X)-\mathbf{q}(X)=\mathbf{0}.
\end{equation*}
Hence the residual associated with the optimal cross entropy predictor is conditionally mean zero. \hfill$\square$

\subsection{Proof of lemma~\ref{lem:tower_orthogonality}}
\label{app:proof_tower_orthogonality}
The integrability assumption $\mathbb{E}\bigl[\|\mathbf{U}\,g(X)^{T}\|\bigr]<\infty$
ensures that the (vector-valued) tower property applies. Because $g(X)$ is measurable with respect to $X$, it can be pulled out of the conditional expectation. Hence
\begin{equation*}
\mathbb{E}\bigl[\mathbf{U}g(X)^T\bigr]=\mathbb{E}\Bigl[\mathbb{E}\bigl[\mathbf{U}g(X)^T\mid X\bigr]\Bigr]=\mathbb{E}\Bigl[\mathbb{E}[\mathbf{U}\mid X]g(X)^T\Bigr].
\end{equation*}
This is exactly the mechanism by which conditional mean-zero leads to orthogonality with arbitrary functions of the input. \hfill$\square$

\subsection{Proof of theorem~\ref{thm:ce_orthogonality}}
\label{app:proof_ce_orthogonality}
By Theorem~\ref{thm:ce_conditional_mean_zero}, $\mathbf{U}=\mathbf{p}^{\star}(X)-\mathbf{Y}$ satisfies $\mathbb{E}[\mathbf{U}\mid X]=\mathbf{0}$.
Since $\mathbf{p}^{\star}(X)\in\Delta_{D}$ and $\mathbf{Y}\in\{0,1\}^{D}$,
the residual $\mathbf{U}$ is uniformly bounded, so any measurable $g$ with
$\mathbb{E}\|g(X)\|<\infty$ automatically yields an integrable product
$\mathbf{U}\,g(X)^{T}$. Applying Lemma~\ref{lem:tower_orthogonality} to this choice of $\mathbf{U}$ immediately gives
\begin{equation*}
\mathbb{E}\bigl[(\mathbf{p}^{\star}(X)-\mathbf{Y})g(X)^T\bigr]=\mathbf{0}
\end{equation*}
for every such $g(X)$. Substituting $g(X)=g^{(k)}(\mathbf{h}^{(k)}(X))$ yields the layerwise version used by the decorrelation framework. \hfill$\square$

\subsection{Proof of proposition~\ref{prop:stationary_score_feature_orthogonality}}
\label{app:proof_stationary_score_feature_orthogonality}
At a stationary point, by definition, the gradient of the population risk with respect to the last-layer weights must vanish. Since
\begin{equation*}
\nabla_{\mathbf{W}^{(L)}}\mathcal{R}=\mathbb{E}\bigl[\boldsymbol{\delta}\,\mathbf{h}^{(L-1)T}\bigr],
\end{equation*}
setting the gradient to zero gives the claimed orthogonality relation immediately. \hfill$\square$

\subsection{Proof of theorem~\ref{thm:general_score_orthogonality}}
\label{app:proof_general_score_orthogonality}
The proof is the same tower-property argument used in the cross entropy case. Since $g(X)$ is measurable with respect to $X$,
\begin{equation*}
\mathbb{E}\bigl[\boldsymbol{\delta}^{\star}g(X)^T\bigr]=\mathbb{E}\Bigl[\mathbb{E}\bigl[\boldsymbol{\delta}^{\star}g(X)^T\mid X\bigr]\Bigr]=\mathbb{E}\Bigl[\mathbb{E}[\boldsymbol{\delta}^{\star}\mid X]g(X)^T\Bigr].
\end{equation*}
Using the hypothesis $\mathbb{E}[\boldsymbol{\delta}^{\star}\mid X]=\mathbf{0}$ gives
\begin{equation*}
\mathbb{E}\bigl[\boldsymbol{\delta}^{\star}g(X)^T\bigr]=\mathbb{E}[\mathbf{0}\,g(X)^T]=\mathbf{0}.
\end{equation*}
Therefore the optimal score is orthogonal to every measurable function of the input. \hfill$\square$

\section{When does a loss yield a conditionally mean-zero score?}
\label{app:sec6_appendix}

This appendix section provides the full conditional-risk characterization summarized in
Section~\ref{sec:conditional_mean_zero_score}.

Theorem~\ref{thm:general_score_orthogonality} shows that the generalized SBD framework has its full layerwise orthogonality property whenever the population-optimal score satisfies $\mathbb{E}[\boldsymbol{\delta}^{\star}\mid X]=\mathbf{0}$. The key question is therefore: when does a loss produce this conditional mean-zero score? In order to perform this characterization, we look at the loss structure.

For a fixed input value $x$, define the conditional risk
\begin{equation}
\label{eq:cond_risk}
C_x(\mathbf{a})=\mathbb{E}\bigl[\mathcal{L}(\mathbf{Y},\mathbf{a})\mid X=x\bigr].
\end{equation}
Here $\mathbf{a}$  denotes the output parameter with respect to which the SBD score is taken, as we defined earlier. Throughout the theorem below, $\mathbf{a}$ is assumed to lie in an open subset \(A\subseteq\mathbb{R}^m\). For MSE loss, $\mathbf{a}$ is directly the Euclidean estimate. For logit-based classification or natural-parameter likelihood models,  $\mathbf{a}$ is the corresponding unconstrained output parameter. If a loss is naturally
defined on a constrained prediction variable, such as a probability vector
\(\mathbf{p} \in\Delta_D\), we treat it here by composing it with an unconstrained link \(\mathbf{p}=\psi(\mathbf{a})\) and applying the theorem to
\begin{equation*}
\mathcal{L}(\mathbf{Y},\mathbf{a})=\ell(\mathbf{Y},\psi(\mathbf{a})).
\end{equation*}
The direct constrained formulation is not needed for the SBD results in this
paper and is left outside the main theorem.

We note that the population risk is related to the conditional risk in Eq.~(\ref{eq:cond_risk}) simply through

\begin{equation*}
  \mathcal{R}(\mathbf{a}) = \mathbb{E}_X\Bigl[\underbrace{\mathbb{E}\bigl[\mathcal{L}(\mathbf{Y}, \mathbf{a}(X)) \mid X\bigr]}_{\text{conditional risk } C_x(\mathbf{a}(X))}\Bigr] = \mathbb{E}_X\bigl[C_x(\mathbf{a})\bigr].
\end{equation*}

Based on this relationship, the key insight is that minimizing $C_x(\mathbf{a})$ separately for every $x$ minimizes the outer expectation, i.e. the population risk,  as well. This is due to the fact that you can not do better than minimizing each term in a nonnegative weighted sum. Hence, for minimizing the global population risk $\mathcal{R}(\mathbf{a})$, we can concentrate on minimizing the conditional risk $C_x(\mathbf{a})$ for every $x$, which outlines the basic principle behind the following theorem.

\begin{theorem}[Conditional-risk characterization in an unconstrained parameterization]
\label{thm:conditional_risk_score}
Let $\mathcal{A}\subseteq\mathbb{R}^m$ be open, and let $\mathcal{L}(\mathbf{y},\mathbf{a})$ be differentiable in $\mathbf{a}\in\mathcal{A}$. For almost every $x$, assume that the conditional risk $C_x(\mathbf{a})$ has an interior minimizer $\mathbf{a}^{\star}(x)\in\mathcal{A}$, and that differentiation may be interchanged with conditional expectation. Then the optimal score
\begin{equation*}
\boldsymbol{\delta}^{\star}=\nabla_{\mathbf{a}}\mathcal{L}(\mathbf{Y},\mathbf{a}^{\star}(X))
\end{equation*}
satisfies
\begin{equation*}
\mathbb{E}[\boldsymbol{\delta}^{\star}\mid X]=\mathbf{0}.
\end{equation*}
If, in addition, $C_x(\mathbf{a})$ is convex in $\mathbf{a}$ for almost every $x$, then the converse also holds: any measurable predictor $\tilde{\mathbf{a}}(X)$ satisfying
\begin{equation*}
\mathbb{E}\bigl[\nabla_{\mathbf{a}}\mathcal{L}(\mathbf{Y},\tilde{\mathbf{a}}(X))\mid X\bigr]=\mathbf{0}
\end{equation*}
minimizes $C_x(\mathbf{a})$ for almost every $x$; if $C_x(\mathbf{a})$ is strictly convex, this minimizer is unique almost surely.
\end{theorem}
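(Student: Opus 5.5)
The proof splits into the forward direction (first-order optimality) and the converse (convexity), with a short uniqueness remark at the end.

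\emph{Forward direction.} Fix $x$ outside the exceptional null set. By hypothesis $C_x$ attains its minimum at an interior point $\mathbf{a}^\star(x)$ of the open set $\mathcal{A}$. Since $\mathcal{L}$ is differentiable in $\mathbf{a}$ and differentiation may be interchanged with conditional expectation, $C_x$ is differentiable with
\begin{equation*}
\nabla C_x(\mathbf{a})=\mathbb{E}\bigl[\nabla_{\mathbf{a}}\mathcal{L}(\mathbf{Y},\mathbf{a})\mid X=x\bigr].
\end{equation*}
Fermat's first-order condition at an interior minimizer gives $\nabla C_x(\mathbf{a}^\star(x))=\mathbf{0}$. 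Substituting $\mathbf{a}=\mathbf{a}^\star(x)$ then yields $\mathbb{E}[\boldsymbol{\delta}^\star\mid X=x]=\mathbf{0}$ for almost every $x$, which is the claim.

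The one technical point is passing from the function $\mathbf{a}\mapsto\mathbb{E}[\nabla\mathcal{L}(\mathbf{Y},\mathbf{a})\mid X=x]$ evaluated at $\mathbf{a}^\star(x)$ to the conditional expectation of the random variable $\nabla\mathcal{L}(\mathbf{Y},\mathbf{a}^\star(X))$. I would handle this by working with a regular conditional distribution $P_{\mathbf{Y}\mid X=x}$ and assuming $\mathbf{a}^\star$ is a measurable selection, so that the substitution (freezing) lemma applies. I expect this measurability bookkeeping to be the main obstacle. I would absorb it into the ``mild regularity'' hypothesis rather than prove a measurable-selection theorem.

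\emph{Converse.} Assume $C_x$ is convex for almost every $x$, and let $\tilde{\mathbf{a}}$ be a measurable predictor with $\mathbb{E}[\nabla\mathcal{L}(\mathbf{Y},\tilde{\mathbf{a}}(X))\mid X]=\mathbf{0}$. By the same interchange and freezing argument, $\nabla C_x(\tilde{\mathbf{a}}(x))=\mathbf{0}$ for almost every $x$. For a convex differentiable function on the open set $\mathcal{A}$, the gradient inequality $C_x(\mathbf{b})\ge C_x(\tilde{\mathbf{a}}(x))+\nabla C_x(\tilde{\mathbf{a}}(x))^T(\mathbf{b}-\tilde{\mathbf{a}}(x))$ holds for all $\mathbf{b}\in\mathcal{A}$. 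I would state convexity for $\mathcal{A}$ convex, or else invoke it along segments lying in $\mathcal{A}$. Hence $\tilde{\mathbf{a}}(x)$ minimizes $C_x$. Integrating over $X$ then shows that $\tilde{\mathbf{a}}$ also minimizes the population risk $\mathcal{R}=\mathbb{E}_X[C_X(\mathbf{a}(X))]$, by the pointwise-minimization remark preceding the theorem.

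\emph{Uniqueness.} If $C_x$ is strictly convex, it has at most one minimizer. Therefore $\tilde{\mathbf{a}}(x)=\mathbf{a}^\star(x)$ for almost every $x$, which is the almost-sure uniqueness claimed.
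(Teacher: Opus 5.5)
Your proposal is correct and follows the paper's proof essentially step for step: Fermat's first-order condition combined with the gradient--expectation interchange gives the forward direction, the first-order convexity inequality at $\tilde{\mathbf{a}}(x)$ gives the converse, and strict convexity gives uniqueness. Your extra care about the freezing step (a regular conditional distribution and measurability of $\mathbf{a}^\star$) and about needing $\mathcal{A}$ to be convex for the gradient inequality addresses points the paper leaves implicit, but it does not change the argument.
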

\noindent\emph{Proof.} Forward part of the proof simply relies on the first order condition for optimality, and being able to exchange differentiation and expectation under the presumed regularity assumption. For each fixed input value $x$, differentiation of the conditional risk gives
\begin{equation*}
\nabla_{\mathbf{a}} C_x(\mathbf{a}) = \nabla_{\mathbf{a}}\mathbb{E}\bigl[\mathcal{L}(\mathbf{Y},\mathbf{a})\mid X=x\bigr].
\end{equation*}
If $\mathbf{a}^{\star}(x)$ is an interior minimizer of the differentiable function $C_x$, then its first-order optimality condition is
\begin{equation*}
\nabla_{\mathbf{a}} C_x(\mathbf{a}^{\star}(x))=\mathbf{0}.
\end{equation*}
Exchanging differentiation with expectation yields
\begin{equation*}
\mathbb{E}\bigl[\nabla_{\mathbf{a}}\mathcal{L}(\mathbf{Y},\mathbf{a}^{\star}(x))\mid X=x\bigr]=\mathbf{0}
\end{equation*}
for almost every $x$, which is exactly $\mathbb{E}[\boldsymbol{\delta}^{\star}\mid X]=\mathbf{0}$.

For the reverse direction of the proof: if $C_x$ is differentiable and convex and if there exists a predictor $\tilde{\mathbf{a}}(x)$ which satisifies 
\begin{equation*}
\mathbb{E}\bigl[\nabla_{\mathbf{a}}\mathcal{L}(\mathbf{Y},\tilde{\mathbf{a}}(X))\mid X\bigr]=\mathbf{0},
\end{equation*}
Then we show that $\tilde{\mathbf{a}}(x)$ minimizes $C_x$ for almost every $x$:

First, we translate the score condition into a gradient condition on $C_x$. Applying the gradient--expectation interchange used in the forward direction, but now evaluated at $\tilde{\mathbf{a}}(X)$ rather than at $\mathbf{a}^\star(X)$, gives
\begin{equation*}
\nabla_{\mathbf{a}} C_x(\tilde{\mathbf{a}}(x)) = \mathbb{E}\bigl[\nabla_{\mathbf{a}}\mathcal{L}(\mathbf{Y},\tilde{\mathbf{a}}(x))\mid X=x\bigr].
\end{equation*}
The right-hand side is $\mathbf{0}$ for almost every $x$ by assumption. Therefore
\begin{equation*}
\nabla_{\mathbf{a}} C_x(\tilde{\mathbf{a}}(x)) = \mathbf{0} \quad \text{for almost every } x.
\end{equation*}
As the next step, we use the fact that a zero gradient of a convex function characterizes a global minimizer. Fix any $x$ for which $C_x$ is differentiable, convex, and satisfies $\nabla_{\mathbf{a}} C_x(\tilde{\mathbf{a}}(x)) = \mathbf{0}$. By the first-order characterization of convexity, for every $\mathbf{a} \in \mathcal{A}$,
\begin{equation*}
C_x(\mathbf{a}) \geq C_x(\tilde{\mathbf{a}}(x)) + \nabla_{\mathbf{a}} C_x(\tilde{\mathbf{a}}(x))^T (\mathbf{a} - \tilde{\mathbf{a}}(x)) = C_x(\tilde{\mathbf{a}}(x)) + \mathbf{0}^T (\mathbf{a} - \tilde{\mathbf{a}}(x)) = C_x(\tilde{\mathbf{a}}(x)).
\end{equation*}
Hence $\tilde{\mathbf{a}}(x)$ is a global minimizer of $C_x$. Since this holds for almost every $x$, the predictor $\tilde{\mathbf{a}}$ minimizes $C_x$ pointwise almost surely.

Finally, in case of strict convexity of $C_x$, the inequality above is strict whenever $\mathbf{a} \neq \tilde{\mathbf{a}}(x)$:
\begin{equation*}
C_x(\mathbf{a}) > C_x(\tilde{\mathbf{a}}(x)) \quad \text{for all } \mathbf{a} \in \mathcal{A} \setminus \{\tilde{\mathbf{a}}(x)\}.
\end{equation*}
Therefore $\tilde{\mathbf{a}}(x)$ is the \emph{unique} minimizer of $C_x$, and the converse predictor is unique almost surely. \hfill$\square$

The theorem clarifies why this condition is stronger than ordinary stationarity of a parameterized model $\mathbf{a}_{\theta}(X)$. When optimization is carried out only over the parameter vector $\theta$, one generally obtains the weaker feature-level orthogonality of Proposition~\ref{prop:stationary_score_feature_orthogonality}. The stronger identity $\mathbb{E}[\boldsymbol{\delta}^{\star}\mid X]=\mathbf{0}$ appears when the optimal prediction value is characterized directly from the conditional risk for each input.

We now show that the standard differentiable-loss families used in supervised learning all satisfy the conditional-mean-zero score property of Theorem~\ref{thm:conditional_risk_score}: Bregman divergences (covering MSE), proper scoring rules through unconstrained links (covering cross-entropy), and exponential-family negative log-likelihoods.

\paragraph{Bregman losses.}
Bregman divergences~\citep{bregman1967relaxation} form a broad family of losses generated by a strictly convex potential function $\phi$. Important members include MSE (from $\phi(\mathbf{u})=\tfrac{1}{2}\|\mathbf{u}\|_2^2$), the Itakura--Saito divergence~\citep{itakura1968analysis,fevotte2009nonnegative}, and the generalized KL divergence. We show that for any such loss, the population-optimal predictor is a conditional expectation, and the score is therefore conditionally mean zero.

Let $\Omega\subseteq\mathbb{R}^m$ be open and convex, and let $\phi:\Omega\to\mathbb{R}$ be twice differentiable with positive definite Hessian $\nabla^2\phi$. Let $\tau:\mathcal{Y}\to\Omega$ be a target encoding satisfying $\tau(\mathbf{Y})\in\Omega$ almost surely. The associated Bregman divergence and Bregman loss are
\begin{equation*}
D_{\phi}(\mathbf{u},\mathbf{v}) = \phi(\mathbf{u}) - \phi(\mathbf{v}) - \nabla\phi(\mathbf{v})^T(\mathbf{u}-\mathbf{v}),
\qquad
\mathcal{L}_{\phi}(\mathbf{y},\mathbf{a}) = D_{\phi}(\tau(\mathbf{y}),\mathbf{a}).
\end{equation*}
Differentiating with respect to the second argument gives
\begin{equation*}
\nabla_{\mathbf{a}} \mathcal{L}_{\phi}(\mathbf{y},\mathbf{a})
= \nabla^2\phi(\mathbf{a})\bigl(\mathbf{a} - \tau(\mathbf{y})\bigr).
\end{equation*}
Taking the conditional expectation over $\mathbf{Y}\mid X=x$ and using linearity yields the gradient of the conditional risk:
\begin{equation*}
\nabla_{\mathbf{a}} C_x(\mathbf{a})
= \nabla^2\phi(\mathbf{a})\Bigl(\mathbf{a} - \mathbb{E}[\tau(\mathbf{Y})\mid X = x]\Bigr).
\end{equation*}
Setting this equal to zero and using positive-definiteness of $\nabla^2\phi(\mathbf{a})$ to invert it pointwise, the unique interior minimizer is
\begin{equation*}
\mathbf{a}^{\star}(X) = \mathbb{E}[\tau(\mathbf{Y})\mid X].
\end{equation*}
This is precisely the form required by Theorem~\ref{thm:conditional_risk_score}, so the optimal score $\boldsymbol{\delta}^{\star}=\nabla_{\mathbf{a}}\mathcal{L}_{\phi}(\mathbf{Y},\mathbf{a}^{\star}(X))$ satisfies $\mathbb{E}[\boldsymbol{\delta}^{\star}\mid X]=\mathbf{0}$. The MSE case $\phi(\mathbf{u})=\tfrac{1}{2}\|\mathbf{u}\|_2^2$, $\tau(\mathbf{Y})=\mathbf{Y}$ gives $\mathbf{a}^{\star}(X)=\mathbb{E}[\mathbf{Y}\mid X]$ and $\boldsymbol{\delta}^{\star}=\mathbf{a}^{\star}(X)-\mathbf{Y}$, recovering the MMSE residual property used by EBD as a single Bregman instance.

\paragraph{Proper probability losses through unconstrained links.}
Probabilistic classification uses losses defined on the probability simplex: cross-entropy, Brier score, log loss, and so on. These losses share a strong optimality property: their conditional risks are minimized at the true conditional class-probability vector. The complication is that the simplex $\Delta_D$ is a constrained domain, so Theorem~\ref{thm:conditional_risk_score} does not apply directly. The standard fix is to compose the simplex-domain loss with an unconstrained link function (e.g., softmax over logits), which moves the problem into the open Euclidean setting where Theorem~\ref{thm:conditional_risk_score} applies.

Let $\ell(\mathbf{y},\mathbf{p})$ be a differentiable strictly proper loss defined on the interior of the probability simplex, and let
\begin{equation*}
\mathbf{q}(X) = \mathbb{E}[\mathbf{Y}\mid X]
\end{equation*}
denote the true conditional class-probability vector, assumed to lie in the interior of the simplex almost surely. Strict propriety means that, for each fixed $x$, the simplex-domain conditional risk
\begin{equation*}
\mathbf{p}\mapsto \mathbb{E}\bigl[\ell(\mathbf{Y},\mathbf{p})\mid X = x\bigr]
\end{equation*}
is uniquely minimized at $\mathbf{p}=\mathbf{q}(x)$.

To bring this into the unconstrained setting required by Theorem~\ref{thm:conditional_risk_score}, we reparameterize through a differentiable link $\psi:\mathbb{R}^m\to\mathrm{int}(\Delta_D)$ that maps an unconstrained vector $\mathbf{a}$ to a probability vector $\mathbf{p}=\psi(\mathbf{a})$, and define the unconstrained loss
\begin{equation*}
\mathcal{L}(\mathbf{y},\mathbf{a}) = \ell(\mathbf{y},\psi(\mathbf{a})).
\end{equation*}
The SBD score is then taken with respect to the unconstrained variable:
\begin{equation*}
\boldsymbol{\delta} = \nabla_{\mathbf{a}}\mathcal{L}(\mathbf{y},\mathbf{a}).
\end{equation*}
Suppose that for almost every $x$, the unconstrained conditional risk
\begin{equation*}
C_x(\mathbf{a}) = \mathbb{E}\bigl[\ell(\mathbf{Y},\psi(\mathbf{a}))\mid X = x\bigr]
\end{equation*}
has an interior minimizer $\mathbf{a}^{\star}(x)$ satisfying $\psi(\mathbf{a}^{\star}(x)) = \mathbf{q}(x)$, and that the gradient--expectation interchange holds. Then by Theorem~\ref{thm:conditional_risk_score},
\begin{equation*}
\mathbb{E}\bigl[\nabla_{\mathbf{a}}\ell(\mathbf{Y},\psi(\mathbf{a}^{\star}(X)))\mid X\bigr] = \mathbf{0},
\end{equation*}
which is exactly the conditional-mean-zero score property used by SBD.

\textit{Softmax cross-entropy.} The canonical instance is $\psi=\softmax$, $\ell(\mathbf{y},\mathbf{p})=-\sum_{d=1}^D y_d\log p_d$. A direct computation gives
\begin{equation*}
\mathbf{p} = \softmax(\mathbf{a}),
\qquad
\nabla_{\mathbf{a}}\mathcal{L}_{\mathrm{CE}}(\mathbf{y},\mathbf{a}) = \mathbf{p} - \mathbf{y}.
\end{equation*}
At the population optimum, $\mathbf{p}^{\star}(X) = \psi(\mathbf{a}^{\star}(X)) = \mathbf{q}(X)$, so
\begin{equation*}
\mathbb{E}\bigl[\mathbf{p}^{\star}(X) - \mathbf{Y}\mid X\bigr]
= \mathbf{p}^{\star}(X) - \mathbf{q}(X) = \mathbf{0}.
\end{equation*}
This is the cross-entropy score $\mathbf{p}-\mathbf{y}$ used throughout the main text, recovered as the unconstrained-logit specialization of the proper-loss construction.

\paragraph{Negative log-likelihood losses.}
Maximum-likelihood estimation in a parametric family yields a third source of differentiable losses. We show that whenever the parameter space is open and the conditional risk has an interior minimizer, the score is conditionally mean zero. Furthermore, for exponential families, this conditional-mean-zero property reduces to the classical moment-matching identity.

Let $\{p_{\eta}:\eta\in\mathcal{H}\}$ be a differentiable parametric family of densities with open parameter space $\mathcal{H}\subseteq\mathbb{R}^m$, and define the negative log-likelihood loss
\begin{equation*}
\mathcal{L}_{\mathrm{NLL}}(\mathbf{y},\eta) = -\log p_{\eta}(\mathbf{y}).
\end{equation*}
The corresponding conditional risk is the conditional cross-entropy from the true distribution to $p_{\eta}$,
\begin{equation*}
C_x(\eta) = \mathbb{E}\bigl[-\log p_{\eta}(\mathbf{Y})\mid X = x\bigr].
\end{equation*}
Suppose $C_x$ has an interior minimizer $\eta^{\star}(x)\in\mathcal{H}$ for almost every $x$, and the gradient--expectation interchange holds. Theorem~\ref{thm:conditional_risk_score} then gives
\begin{equation*}
\mathbb{E}\bigl[-\nabla_{\eta}\log p_{\eta^{\star}(X)}(\mathbf{Y})\mid X\bigr] = \mathbf{0}.
\end{equation*}
The conditionally-vanishing object on the left is the Fisher score evaluated at the population-optimal parameter, the maximum-likelihood-estimator score conditioned on $X$.

\textit{Exponential families: moment matching.} For an exponential family in canonical form,
\begin{equation*}
p_{\eta}(\mathbf{y}) = h(\mathbf{y})\exp\bigl(\eta^TT(\mathbf{y}) - A(\eta)\bigr),
\end{equation*}
the score has a particularly clean form: $\nabla_{\eta}\log p_{\eta}(\mathbf{y}) = T(\mathbf{y}) - \nabla A(\eta)$, so
\begin{equation*}
\boldsymbol{\delta}_{\mathrm{NLL}} = \nabla A(\eta) - T(\mathbf{Y}).
\end{equation*}
The conditional-mean-zero score condition $\mathbb{E}[\boldsymbol{\delta}_{\mathrm{NLL}}^{\star}\mid X]=\mathbf{0}$ then reads
\begin{equation*}
\nabla A(\eta^{\star}(X)) = \mathbb{E}[T(\mathbf{Y})\mid X],
\end{equation*}
which is exactly the classical moment-matching condition for maximum-likelihood estimation in exponential families: the model expected sufficient statistic equals the conditional expected sufficient statistic. Important instances include Gaussian NLL (with $T(\mathbf{Y})=\mathbf{Y}$, recovering the conditional-mean predictor), Bernoulli logistic loss, Poisson NLL, and multinomial softmax cross-entropy when written in natural-parameter coordinates.

\subsection{Proof-of-concept demonstration: Poisson NLL regression}
\label{app:poisson_demo}

The main paper's experiments target cross-entropy classification, which together
with the MSE setting of \citet{erdoganerror} covers two of the loss families
admitted by SBD. To illustrate the framework on an exponential-family negative
log-likelihood outside these two, we run a small synthetic Poisson
regression experiment, which covers one of the loss families covered by
Theorem~\ref{thm:conditional_risk_score} and Appendix~\ref{app:sec6_appendix} but
not previously demonstrated empirically. The goal is theoretical verification rather than a
performance comparison: we check that (i) the conditional-mean-zero score
property predicted by Theorem~\ref{thm:conditional_risk_score} holds at
convergence, and (ii) SBD's local update drives the network toward the same
population optimum that BP reaches.

\subsubsection{Data-generating process.}
We sample $X \sim \mathcal{N}(\mathbf{0}, I_8)$ and define the ground-truth
log-rate
\begin{equation*}
f^{\star}(x) = 1.0 + 0.4\sin(x_1)\cos(x_2) + 0.3\,x_3 x_4
- 0.15(x_5^2 - 1) + 0.10(x_6 + x_7) + 0.10\tanh(x_8),
\end{equation*}
as a synthetic regression function in the style of Friedman 1 \citep{friedman1991mars}, so that  $Y \mid X \sim \mathrm{Poisson}\bigl(\exp f^{\star}(X)\bigr)$. We
generate $50{,}000$ training and $10{,}000$ test samples, with $f^{\star}$
clipped to $[-1.5, 3.5]$ to keep counts finite. Because $f^{\star}$ is known
explicitly, we can compute the irreducible Bayes test NLL
$\mathcal{R}^{\star} = \mathbb{E}[\exp f^{\star}(X) - Y\,f^{\star}(X)]$ and the
finite-sample CMZ floor (see below) as ground-truth references.

\subsubsection{Loss and score.}
The Poisson NLL with log-rate parameterization is
$\mathcal{L}_{\mathrm{NLL}}(y, a) = \exp(a) - y\,a$ (up to a $y$-only constant),
giving the SBD score
\begin{equation*}
\boldsymbol{\delta}(y, a) = \nabla_a \mathcal{L}_{\mathrm{NLL}}(y, a)
= \exp(a) - y.
\end{equation*}
This is the canonical exponential-family score $\nabla A(\eta) - T(Y)$ with
$T(Y) = Y$ and $A(\eta) = \exp(\eta)$, matching the structure of
Appendix~\ref{app:sec6_appendix}. The conditional-mean-zero property at the
optimum reduces to the moment-matching identity
$\exp(a^{\star}(X)) = \mathbb{E}[Y \mid X]$.

\subsubsection{Derivation of the oracle Bayes NLL.}
The Poisson NLL on a sample $(X,Y)$ in log-rate parameterization is, up to a
$Y$-only constant,
\begin{equation*}
\mathcal{L}_{\mathrm{NLL}}(Y, a) = \exp(a) - Y\,a.
\end{equation*}
The conditional risk for a fixed input $x$ is
\begin{equation*}
C_x(a) = \mathbb{E}\bigl[\mathcal{L}_{\mathrm{NLL}}(Y,a) \mid X = x\bigr]
= \exp(a) - \mathbb{E}[Y \mid X = x]\,a.
\end{equation*}
For the synthetic data-generating process $Y \mid X \sim \mathrm{Poisson}\bigl(\exp f^{\star}(X)\bigr)$,
the conditional mean is $\mathbb{E}[Y \mid X = x] = \exp f^{\star}(x)$.
Differentiating $C_x$ in $a$ and setting the gradient to zero gives
$\exp(a^{\star}(x)) = \exp f^{\star}(x)$, so the population-optimal predictor
is $a^{\star}(x) = f^{\star}(x)$. Substituting back yields the conditional
risk at the optimum
\begin{equation*}
C_x(a^{\star}(x)) = \exp f^{\star}(x) \cdot \bigl(1 - f^{\star}(x)\bigr).
\end{equation*}
The oracle Bayes NLL is the marginal expectation of this conditional risk:
\begin{equation*}
\mathcal{R}^{\star} = \mathbb{E}_X\!\bigl[C_X(a^{\star}(X))\bigr]
= \mathbb{E}\!\bigl[\exp f^{\star}(X) - Y\,f^{\star}(X)\bigr],
\end{equation*}
where the second equality uses the tower property and
$\mathbb{E}[Y \mid X] = \exp f^{\star}(X)$. Because $f^{\star}$ is known, we
estimate $\mathcal{R}^{\star}$ by Monte Carlo on the test set,
$\widehat{\mathcal{R}}^{\star} = N_{\mathrm{test}}^{-1}
\sum_{n=1}^{N_{\mathrm{test}}}
\bigl[\exp f^{\star}(X_n) - Y_n\,f^{\star}(X_n)\bigr]$, which converges to
$\mathcal{R}^{\star}$ at rate $N_{\mathrm{test}}^{-1/2}$. For the test set
of $10{,}000$ samples used here, this yields $\mathcal{R}^{\star} \approx
-0.560$, the dashed reference line in
Figure~\ref{fig:poisson_excess_nll}'s implicit zero.

\subsubsection{Network and methods.}
A three-layer MLP with widths $8 \to 128 \to 64 \to 1$, ReLU activations, bias
terms, and Kaiming-He initialization. The network output is the scalar
log-rate $a(x)$. Two methods are compared:
\begin{itemize}
\item \textbf{BP:} backpropagation of the exact Poisson NLL gradient.
\item \textbf{SBD:} the rank-$1$ score broadcast of
Section~\ref{sec:general_losses_sbd} (no score-vector expansion). The output
layer uses the standard NLL gradient; the two hidden layers use the local SBD
update with broadcast cross-correlation $\widehat{\mathbf{R}}^{(k)}$ projecting
$\boldsymbol{\delta} = \exp(a) - y$ to each layer.
\end{itemize}
Both methods share Adam with $\eta_0 = 3\!\times\!10^{-3}$, $0.99$
per-epoch decay, weight decay $5\!\times\!10^{-4}$, batch size $64$, $200$
epochs, $\lambda = 0.99999$ for the broadcast exponential moving average  (EMA), and $5$ independent seeds.

\subsubsection{Verification metrics.}
We track three quantities per epoch.
\begin{itemize}
\item \emph{Test Poisson NLL.} Convergence sanity check; the Bayes NLL
$\mathcal{R}^{\star}$ is the irreducible lower bound.
\item \emph{Conditional-mean-zero metric (CMZ).} A binned estimate of
$|\mathbb{E}[\boldsymbol{\delta} \mid X]|$. We bin the test set into
$K = 20$ quantile bins of the predicted log-rate $\hat{a}(X)$, compute
$|\widehat{\mathbb{E}}[\boldsymbol{\delta}\mid \text{bin}_k]|$ within each
bin, and average the absolute values. The \emph{finite-sample CMZ floor} is
the value this metric takes when $\hat{a} = f^{\star}$, attributable purely
to test-set sampling noise; it is the lower bound a converged predictor can
attain on a finite sample. For our test set the floor is approximately
$0.063$.
\item \emph{Score--activation correlations.} Average absolute Pearson
correlation between $\boldsymbol{\delta}$ and each hidden-layer activation,
evaluated post-training. Theorem~\ref{thm:general_score_orthogonality}
predicts these are small at convergence.
\end{itemize}

\paragraph{Results.}
Both methods converge to near-oracle test Poisson NLL: BP attains a final
test NLL of approximately $-0.5491 \pm 0.0008$ (mean $\pm$ std form) and SBD approximately $-0.5452 \pm 0.0023$, against
the Bayes lower bound $\mathcal{R}^{\star} = -0.5595$.
Figure~\ref{fig:poisson_excess_nll} shows the test NLL excess over the oracle
($\mathcal{R} - \mathcal{R}^{\star}$) on a logarithmic scale, the natural
display for measuring convergence to a known lower bound. Both BP and SBD
descend by roughly two and a half orders of magnitude during training and
plateau at small residual gaps, approximately $1\!\times\!10^{-2}$ for BP
and $1.5\!\times\!10^{-2}$ for SBD.

\begin{figure}[t]
\centering
\includegraphics[width=0.82\linewidth]{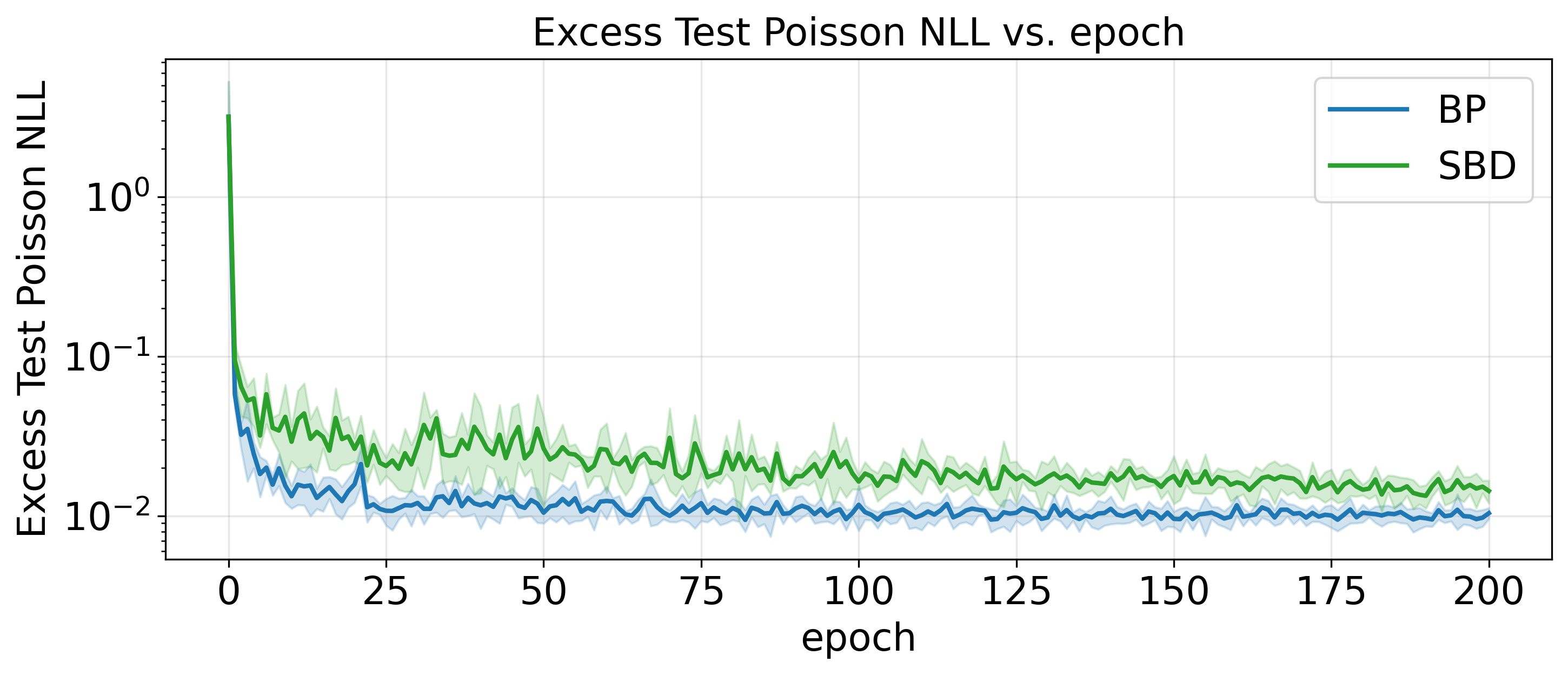}
\caption{Poisson proof-of-concept: test excess negative log-likelihood
relative to the Bayes oracle, $\mathcal{R} - \mathcal{R}^{\star}$, on a
logarithmic scale. Both BP and SBD converge close to the oracle, with SBD
plateauing slightly above BP.}
\label{fig:poisson_excess_nll}
\end{figure}

Figure~\ref{fig:poisson_cmz} shows the binned conditional-mean-zero metric
versus epoch on log scale. Both methods drive the metric down by more than
an order of magnitude, with BP's trajectory settling at $0.0726 \pm 0.0146$ near the
finite-sample CMZ floor of $0.063$ and SBD's settling at $0.0872 \pm 0.0156$ which is slightly above it; the
shapes of the CMZ trajectories closely track the shapes of the excess-NLL
trajectories of Figure~\ref{fig:poisson_excess_nll}, the empirical signature
of Theorem~\ref{thm:conditional_risk_score} predicting that NLL convergence
and CMZ convergence proceed together. Score--activation correlations at the
trained network are small for both methods ($\sim\!0.01$ in both hidden
layers), confirming the orthogonality property of
Theorem~\ref{thm:general_score_orthogonality} at the empirical optimum.

\begin{figure}[t]
\centering
\includegraphics[width=0.82\linewidth]{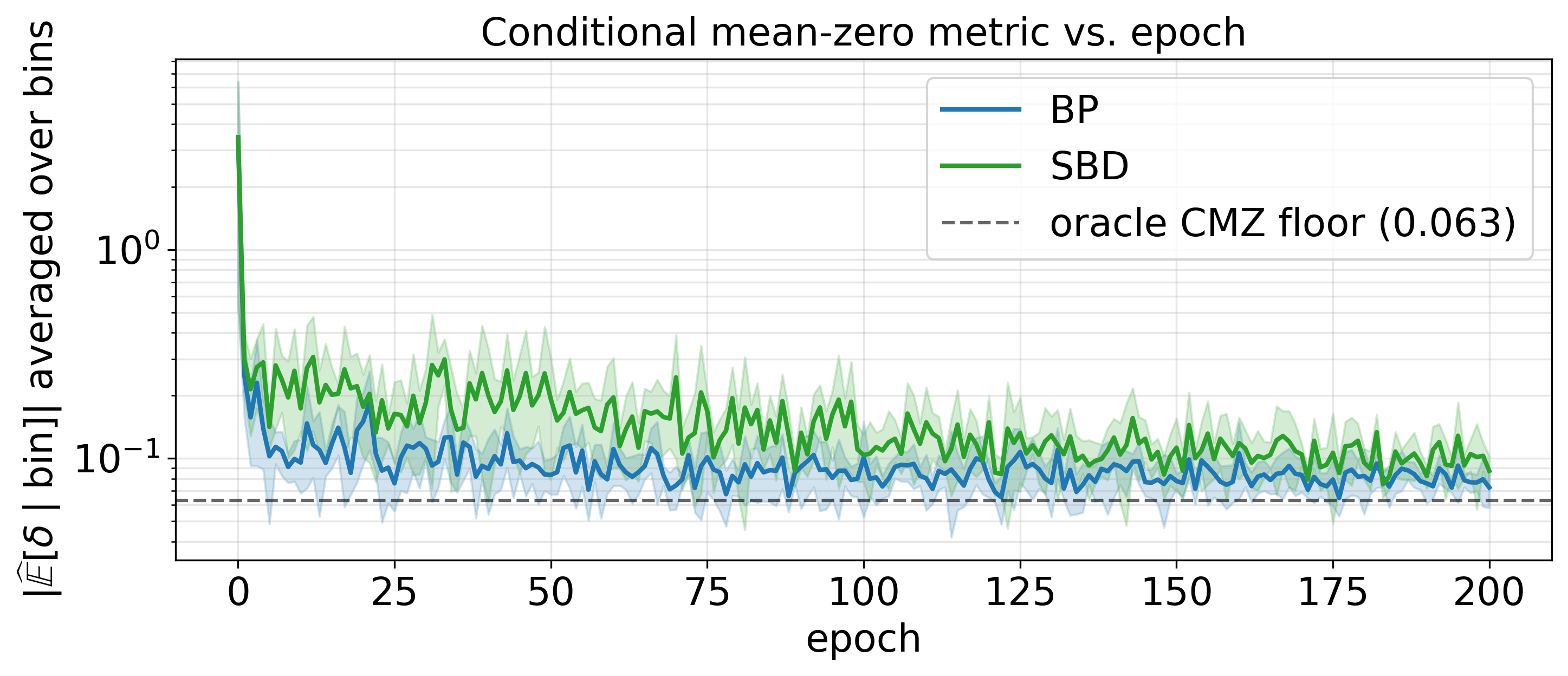}
\caption{Poisson proof-of-concept: binned conditional-mean-zero metric versus
epoch on a logarithmic scale. Both BP and SBD approach the finite-sample CMZ
floor, with SBD settling slightly above it.}
\label{fig:poisson_cmz}
\end{figure}

The main observation is that the conditional-mean-zero score property
predicted by Theorem~\ref{thm:conditional_risk_score} holds empirically for
an exponential family NLL loss, with both an exact-gradient method (BP) and
the SBD local broadcast update driving the metric toward its irreducible
floor in lock-step with the NLL convergence. SBD reaches a population
optimum on this task without computing exact gradients, providing a
proof-of-concept demonstration of the framework on a loss outside the
cross-entropy and MSE families covered in the main paper.

\section{Cross entropy SBD algorithm}
\label{app:ce_sbd_algorithm}

This appendix collects the cross entropy SBD procedure used in
Section~\ref{sec:cross_entropy_extension} into a single self-contained
reference. Table~\ref{tab:ce_sbd_algo} lists the per-minibatch operations
performed at every hidden layer $k$, where $\mathbf{h}^{(k)}\!\in\!\mathbb{R}^{N^{(k)}}$
is the layer-$k$ activation, $\mathbf{a}\!\in\!\mathbb{R}^{D}$ are the
output logits, $\mathbf{p}=\softmax(\mathbf{a})$, and
$\boldsymbol{\delta}_{\mathrm{CE}}=\mathbf{p}-\mathbf{y}$ is the cross entropy
output score. The layerwise correlation estimate
$\widehat{\mathbf{R}}_{\mathrm{CE}}^{(k)}\!\in\!\mathbb{R}^{N^{(k)}\times D}$
tracks the running covariance between the hidden feature map
$g^{(k)}(\mathbf{h}^{(k)})$ and the score, and supplies the broadcast modulator
$\mathbf{q}_{\mathrm{CE}}^{(k)}$ used in the local three-factor weight update.
The output layer is trained with the standard cross entropy gradient.

\begin{table}[h]
\caption{Cross entropy SBD algorithm.}
\label{tab:ce_sbd_algo}
\centering
\begin{tabular}{p{0.10\linewidth}p{0.80\linewidth}}
\toprule
Step & Operation \\
\midrule
1 & Perform a forward pass to compute the hidden activations $\mathbf{h}^{(k)}$, the logits $\mathbf{a}$, the probabilities $\mathbf{p}=\softmax(\mathbf{a})$, and the score $\boldsymbol{\delta}_{\mathrm{CE}}=\mathbf{p}-\mathbf{y}$. \\
2 & Form the batch score matrix $\mathbf{\Delta}_{\mathrm{CE}}[m]=\mathbf{P}[m]-\mathbf{Y}[m]$ and update the layerwise correlation estimate $\widehat{\mathbf{R}}_{\mathrm{CE}}^{(k)}[m]=\lambda\,\widehat{\mathbf{R}}_{\mathrm{CE}}^{(k)}[m-1]+\tfrac{1-\lambda}{B}\,\mathbf{G}^{(k)}[m]\,\mathbf{\Delta}_{\mathrm{CE}}[m]^{T}$. \\
3 & Project the broadcast score to hidden layer $k$ by computing $\mathbf{q}_{\mathrm{CE}}^{(k)}[m]=\widehat{\mathbf{R}}_{\mathrm{CE}}^{(k)}[m]\,(\mathbf{p}[m]-\mathbf{y}[m])$. \\
4 & Update the hidden-layer parameters with the local three-factor rule $\Delta W_{ij}^{(k)}[m]=\zeta\, g_i^{\prime(k)}\!\bigl(h_i^{(k)}[m]\bigr)\, f^{\prime(k)}\!\bigl(u_i^{(k)}[m]\bigr)\, q_{\mathrm{CE},i}^{(k)}[m]\, h_j^{(k-1)}[m]$, and update the biases analogously. \\
5 & Update the output layer with the standard cross entropy gradient. \\
\bottomrule
\end{tabular}
\end{table}

\subsection{Derivation of the SBD update rule}
\label{app:ce_sbd_derivation}

This subsection derives the practical hidden-layer SBD update for cross
entropy by following the same decomposition used in ~\citep{erdoganerror}.

For minibatch $m$, let
\begin{align*}
\mathbf{H}^{(k-1)}[m]
&=\bigl[\mathbf{h}^{(k-1)}[mB+1],\ldots,\mathbf{h}^{(k-1)}[(m+1)B]\bigr],\\
\mathbf{G}^{(k)}[m]
&=\bigl[g^{(k)}(\mathbf{h}^{(k)}[mB+1]),\ldots,g^{(k)}(\mathbf{h}^{(k)}[(m+1)B])\bigr],\\
\mathbf{\Delta}_{\mathrm{CE}}[m]
&=\bigl[\boldsymbol{\delta}_{\mathrm{CE}}[mB+1],\ldots,\boldsymbol{\delta}_{\mathrm{CE}}[(m+1)B]\bigr],
\end{align*}
where $\boldsymbol{\delta}_{\mathrm{CE}}[n]=\mathbf{p}[n]-\mathbf{y}[n]$.
Treating $\hat{\mathbf{R}}_{\mathrm{CE}}^{(k)}[m-1]$ as a fixed state when
differentiating the current minibatch objective, define
\begin{equation*}
\alpha:=\frac{1-\lambda}{B},
\qquad
\hat{\mathbf{R}}_{\mathrm{CE}}^{(k)}[m]=\lambda\hat{\mathbf{R}}_{\mathrm{CE}}^{(k)}[m-1]+\alpha\,\mathbf{G}^{(k)}[m]\mathbf{\Delta}_{\mathrm{CE}}[m]^T,
\end{equation*}
and recall that
\begin{equation*}
\mathcal{J}_{\mathrm{CE}}^{(k)}[m]=\frac{1}{2}\left\|\hat{\mathbf{R}}_{\mathrm{CE}}^{(k)}[m]\right\|_F^2
=\frac{1}{2}\,\mathrm{Tr}\!\left(\hat{\mathbf{R}}_{\mathrm{CE}}^{(k)}[m]^T\hat{\mathbf{R}}_{\mathrm{CE}}^{(k)}[m]\right).
\end{equation*}
Hence, for a hidden-layer weight $W_{ij}^{(k)}$,
\begin{align*}
\frac{\partial \mathcal{J}_{\mathrm{CE}}^{(k)}[m]}{\partial W_{ij}^{(k)}}
&=\mathrm{Tr}\!\left(\hat{\mathbf{R}}_{\mathrm{CE}}^{(k)}[m]^T\frac{\partial \hat{\mathbf{R}}_{\mathrm{CE}}^{(k)}[m]}{\partial W_{ij}^{(k)}}\right)\\
&=\alpha\,\mathrm{Tr}\!\left(\hat{\mathbf{R}}_{\mathrm{CE}}^{(k)}[m]^T\frac{\partial \mathbf{G}^{(k)}[m]}{\partial W_{ij}^{(k)}}\mathbf{\Delta}_{\mathrm{CE}}[m]^T\right)
+\alpha\,\mathrm{Tr}\!\left(\hat{\mathbf{R}}_{\mathrm{CE}}^{(k)}[m]^T\mathbf{G}^{(k)}[m]\frac{\partial \mathbf{\Delta}_{\mathrm{CE}}[m]^T}{\partial W_{ij}^{(k)}}\right).
\end{align*}
The first term depends only on the local Jacobian of layer $k$, whereas the
second term depends on $\partial\mathbf{\Delta}_{\mathrm{CE}}[m]/\partial W_{ij}^{(k)}$
and therefore on how the logits change through all deeper layers. Indeed,
because the label vector is constant with respect to the hidden-layer
parameters,
\begin{equation*}
\frac{\partial \boldsymbol{\delta}_{\mathrm{CE}}[n]}{\partial W_{ij}^{(k)}}
=\frac{\partial \mathbf{p}[n]}{\partial W_{ij}^{(k)}}
=\left(\operatorname{diag}(\mathbf{p}[n])-\mathbf{p}[n]\mathbf{p}[n]^T\right)
\frac{\partial \mathbf{a}[n]}{\partial W_{ij}^{(k)}},
\end{equation*}
and $\partial \mathbf{a}[n]/\partial W_{ij}^{(k)}$ expands into the chain of
Jacobians through layers $k+1,\ldots,L$. This is the cross entropy analog of
the backpropagation-like term identified in EBD, so the practical SBD rule
retains only the local no-propagation contribution.

Define the layerwise projected-score matrix
\begin{equation*}
\mathbf{Q}_{\mathrm{CE}}^{(k)}[m]
:=\hat{\mathbf{R}}_{\mathrm{CE}}^{(k)}[m]\mathbf{\Delta}_{\mathrm{CE}}[m]
=\bigl[\mathbf{q}_{\mathrm{CE}}^{(k)}[mB+1],\ldots,\mathbf{q}_{\mathrm{CE}}^{(k)}[(m+1)B]\bigr].
\end{equation*}
Using cyclicity of the trace, the retained local term becomes
\begin{equation*}
\left.\frac{\partial \mathcal{J}_{\mathrm{CE}}^{(k)}[m]}{\partial W_{ij}^{(k)}}\right|_{\mathrm{local}}
=\alpha\,\mathrm{Tr}\!\left(\mathbf{Q}_{\mathrm{CE}}^{(k)}[m]\frac{\partial \mathbf{G}^{(k)}[m]^T}{\partial W_{ij}^{(k)}}\right).
\end{equation*}
Let $\mathbf{e}_i\in\mathbb{R}^{N^{(k)}}$ denote the $i$th canonical basis
vector. Then
\begin{equation*}
\frac{\partial \mathbf{G}^{(k)}[m]}{\partial W_{ij}^{(k)}}
=\mathbf{e}_i
\begin{bmatrix}
g_i^{\prime(k)}\!\left(h_i^{(k)}[mB+1]\right)f^{\prime(k)}\!\left(u_i^{(k)}[mB+1]\right)h_j^{(k-1)}[mB+1]\\
\vdots\\
g_i^{\prime(k)}\!\left(h_i^{(k)}[(m+1)B]\right)f^{\prime(k)}\!\left(u_i^{(k)}[(m+1)B]\right)h_j^{(k-1)}[(m+1)B]
\end{bmatrix}^{T}.
\end{equation*}
Substituting this derivative yields
\begin{equation*}
\left.\frac{\partial \mathcal{J}_{\mathrm{CE}}^{(k)}[m]}{\partial W_{ij}^{(k)}}\right|_{\mathrm{local}}
=\alpha\sum_{n=mB+1}^{(m+1)B}
g_i^{\prime(k)}\!\left(h_i^{(k)}[n]\right)
f^{\prime(k)}\!\left(u_i^{(k)}[n]\right)
q_{\mathrm{CE},i}^{(k)}[n]
h_j^{(k-1)}[n].
\end{equation*}
An identical calculation for the bias gives
\begin{equation*}
\left.\frac{\partial \mathcal{J}_{\mathrm{CE}}^{(k)}[m]}{\partial b_i^{(k)}}\right|_{\mathrm{local}}
=\alpha\sum_{n=mB+1}^{(m+1)B}
g_i^{\prime(k)}\!\left(h_i^{(k)}[n]\right)
f^{\prime(k)}\!\left(u_i^{(k)}[n]\right)
q_{\mathrm{CE},i}^{(k)}[n].
\end{equation*}

To write the update compactly, define the derivative matrices
\begin{align*}
\mathbf{G}_{d}^{(k)}[m]
&=\bigl[\mathbf{g}^{\prime(k)}(\mathbf{h}^{(k)}[mB+1]),\ldots,\mathbf{g}^{\prime(k)}(\mathbf{h}^{(k)}[(m+1)B])\bigr],\\
\mathbf{F}_{d}^{(k)}[m]
&=\bigl[\mathbf{f}^{\prime(k)}(\mathbf{u}^{(k)}[mB+1]),\ldots,\mathbf{f}^{\prime(k)}(\mathbf{u}^{(k)}[(m+1)B])\bigr],
\end{align*}
where the derivatives are applied elementwise, and set
\begin{equation*}
\mathbf{Z}_{\mathrm{CE}}^{(k)}[m]
=\mathbf{G}_{d}^{(k)}[m]\odot \mathbf{F}_{d}^{(k)}[m]\odot \mathbf{Q}_{\mathrm{CE}}^{(k)}[m].
\end{equation*}
Then the retained minibatch-local gradients are
\begin{equation*}
\left.\nabla_{\mathbf{W}^{(k)}}\mathcal{J}_{\mathrm{CE}}^{(k)}[m]\right|_{\mathrm{local}}
=\alpha\,\mathbf{Z}_{\mathrm{CE}}^{(k)}[m]\mathbf{H}^{(k-1)}[m]^T,
\qquad
\left.\nabla_{\mathbf{b}^{(k)}}\mathcal{J}_{\mathrm{CE}}^{(k)}[m]\right|_{\mathrm{local}}
=\alpha\,\mathbf{Z}_{\mathrm{CE}}^{(k)}[m]\mathbf{1}_{B\times 1}.
\end{equation*}
Using the same sign convention as in the main text and absorbing the scalar
$\alpha$ into the update coefficient $\zeta$, the single-sample hidden-layer (with $B=1$)
rule becomes
\begin{equation*}
\Delta W_{ij}^{(k)}[n]
=\zeta\,g_i^{\prime(k)}\!\left(h_i^{(k)}[n]\right)
f^{\prime(k)}\!\left(u_i^{(k)}[n]\right)
q_{\mathrm{CE},i}^{(k)}[n]
h_j^{(k-1)}[n],
\end{equation*}
with the bias update obtained by omitting the presynaptic factor
$h_j^{(k-1)}[n]$. This is exactly the cross entropy SBD rule reported in
Section~\ref{sec:cross_entropy_extension}.

For general $B$, the expression for the update can be written as 

\begin{equation*}
\Delta W_{ij}^{(k)}[m]
=\frac{\zeta}{B}\sum_{n=mB+1}^{mB+B}\,g_i^{\prime(k)}\!\left(h_i^{(k)}[n]\right)
f^{\prime(k)}\!\left(u_i^{(k)}[n]\right)
q_{\mathrm{CE},i}^{(k)}[n]
h_j^{(k-1)}[n].
\end{equation*}

\subsection{Extension to convolutional layers}
\label{app:cnn_sbd_derivation}

The MLP derivation of the previous subsection extends naturally to
convolutional architectures. Following \citet[Appendix~D]{erdoganerror},
which gives the corresponding extension for the EBD update under MSE, we
adapt the construction to the SBD setting in which the broadcast signal is
the loss score $\boldsymbol{\delta}$ rather than the MSE residual.

Let $\mathbf{H}^{(k)} \in \mathbb{R}^{P^{(k)} \times M^{(k)} \times N^{(k)}}$
denote the output of the $k$th convolutional layer, where $P^{(k)}$ is the
number of channels and $M^{(k)} \times N^{(k)}$ is the spatial map size.
For channel $p$, the filter tensor is
$\mathbf{W}^{(k,p)} \in \mathbb{R}^{P^{(k-1)} \times \Omega^{(k)} \times \Omega^{(k)}}$
with bias $\mathbf{b}^{(k,p)} \in \mathbb{R}$, and the convolutional layer
output is
\begin{equation*}
\mathbf{H}^{(k,p)} = f\!\bigl(\mathcal{U}^{(k,p)}\bigr),
\qquad
\mathcal{U}^{(k,p)} = \bigl(\mathbf{H}^{(k-1)} \ast \mathbf{W}^{(k,p)}\bigr) + \mathbf{b}^{(k,p)},
\end{equation*}
where $\ast$ is the (channel-wise and spatial) convolution operation.

\paragraph{Score-broadcast objective.}
Let $\boldsymbol{\delta} \in \mathbb{R}^{D_{\mathrm{out}}}$ denote the SBD
output score (or the expanded score $\widetilde{\boldsymbol{\delta}}$ when
score-vector expansion is used; see Appendix~\ref{app:score_expansion_appendix}).
For each channel $p$ and each spatial location $(r,s)$, the score-feature
cross-correlation is
\begin{equation*}
\mathbf{R}^{(k,p)}_{\mathbf{g}\boldsymbol{\delta}}[q,r,s]
= \mathbb{E}\bigl[\,\mathbf{g}^{(k)}\!\bigl(\mathbf{H}^{(k,p)}[r,s]\bigr)\,\delta_q\,\bigr],
\qquad q = 1,\ldots,D_{\mathrm{out}}.
\end{equation*}
The conditional-mean-zero score property
$\mathbb{E}[\boldsymbol{\delta}^\star \mid X] = \mathbf{0}$ established in
Theorem~\ref{thm:general_score_orthogonality} implies that this
cross-correlation vanishes at the population optimum. The layerwise
score-broadcast objective at minibatch $m$ is
\begin{equation*}
\mathcal{J}^{(k)}_{\mathrm{Score}}[m]
= \frac{1}{2} \sum_{q=1}^{D_{\mathrm{out}}}
\bigl\|\widehat{\mathbf{R}}^{(k,p)}_{\mathbf{g}\boldsymbol{\delta}}[m,q,:,:]\bigr\|_F^2,
\end{equation*}
with $\widehat{\mathbf{R}}^{(k,p)}_{\mathbf{g}\boldsymbol{\delta}}$
maintained by the same exponentially-weighted moving-average estimator as
in the MLP case.

\paragraph{Weight update.}
Differentiating $\mathcal{J}^{(k)}_{\mathrm{Score}}[m]$ with respect to the
weight tensor $\mathbf{W}^{(k,p)}_{h,i,j}$ (input channel $h$, kernel
spatial indices $(i,j)$) and applying the chain rule through the convolution
yields, after the same manipulations as in
\citet[Appendix~D.1]{erdoganerror} with the MSE residual replaced by the
loss score, the update
\begin{equation*}
\frac{\partial \mathcal{J}^{(k)}_{\mathrm{Score}}[m]}{\partial \mathbf{W}^{(k,p)}_{h}}
= \zeta \sum_{n=mB+1}^{(m+1)B}
\bigl(\boldsymbol{\phi}[n,p,:,:] \ast \mathbf{H}^{(k-1,h)}[n,:,:]\bigr),
\end{equation*}
where the per-sample postsynaptic modulator is
\begin{equation*}
\boldsymbol{\phi}[n,p,:,:]
= \sum_{q=1}^{D_{\mathrm{out}}} \delta_q[n] \,\cdot\,
\Bigl(
\widehat{\mathbf{R}}^{(k,p)}_{\mathbf{g}\boldsymbol{\delta}}[m,q,:,:]
\odot \mathbf{g}^{(k)}\!\bigl(\mathbf{H}^{(k,p)}[n,:,:]\bigr)
\odot f'\!\bigl(\mathcal{U}^{(k,p)}[n,:,:]\bigr)
\Bigr).
\end{equation*}
The bias update is
\begin{equation*}
\frac{\partial \mathcal{J}^{(k)}_{\mathrm{Score}}[m]}{\partial \mathbf{b}^{(k,p)}}
= \zeta \sum_{n=mB+1}^{(m+1)B} \sum_{r,s} \boldsymbol{\phi}[n,p,r,s].
\end{equation*}

\paragraph{Three-factor structure.}
The convolutional update preserves the three-factor structure of the MLP
case: $\mathbf{H}^{(k-1,h)}$ is the presynaptic activity (spatially
shifted), $\mathbf{g}^{(k)}\!\odot\,f'$ is the postsynaptic sensitivity, and
the score-projected term
$\sum_q \delta_q \cdot \widehat{\mathbf{R}}^{(k,p)}_{\mathbf{g}\boldsymbol{\delta}}$
is the broadcast modulator. The same three-factor structure depicted in
Figure~\ref{fig:sbd-three-factor} therefore applies to convolutional layers,
with the convolution operator replacing the matrix product of the MLP case.

\paragraph{Auxiliary regularization.}
For convolutional layers, the layer-entropy regularizer of
Section~\ref{app:cemsen-sbd} becomes computationally cumbersome because the
activation tensor has multiple spatial dimensions. Following
\citet[Appendix~D.2]{erdoganerror}, we replace it by the weight-entropy
objective
\begin{equation*}
J^{(k)}_E\bigl(\mathbf{W}^{(k)}\bigr)
= \tfrac{1}{2}\log\det\!\bigl(\mathbf{R}_{\overline{\mathbf{W}}^{(k)}} + \eta I\bigr),
\end{equation*}
where $\overline{\mathbf{W}}^{(k)}$ is the unraveling of the filter tensor
into a $P^{(k)} \times P^{(k-1)}\Omega^{(k)2}$ matrix. The CIFAR-10 and
Tiny ImageNet experiments in this paper set $c^{(k)}_{\mathrm{cov}} = 0$
on convolutional layers (see Tables~\ref{tab:cemsen-hparams}
and~\ref{tab:tinet-hparams}), so this regularizer is not active in the
reported runs; we include it here for completeness and to match the
\citet{erdoganerror} formulation.

\subsection{Update complexity}
\label{app:ce_sbd_complexity}

We summarize the per-minibatch update cost of the rule in
Table~\ref{tab:ce_sbd_algo} and compare it to backpropagation (BP) and Direct
Feedback Alignment (DFA) under a fully-connected layer model. Let $L$ denote
the number of layers, $N^{(k)}$ the width of layer $k$ (so $N^{(0)}$ is the
input dimension and $N^{(L)}=D$ the number of classes), $B$ the minibatch
size, and $D$ the output-score dimension; for the score-expansion variant of
Section~\ref{sec:score_expansion} the score dimension is replaced by
$D_{\varepsilon}=MD$.

The forward pass is shared by all three methods and costs
$\mathcal{O}\!\bigl(B\sum_{k=1}^{L}N^{(k)}N^{(k-1)}\bigr)$. The
local outer-product weight update at layer $k$, of cost
$\mathcal{O}\!\bigl(B\,N^{(k)}N^{(k-1)}\bigr)$, is also identical in form
across the three methods; the methods differ only in how the postsynaptic
modulator at layer $k$ is produced. We focus on this modulator cost.

\paragraph{BP.}
The error signal $\boldsymbol{\delta}^{(k)}$ at hidden layer $k$ is obtained
by propagating $\boldsymbol{\delta}^{(k+1)}$ through $\mathbf{W}^{(k+1)\,T}$,
costing $\mathcal{O}\!\bigl(B\,N^{(k)}N^{(k+1)}\bigr)$ per layer. The total
backward-pass cost is therefore
$\mathcal{O}\!\bigl(B\sum_{k=1}^{L-1}N^{(k)}N^{(k+1)}\bigr)$, comparable to
the forward pass, and this step requires weight transport.

\paragraph{DFA.}
The output error of dimension $D$ is broadcast to layer $k$ through a
\emph{fixed random} feedback matrix $\mathbf{B}^{(k)}\!\in\!\mathbb{R}^{N^{(k)}\times D}$,
costing $\mathcal{O}\!\bigl(B\,N^{(k)}D\bigr)$ per layer with $\mathcal{O}\!\bigl(N^{(k)}D\bigr)$
storage. No backward chain is computed.

\paragraph{SBD.}
The output score, also of dimension $D$, is broadcast through the
\emph{adaptive} correlation matrix $\widehat{\mathbf{R}}_{\mathrm{CE}}^{(k)}\!\in\!\mathbb{R}^{N^{(k)}\times D}$.
Two operations contribute at layer $k$: the EMA correlation update
(Step~2), which is an outer product of cost
$\mathcal{O}\!\bigl(B\,N^{(k)}D\bigr)$, and the projection
$\mathbf{q}_{\mathrm{CE}}^{(k)}=\widehat{\mathbf{R}}_{\mathrm{CE}}^{(k)}\boldsymbol{\delta}_{\mathrm{CE}}$
(Step~3), of cost $\mathcal{O}\!\bigl(B\,N^{(k)}D\bigr)$. Storage of
$\widehat{\mathbf{R}}_{\mathrm{CE}}^{(k)}$ is $\mathcal{O}\!\bigl(N^{(k)}D\bigr)$,
matching DFA.

\paragraph{Comparison.}
Per layer, the modulator cost is
$\mathcal{O}\!\bigl(B\,N^{(k)}N^{(k+1)}\bigr)$ for BP versus
$\mathcal{O}\!\bigl(B\,N^{(k)}D\bigr)$ for DFA and SBD. Because $D$ is
typically much smaller than the hidden widths $N^{(k+1)}$, both broadcast
methods enjoy the same asymptotic advantage over BP, with the gap widening as
the network is made wider. SBD and DFA share identical asymptotic
complexity, but SBD performs roughly twice the work of DFA per layer (one
EMA outer product plus one projection, versus DFA's single projection through
a fixed matrix); in exchange, $\widehat{\mathbf{R}}_{\mathrm{CE}}^{(k)}$ is
\emph{learned}. Replacing $D$ by $D_{\varepsilon}=MD$ in the SBD
expressions above gives the cost of the score-expansion variant of
Section~\ref{sec:score_expansion}, which scales linearly in the expansion
factor $M$ while preserving the orthogonality framework.

\paragraph{Convolutional layers.}
The same analysis extends to the convolutional layers of
Appendix~\ref{app:cnn_sbd_derivation}. Let $P^{(k)}$ be the number of
output channels at layer $k$, $\Omega^{(k)}$ the kernel size, and
$M^{(k)} \times N^{(k)}$ the spatial output size; the corresponding fully
connected width is $N^{(k)}_{\mathrm{FC}} = P^{(k)} M^{(k)} N^{(k)}$. The
forward and backward operations are convolutions, costing
$\mathcal{O}\!\bigl(B\,P^{(k)}P^{(k-1)}(\Omega^{(k)})^2 M^{(k)} N^{(k)}\bigr)$
per layer for both BP and the SBD broadcast convolution. The SBD
modulator at convolutional layer $k$ requires maintaining a per-channel
cross-correlation tensor
$\widehat{\mathbf{R}}^{(k,p)}_{\mathbf{g}\boldsymbol{\delta}} \in \mathbb{R}^{D \times M^{(k)} \times N^{(k)}}$,
of total size $\mathcal{O}\!\bigl(P^{(k)} D M^{(k)} N^{(k)}\bigr)$, with the
EMA update and the projection each costing
$\mathcal{O}\!\bigl(B\,P^{(k)} D M^{(k)} N^{(k)}\bigr)$. The asymptotic
ordering BP $>$ SBD $\approx$ DFA in modulator cost therefore carries over
to convolutional layers, with $D$ replacing $N^{(k+1)}$ in the broadcast
case and the convolution structure shared by all three methods. Replacing
$D$ by $D_{\varepsilon} = MD$ for the score-expansion variant scales the
SBD modulator cost linearly in the expansion factor, exactly as in the
fully connected case.
\paragraph{Crossover regime for score expansion.}
The score-expansion broadcast remains cheaper than BP at layer $k$
exactly when $MD < N^{(k+1)}$ (FC layers) or
$MD < P^{(k-1)}\Omega^{(k)2}$ (conv layers). For the architectures
used in this paper this margin is large: $MD = 30$ on CIFAR-10 and
$MD = 600$ on Tiny ImageNet, against hidden widths in the range
$10^{3}$--$10^{5}$, so SBD with expansion is one-to-three orders of
magnitude cheaper per layer than BP. The inequality  tightens as $D$ grows.

\section{Score vector expansion: general theory and cross entropy example}
\label{app:score_expansion_appendix}

This appendix provides the full details behind the main-text summary in
Section~\ref{sec:score_expansion}. We first develop the score vector expansion
for a general differentiable loss whose population-optimal score has
conditional mean zero, and then specialize the construction to cross entropy.

\subsection{General formulation}
\label{app:score_expansion_general}

Let $\mathcal{L}(\mathbf{y},\mathbf{a})$ be a differentiable loss with output
score $\boldsymbol{\delta}(\mathbf{y},\mathbf{a})=\nabla_{\mathbf{a}}\mathcal{L}(\mathbf{y},\mathbf{a})\in\mathbb{R}^{D_{\mathrm{out}}}$,
and assume that the population-optimal predictor $\mathbf{a}^{\star}(X)$
satisfies the conditional mean-zero property
\begin{equation*}
\mathbb{E}\bigl[\boldsymbol{\delta}(\mathbf{Y},\mathbf{a}^{\star}(X))\,\big|\,X\bigr]=\mathbf{0},
\end{equation*}
so that the orthogonality machinery of
Theorem~\ref{thm:general_score_orthogonality} applies. The layerwise
correlation matrix
\begin{equation*}
\mathbf{R}^{(k)}=\mathbb{E}\bigl[g^{(k)}(\mathbf{h}^{(k)})\boldsymbol{\delta}^T\bigr]\in\mathbb{R}^{d_k\times D_{\mathrm{out}}}
\end{equation*}
has column rank at most $D_{\mathrm{out}}$, regardless of the hidden-layer
width $d_k$. When $d_k$ greatly exceeds $D_{\mathrm{out}}$, this caps the
number of independent decorrelation constraints and motivates enlarging the
broadcast signal.

\paragraph{Modulated scores preserve conditional mean-zero.}
Let $\boldsymbol{\phi}:\mathcal{X}\to\mathbb{R}^{D_{\mathrm{out}}}$ be any measurable vector-valued modulator of the input, and define the \emph{modulated score} at the population optimum by
\begin{equation*}
\boldsymbol{\eta}^{\boldsymbol{\phi},\star}(X)
\;:=\;
\boldsymbol{\phi}(X)\odot
\boldsymbol{\delta}\bigl(\mathbf{Y},\mathbf{a}^{\star}(X)\bigr),
\end{equation*}
where $\odot$ denotes the Hadamard (elementwise) product. Because $\boldsymbol{\phi}(X)$ is measurable with respect to $X$, it can be pulled out of the conditional expectation. Hence
\begin{equation*}
\mathbb{E}\!\left[\boldsymbol{\eta}^{\boldsymbol{\phi},\star}(X)\,\big|\,X\right]
=\boldsymbol{\phi}(X)\odot
\mathbb{E}\!\left[\boldsymbol{\delta}(\mathbf{Y},\mathbf{a}^{\star}(X))\,\big|\,X\right]
=\mathbf{0}.
\end{equation*}
Applying the tower-property orthogonality lemma
(Lemma~\ref{lem:tower_orthogonality}), we conclude
\begin{equation*}
\mathbb{E}\!\left[\boldsymbol{\eta}^{\boldsymbol{\phi},\star}(X)g(X)^T\right]=\mathbf{0}
\quad\text{for every measurable }g(X)\text{ with finite expectation.}
\end{equation*}
Thus any Hadamard modulator that is $X$-measurable produces a broadcast signal for which the layerwise orthogonality condition holds at the population optimum. Modulators of the form $\boldsymbol{\phi}(X)=\psi(\mathbf{a}^{\star}(X))$ are only one special case; the construction also permits modulators built from the raw input, intermediate activations, or any other deterministic function of $X$.

\paragraph{Rank expansion by stacking modulators.}
Given $M$ $X$-measurable modulators $\boldsymbol{\phi}_{1},\ldots,\boldsymbol{\phi}_{M}$, define the \emph{expanded score} during training by
\begin{equation*}
\tilde{\boldsymbol{\delta}}(X)\;:=\;
\begin{bmatrix}
\boldsymbol{\phi}_{1}(X)\odot\boldsymbol{\delta}(\mathbf{Y},\mathbf{a}(X))\\[2pt]
\boldsymbol{\phi}_{2}(X)\odot\boldsymbol{\delta}(\mathbf{Y},\mathbf{a}(X))\\[-2pt]
\vdots\\[-1pt]
\boldsymbol{\phi}_{M}(X)\odot\boldsymbol{\delta}(\mathbf{Y},\mathbf{a}(X))
\end{bmatrix}\in\mathbb{R}^{MD_{\mathrm{out}}},
\end{equation*}
obtained by stacking the $M$ modulated scores. Replacing $\mathbf{a}(X)$ by the population optimum $\mathbf{a}^{\star}(X)$, the argument above shows that each block of length $D_{\mathrm{out}}$ satisfies the conditional mean-zero property, and hence
\begin{equation}
\mathbb{E}\!\left[\tilde{\boldsymbol{\delta}}^{\star}(X)\,\big|\,X\right]=\mathbf{0},
\qquad
\mathbb{E}\!\left[\tilde{\boldsymbol{\delta}}^{\star}(X)g^{(k)}(\mathbf{h}^{(k)}(X))^T\right]=\mathbf{0}.
\label{eq:expanded_orthogonality_general}
\end{equation}
The expanded correlation matrix
\begin{equation*}
\tilde{\mathbf{R}}^{(k)}\;=\;
\mathbb{E}\!\left[g^{(k)}(\mathbf{h}^{(k)})\,\tilde{\boldsymbol{\delta}}^{T}\right]\in\mathbb{R}^{d_{k}\times MD_{\mathrm{out}}}
\end{equation*}
now has column rank at most $MD_{\mathrm{out}}$, providing the layerwise
decorrelation objective
$\tilde{\mathcal{J}}^{(k)}=\tfrac12\|\tilde{\mathbf{R}}^{(k)}\|_{F}^{2}$ with
up to $M$ times more independent directions along which to suppress
dependence between the hidden representation and the broadcast score, at the
cost of a linear increase in parameters.

\paragraph{Algorithmic formulation.}
The expanded-score procedure is identical to plain SBD except that the
$D_{\mathrm{out}}$-dimensional score and its batch matrix are replaced by
their $MD_{\mathrm{out}}$-dimensional counterparts:
\begin{align*}
\tilde{\boldsymbol{\delta}}[n]
&=\bigl[\boldsymbol{\phi}_{1}(\mathbf{x}[n])\odot\boldsymbol{\delta}(\mathbf{y}[n],\mathbf{a}(\mathbf{x}[n]));\;\ldots;\;\boldsymbol{\phi}_{M}(\mathbf{x}[n])\odot\boldsymbol{\delta}(\mathbf{y}[n],\mathbf{a}(\mathbf{x}[n]))\bigr],\\
\tilde{\mathbf{\Delta}}[m]
&=\begin{bmatrix}\tilde{\boldsymbol{\delta}}[mB+1] & \cdots & \tilde{\boldsymbol{\delta}}[mB+B]\end{bmatrix}\in\mathbb{R}^{MD_{\mathrm{out}}\times B},\\
\tilde{\hat{\mathbf{R}}}^{(k)}[m]
&=\lambda\,\tilde{\hat{\mathbf{R}}}^{(k)}[m{-}1]+\frac{1-\lambda}{B}\,\mathbf{G}^{(k)}[m]\,\tilde{\mathbf{\Delta}}[m]^{T},\\
\tilde{\mathbf{q}}^{(k)}[mB+n]
&=\tilde{\hat{\mathbf{R}}}^{(k)}[m]\,\tilde{\boldsymbol{\delta}}[mB+n], \text{ for } n=1, \ldots, B,
\end{align*}
\begin{equation}
\label{eq:batch_update_W}
\Delta W_{ij}^{(k)}[m]=\frac{\zeta}{B}\sum_{l=mB+1}^{mB+B}\,g_{i}^{\prime(k)}\!\bigl(h_{i}^{(k)}[l]\bigr)\,f^{\prime(k)}\!\bigl(u_{i}^{(k)}[l]\bigr)\,\tilde{q}_{i}^{(k)}[l]\,h_{j}^{(k-1)}[l].
\end{equation}
The output layer continues to use the raw $D_{\mathrm{out}}$-dimensional
score for its standard gradient update.

\subsection{Cross entropy instantiation}
\label{app:ce_score_expansion}

For cross entropy classification, the output dimension is $D_{\mathrm{out}}=D$
(the number of classes) and the output score is
$\boldsymbol{\delta}_{\mathrm{CE}}=\mathbf{p}-\mathbf{y}$, with
$\mathbf{p}^{\star}(X)=\mathbf{q}(X)$ as the population minimizer. The
conditional mean-zero property required by the general construction is
established in Theorem~\ref{thm:ce_conditional_mean_zero}, so the modulated and
expanded score machinery above applies to any deterministic $X$-measurable
modulator. In our experiments, the modulators are built from the predictive
distribution $\mathbf{p}(X)$, which is itself a deterministic function of $X$
for fixed network parameters.

For classification networks in which the hidden width $d_k$ greatly exceeds
$D$; for example, CIFAR-10 with $D=10$ and $d_k\in\{10^{3},10^{5}\}$ in the
fully-connected and convolutional layers, respectively, the rank bottleneck
in $\mathbf{R}_{\mathrm{CE}}^{(k)}\in\mathbb{R}^{d_k\times D}$ is severe, and
score expansion provides a natural remedy.

\paragraph{A concrete rank-$3D$ instance.}
For the experiments in Section~\ref{sec:numerical_experiments} we adopt a
minimal $M=3$ instance built from two natural $X$-measurable modulators
obtained from $\mathbf{p}(X)$, together with the identity. Let
$\mathbf{1}_{D}\in\mathbb{R}^{D}$ denote the all-ones vector, and let
$\mathrm{roll}_{s}(\mathbf{v})$ denote the cyclic shift of $\mathbf{v}$ by $s$
positions, that is
$[\mathrm{roll}_{s}(\mathbf{v})]_{d}=v_{((d-s-1)\bmod D)+1}$. The three
modulators are
\begin{equation*}
\boldsymbol{\phi}_{1}(X)=\mathbf{1}_{D},\qquad
\boldsymbol{\phi}_{2}(X)=\mathbf{p}(X),\qquad
\boldsymbol{\phi}_{3}(X)=\mathrm{roll}_{s}(\mathbf{p}(X)),
\end{equation*}
where the shift amount $s\in\{1,\ldots,D-1\}$ is chosen to maximize linear
independence between blocks two and three; for $D=10$ we use $s=5$. The
resulting expanded score is
\begin{equation*}
\tilde{\boldsymbol{\delta}}_{\mathrm{CE}}\;=\;
\begin{bmatrix}
\boldsymbol{\delta}_{\mathrm{CE}}\\[2pt]
\mathbf{p}\odot\boldsymbol{\delta}_{\mathrm{CE}}\\[2pt]
\mathrm{roll}_{s}(\mathbf{p})\odot\boldsymbol{\delta}_{\mathrm{CE}}
\end{bmatrix}\in\mathbb{R}^{3D}.
\end{equation*}
Block one reproduces the raw cross entropy score. Block two, a
\emph{confidence-weighted residual}, amplifies errors at classes that the
network currently predicts with high probability, and therefore carries
information about the output distribution that is linearly independent of
block one whenever $\mathbf{p}$ is not proportional to $\mathbf{1}_{D}$.
Block three induces a cross-class coupling between residuals and
class-index-shifted probabilities, producing a direction that is generically
linearly independent of the first two blocks as soon as $\mathbf{p}$ is not
invariant under the chosen shift. All three modulators are deterministic
functions of $X$ through the forward pass, so
\eqref{eq:expanded_orthogonality_general} applies and the expanded
decorrelation condition remains an exact orthogonality relation at the
population optimum.

\paragraph{Algorithm.}
Specializing the general expanded-SBD procedure above to the cross entropy
case yields the algorithm in Table~\ref{tab:ce_sbd_expanded}. The only
modifications from Appendix~\ref{app:ce_sbd_algorithm}, Table~\ref{tab:ce_sbd_algo} are Step~2, in which
$\mathbf{\Delta}_{\mathrm{CE}}[m]$ is replaced by
$\tilde{\mathbf{\Delta}}_{\mathrm{CE}}[m]\in\mathbb{R}^{3D\times B}$, and
Step~3, in which the projection uses
$\tilde{\hat{\mathbf{R}}}_{\mathrm{CE}}^{(k)}$ and
$\tilde{\boldsymbol{\delta}}_{\mathrm{CE}}$.

\begin{table}[h]
\caption{Cross entropy SBD with rank-$3D$ score expansion. Differences from
Appendix~\ref{app:ce_sbd_algorithm}, Table~\ref{tab:ce_sbd_algo} are shown in italics.}
\label{tab:ce_sbd_expanded}
\centering
\begin{tabular}{p{0.12\linewidth}p{0.78\linewidth}}
\toprule
Step & Operation \\
\midrule
1 & Forward pass to obtain $\mathbf{h}^{(k)}$, $\mathbf{a}$, $\mathbf{p}=\softmax(\mathbf{a})$, and the score $\boldsymbol{\delta}_{\mathrm{CE}}=\mathbf{p}-\mathbf{y}$.\\
2 & \emph{Form the expanded score $\tilde{\boldsymbol{\delta}}_{\mathrm{CE}}=[\boldsymbol{\delta}_{\mathrm{CE}};\,\mathbf{p}\odot\boldsymbol{\delta}_{\mathrm{CE}};\,\mathrm{roll}_{s}(\mathbf{p})\odot\boldsymbol{\delta}_{\mathrm{CE}}]\in\mathbb{R}^{3D}$ and stack into the batch matrix $\tilde{\mathbf{\Delta}}_{\mathrm{CE}}[m]\in\mathbb{R}^{3D\times B}$. Update the expanded correlation estimate $\tilde{\hat{\mathbf{R}}}_{\mathrm{CE}}^{(k)}[m]$.}\\
3 & \emph{Project the broadcast score to hidden layer $k$ by $\tilde{\mathbf{q}}_{\mathrm{CE}}^{(k)}[n]=\tilde{\hat{\mathbf{R}}}_{\mathrm{CE}}^{(k)}[m]\,\tilde{\boldsymbol{\delta}}_{\mathrm{CE}}[n]$ for $n=mB+1, \ldots, (m+1)B$.}\\
4 & Update hidden-layer parameters with the three-factor rule $\Delta W_{ij}^{(k)}[m]=\zeta\,g_{i}^{\prime(k)}(h_{i}^{(k)}[m])\,f^{\prime(k)}(u_{i}^{(k)}[m])\,\tilde{q}_{\mathrm{CE},i}^{(k)}[m]\,h_{j}^{(k-1)}[m]$. for $B=1$ (Using Eq~(\ref{eq:batch_update_W}) for $B>1$.)\\
5 & Update the output layer with the standard $D$-dimensional cross entropy gradient if desired. \\
\bottomrule
\end{tabular}
\end{table}
\section{CIFAR-10 experimental setup}
\label{app:cemsen}

This appendix documents the codebase used for the CIFAR-10 broadcast-learning
experiments reported in the main text. The expanded SBD configuration is
implemented by the per-seed launch scripts \texttt{run\_cemseN\_s*.sh}, which
call \texttt{CIFAR10\_CEMSEN.py} with cross-entropy loss, \texttt{method=ebd},
and \texttt{err\_expand=2}. This setting uses the rank-$3D$ score expansion, so that the broadcast dimension is $3D=30$ for CIFAR-10. The main-text result
is averaged over five independent seed runs using the 5-layer CNN described in
Section~\ref{app:cemsen-architecture}. All CIFAR-10 experiments run in \texttt{float32} on a single NVIDIA V100 (32GB) GPU using PyTorch. For the reported baseline-width CIFAR-10 experiments, the approximate times to complete $200$ epochs were $75$, $91$, $96$, and $101$ minutes for BP, DFA, SBD, and SBD with score expansion, respectively. For the reported $4\times$-width CIFAR-10 experiments, the approximate times to complete $200$ epochs were $170$ and $420$ minutes for BP and SBD with score expansion, respectively.

For CIFAR-10, we use the standard dataset partition of $50{,}000$ training
images and $10{,}000$ test images, and all reported CIFAR-10 accuracies are top-1
accuracies on the official test split. The architecture and base optimization
settings are matched to the original EBD CNN setup, with cross entropy
replacing MSE. For the expanded-SBD variant, Appendix~\ref{app:score-expansion-ablation}
documents the chosen score-expansion modulator set together with the
single-seed design exploration used to select it.

PyTorch~\citep{paszke2019pytorch} is distributed under the BSD-3-Clause
license, while the CIFAR-10 dataset~\citep{krizhevsky2009cifar} is publicly
released by the University of Toronto for research use without a formal
license declaration on its official distribution page; both are used here
in accordance with their stated terms.

\subsection{Score broadcast and decorrelation}
\label{app:cemsen-sbd}

SBD replaces the global backpropagation chain with local, per-layer learning
rules driven by a broadcast score signal. At each mini-batch, the forward pass
produces pre-softmax logits
$\mathbf{a} \in \mathbb{R}^{C}$ and probabilities
$
\mathbf{p} = \softmax(\mathbf{a}/T),
$
where $C=10$ for CIFAR-10 and $T$ is the implementation variable
$\mathrm{\texttt{SOFTMAX\_TEMP}}$. In cross-entropy mode, the implementation
uses the broadcast residual
\begin{equation*}
\boldsymbol{\delta}
=
\alpha(\mathbf{p}-\mathbf{y})
\in \mathbb{R}^{C},
\qquad
\alpha = \mathrm{\texttt{ERR\_SCALE}},
\end{equation*}
where $\alpha$ is the score scaling, and $\mathbf{y}$ is the one-hot label vector. The scalar cross-entropy loss
is computed from the temperature-scaled logits, while the SBD broadcast signal
used in the layerwise updates is the probability residual above.

For hidden-layer SBD updates, this $C$-dimensional signal is expanded into a
higher-dimensional broadcast vector
$\boldsymbol{\varepsilon}\in\mathbb{R}^{D_{\varepsilon}}$ by concatenating
blocks of the form
$\boldsymbol{\phi}_{\ell}(\mathbf{x})\odot\boldsymbol{\delta}$, where each
$\boldsymbol{\phi}_{\ell}$ is an $X$-measurable deterministic function of the input $\mathbf{x}$ (in the reported setting, through $\mathbf{p}$ and $\mathbf{a}$). In the main expanded-SBD
configuration, implemented with \texttt{err\_expand=2}, the hidden-layer
broadcast vector is
\begin{equation}
\boldsymbol{\varepsilon}(\mathbf{x})
=
\bigl[
\boldsymbol{\delta};\;
\mathbf{p}\odot\boldsymbol{\delta};\;
\mathrm{roll}_5(\mathbf{p})\odot\boldsymbol{\delta}
\bigr]
\in \mathbb{R}^{30}.
\label{eq:err-exp}
\end{equation}
Thus the output layer receives the raw 10-dimensional residual
$\boldsymbol{\delta}$, while the hidden layers receive the expanded
rank-$3D$ broadcast signal. Since each modulator is deterministic conditional
on $X$, the conditional mean-zero property is preserved at the Bayes optimum:
\[
\mathbb{E}\!\left[
\boldsymbol{\phi}_{\ell}(X)\odot\boldsymbol{\delta}^{\star}
\,\middle|\, X
\right]
=
\boldsymbol{\phi}_{\ell}(X)\odot
\mathbb{E}\!\left[\boldsymbol{\delta}^{\star}\mid X\right]
=
0.
\]
Thus score expansion preserves the population orthogonality property used by
SBD.

For each trainable layer $k$ with pre-activation
$\mathbf{u}^{(k)} = W^{(k)}\mathbf{h}^{(k-1)}$, post-activation
$\mathbf{h}^{(k)} = \mathrm{ReLU}(\mathbf{u}^{(k)})$, and incoming activation
$\mathbf{h}^{(k-1)}$, SBD maintains two mini-batch EMAs:
\begin{align*}
\widehat{R}^{(k)}_{g\varepsilon}[m]
 &\leftarrow \lambda\,\widehat{R}^{(k)}_{g\varepsilon}[m-1]
   + (1-\lambda)\cdot\tfrac{1}{B}\sum_{l=mB+1}^{mB+B}
   \boldsymbol{\varepsilon}[l]\,\mathbf{h}^{(k)}[l]^T
 \qquad \text{(cross-covariance)}
  \\
\widehat{R}^{(k)}_{hh}[m]
 &\leftarrow \lambda_2\,\widehat{R}^{(k)}_{hh}[m-1]
   + (1-\lambda_2)\cdot\tfrac{1}{B}\sum_{l=mB+1}^{mB+B}
   \mathbf{h}^{(k)}[l]\mathbf{h}^{(k)}[l]^T
 \quad \text{(auto-covariance; FC only)}
\end{align*}
with decay rates
$\lambda=\mathrm{\texttt{Reh\_lambda}}$ and
$\lambda_2=\mathrm{\texttt{Reh\_lambda2}}$. Both decays are annealed toward
$1$ at the end of every epoch according to
$\lambda\leftarrow\lambda+\rho(1-\lambda)$ with
$\rho=\mathrm{\texttt{Reh\_lambda\_drop}}$.

The local weight update at layer $k$ is a weighted sum of three terms:
\begin{equation*}
\Delta W^{(k)}
\;=\;
-\,c_{\mathrm{score}}^{(k)}\,\nabla^{\mathrm{score}}_{W^{(k)}}
\;-\; c_{\mathrm{cov}}^{(k)}\,\nabla^{\mathrm{cov}}_{W^{(k)}}
\;-\; c_{\ell_1}^{(k)}\,\nabla^{\ell_1}_{W^{(k)}}.
\end{equation*}
The coefficient $c_{\mathrm{score}}^{(k)}$ denotes the weight of the
score-broadcast decorrelation term. In the implementation, this coefficient
corresponds to the \texttt{CMSE\_*} hyperparameters, retaining the naming
convention of the original MSE/EBD codebase. With \texttt{loss\_type=mse},
this term corresponds to the standard MSE/EBD residual update; with
\texttt{loss\_type=ce}, the same local update form is driven by the
cross-entropy score residual. The three terms are:
\begin{itemize}
\item \textbf{Three-factor score-broadcast update}
\begin{equation*}
\nabla^{\mathrm{score}}_{W^{(k)}}
\;=\;
\tfrac{1}{B}\,\bigl[\,f'(\mathbf{u}^{(k)})\odot
      \bigl(\widehat{R}^{(k)\top}_{g\varepsilon}\,\boldsymbol{\varepsilon}\bigr)\,\bigr]\,
\mathbf{h}^{(k-1)\top}.
\end{equation*}
This is the product of three factors: the ReLU derivative
$f'(\mathbf{u}^{(k)})$, the broadcast modulator
$\widehat{R}^{(k)\top}_{g\varepsilon}\boldsymbol{\varepsilon}$, and the
presynaptic activation $\mathbf{h}^{(k-1)}$, accumulated across the batch.
At the output layer, $\boldsymbol{\varepsilon}$ is replaced by the raw
$\boldsymbol{\delta}$, $f'$ is set to the identity, and
$\widehat{R}^{(k)}_{g\varepsilon}$ is the identity so that the update reduces
to the standard single-layer cross entropy gradient.

\item \textbf{Activation decorrelation (entropy) update} (FC layers only)
\citep{erdoganerror,ozsoy2022self,bozkurt2023correlative,bozkurt2023correlatedsourcesep}
\begin{equation*}
\nabla^{\mathrm{cov}}_{W^{(k)}}
\;=\;
-\tfrac{2}{B\,D_k}\,\bigl[\,f'(\mathbf{u}^{(k)})\odot
     (\widehat{R}^{(k)}_{hh} + \eta I)^{-1}\mathbf{h}^{(k)}\,\bigr]\,
\mathbf{h}^{(k-1)\top},
\end{equation*}
derived from Jacobi's formula applied to
$\log\det(\widehat{R}^{(k)}_{hh} + \eta I)$. Here
$\eta = \mathrm{\texttt{R\_eps\_weight}}$ regularizes the inverse and $D_k$ is
the layer width. For conv layers we set $c_{\mathrm{cov}}^{(k)}=0$
(\texttt{CCOV\_HIDDEN}).

\item \textbf{Activation $\ell_1$ sparsity}
\begin{equation*}
\nabla^{\ell_1}_{W^{(k)}}
\;=\;
\tfrac{1}{B\,D_k}\,\mathrm{sign}(\mathbf{h}^{(k)})\,\mathbf{h}^{(k-1)\top},
\end{equation*}
applied only to the hidden fully connected layer fc3 in the reported
CIFAR-10 expanded-SBD implementation. It is disabled for the convolutional
layers and is not applied to the final classifier fc4.
\end{itemize}

The scalar coefficients
$(c_{\mathrm{score}}^{(k)},c_{\mathrm{cov}}^{(k)},c_{\ell_1}^{(k)})$ are
given in Table~\ref{tab:cemsen-hparams}. The local SBD gradient terms are
computed inside a \texttt{torch.no\_grad()} context, manually written into
\texttt{param.grad}, and applied via a standard Adam optimizer. In the SBD
runs, hidden-layer updates are supplied by these local broadcast-decorrelation
rules rather than by a global backpropagation chain.
\subsection{Network architecture}
\label{app:cemsen-architecture}

The network is a minimalist 5-layer CNN: three convolutional blocks followed
by two fully connected layers. It uses no bias terms, no BatchNorm, no
dropout, and no residual connections. Average pooling is used throughout.
This minimal choice isolates the SBD update rule from confounds introduced
by normalization layers.

\begin{table}[h]
\centering
\caption{Baseline CNN architecture. Input: CIFAR-10 images of shape
$(3, 32, 32)$. Total trainable parameters: $1{,}289{,}600$.}
\label{tab:cemsen-arch}
\small
\begin{tabular}{lllr}
\toprule
Layer & Operation & Output shape & Parameters \\
\midrule
conv1   & $5\!\times\!5$ conv, stride 1, pad 2 & $(128,\,32,\,32)$ & $9{,}600$ \\
act1    & ReLU                                  & $(128,\,32,\,32)$ & -- \\
pool1   & AvgPool $2\!\times\!2$                & $(128,\,16,\,16)$ & -- \\
conv2   & $5\!\times\!5$ conv, stride 1, pad 2 & $(64,\,16,\,16)$  & $204{,}800$ \\
act2    & ReLU                                  & $(64,\,16,\,16)$  & -- \\
pool2   & AvgPool $2\!\times\!2$                & $(64,\,8,\,8)$    & -- \\
conv3   & $2\!\times\!2$ conv, stride 2, pad 0 & $(64,\,4,\,4)$    & $16{,}384$ \\
act3    & ReLU                                  & $(64,\,4,\,4)$    & -- \\
flatten & reshape                               & $(1024,)$         & -- \\
fc3     & linear                                & $(1024,)$         & $1{,}048{,}576$ \\
act4    & ReLU                                  & $(1024,)$         & -- \\
fc4     & linear                                & $(10,)$           & $10{,}240$ \\
softmax & softmax at $T\!=\!1$                  & $(10,)$           & -- \\
\bottomrule
\end{tabular}
\end{table}

Layer widths are parameterized via the CLI flags
\texttt{--P0}, \texttt{--P1}, \texttt{--P2}, and \texttt{--fc\_hidden}
(default $128,64,64,1024$), which in turn set the channel counts of conv1,
conv2, conv3, and fc3 respectively. The fc3 input dimension
$P_2\!\cdot\!4\!\cdot\!4$ is derived automatically. Weights are initialized
from a Gaussian distribution with standard deviation
$\sqrt{2}\sqrt{1/\mathrm{\texttt{scale\_factor}}}/\sqrt{\mathrm{fan\_in}}$,
giving a Kaiming-like scaling with \texttt{scale\_factor}$=6$. In the reported CIFAR-10 runs, data augmentation consists of per-channel
normalization, random cropping after 2-pixel reflection padding, and horizontal
flipping.

For the additional width-scaling experiments discussed in the main text, we
also evaluated a $4\times$-width variant of the same CNN.

\begin{table}[h]
\centering
\caption{CNN architecture at $4\times$ width. Input:
CIFAR-10 images of shape $(3, 32, 32)$. Total trainable parameters:
$20{,}395{,}520$ ($\approx 16\times$ the baseline model count).}
\label{tab:cemsen-arch-quad}
\small
\begin{tabular}{lllr}
\toprule
Layer & Operation & Output shape & Parameters \\
\midrule
conv1   & $5\!\times\!5$ conv, stride 1, pad 2 & $(512,\,32,\,32)$ & $38{,}400$ \\
act1    & ReLU                                  & $(512,\,32,\,32)$ & -- \\
pool1   & AvgPool $2\!\times\!2$                & $(512,\,16,\,16)$ & -- \\
conv2   & $5\!\times\!5$ conv, stride 1, pad 2 & $(256,\,16,\,16)$ & $3{,}276{,}800$ \\
act2    & ReLU                                  & $(256,\,16,\,16)$ & -- \\
pool2   & AvgPool $2\!\times\!2$                & $(256,\,8,\,8)$   & -- \\
conv3   & $2\!\times\!2$ conv, stride 2, pad 0 & $(256,\,4,\,4)$   & $262{,}144$ \\
act3    & ReLU                                  & $(256,\,4,\,4)$   & -- \\
flatten & reshape                               & $(4096,)$          & -- \\
fc3     & linear                                & $(4096,)$          & $16{,}777{,}216$ \\
act4    & ReLU                                  & $(4096,)$          & -- \\
fc4     & linear                                & $(10,)$            & $40{,}960$ \\
softmax & softmax at $T\!=\!1$                  & $(10,)$            & -- \\
\bottomrule
\end{tabular}
\end{table}

\paragraph{Why this is called ``$4\times$ width.''}
The original architecture has four hyperparameters that control the widths of
its intermediate representations: the three convolutional channel counts
$P_{0}$, $P_{1}$, $P_{2}$ and the fully connected hidden size
$d_{\mathrm{fc}}$. The default configuration uses
\begin{equation*}
(P_{0},\,P_{1},\,P_{2},\,d_{\mathrm{fc}}) \,=\, (128,\,64,\,64,\,1024),
\end{equation*}
while the $4\times$-width configuration uses
\begin{equation*}
(P_{0},\,P_{1},\,P_{2},\,d_{\mathrm{fc}}) \,=\, (512,\,256,\,256,\,4096),
\end{equation*}
i.e.\ every width hyperparameter is multiplied by exactly $4$. The relative
ratios $P_{0}:P_{1}:P_{2}=2:1:1$ that determine the channel hierarchy of the
network are preserved, as are the input dimension ($3$ RGB channels of size
$32\!\times\!32$) and the output dimension ($10$ classes). The spatial sizes
of feature maps ($32\!\to\!16\!\to\!8\!\to\!4$) are therefore unchanged;
only the depth of each feature map and the FC hidden dimension are scaled.

\paragraph{Parameter-count scaling.}
Although the term ``$4\times$ width'' describes the linear scaling of feature
widths, the trainable parameter count grows \emph{quadratically} rather than
linearly, because most layers have parameter counts proportional to the
product of input and output widths. Specifically:
\begin{itemize}
\item \textbf{conv1} grows by $4\times$ ($9{,}600 \to 38{,}400$): the input
      channel count is fixed at $3$ (RGB), so only the output width scales.
\item \textbf{conv2}, \textbf{conv3}, and \textbf{fc3} each grow by $16\times$:
      both the input and output widths scale by $4$, giving a
      $4 \times 4 = 16$ factor for the kernel parameters
      $C_{\mathrm{in}} \cdot C_{\mathrm{out}} \cdot k^{2}$ (or
      $d_{\mathrm{in}} \cdot d_{\mathrm{out}}$ for the dense layer).
\item \textbf{fc4} grows by $4\times$ ($10{,}240 \to 40{,}960$): the output
      is fixed at $10$ classes, so only the input width scales.
\end{itemize}
The overall parameter count therefore scales by approximately $16\times$,
from $1{,}289{,}600$ in the original architecture to $20{,}395{,}520$ at
$4\times$ width. The dense \texttt{fc3} layer alone accounts for
approximately $82\%$ of the new parameter count and therefore dominates the
memory budget at this width.

\subsection{Training hyperparameters}
\label{app:cemsen-hparams}

Table~\ref{tab:cemsen-hparams} lists the hyperparameters used by the
expanded-SBD CIFAR-10 launch scripts \texttt{run\_cemseN\_s*.sh}. Unless
otherwise noted, expanded-SBD runs inherit these values; control runs use the
corresponding values specified in their own launch scripts.

\begin{table}[h]
\centering
\caption{CEMSEN training hyperparameters corresponding to the expanded-SBD
per-seed launch scripts \texttt{run\_cemseN\_s*.sh}. Column \emph{flag} is
the CLI name in the launch script; column \emph{Symbol} names the variable
used in the equations of Section~\ref{app:cemsen-sbd}.}
\label{tab:cemsen-hparams}
\small
\begin{tabular}{llll}
\toprule
Group & Flag & Symbol & Value \\
\midrule
\multirow{5}{*}{\emph{General}}
 & \texttt{--logger\_name}  & seed               & 5 independent seeds \\
 & \texttt{--n\_epochs}     & --                 & $201$ \\
 & \texttt{--batch\_size}   & $B$                & $64$ \\
 & \texttt{--loss\_type}    & --                 & cross entropy \\
 & \texttt{--augmentation}  & --                 & $1$ (crop $+$ flip) \\
\midrule
\multirow{5}{*}{\emph{Method}}
 & \texttt{--method}        & --                 & \texttt{sbd} \\
 & \texttt{--err\_expand}   & --                 & $2$ (rank 30) \\
 & \texttt{--ERR\_SCALE}    & $\alpha$           & $1.0$ \\
 & \texttt{--SOFTMAX\_TEMP} & $T$                & $1.0$ \\
 & effective $D_{\varepsilon}$ & $D_{\varepsilon}$ & $30$ \\
\midrule
\multirow{6}{*}{\emph{Optimizer}}
 & \texttt{--lr}               & $\eta_0$        & $4\!\times\!10^{-4}$ \\
 & \texttt{--lr\_drop\_rate}   & --              & $0.97$ \\
 & \texttt{--lr\_drop\_every}  & --              & $1$ epoch \\
 & \texttt{--weight\_decay}    & --              & $1\!\times\!10^{-5}$ \\
 & Adam $\beta_1$, $\beta_2$   & --              & $0.9$, $0.999$ \\
 & precision                   & --              & \texttt{float32} \\
\midrule
\multirow{8}{*}{\emph{SBD coefficients}}
 & \texttt{--CMSE\_OUT}    & $c_{\mathrm{score}}^{(\mathrm{fc4})}$       & $10$ \\
 & \texttt{--CMSE\_OUT2}   & $c_{\mathrm{score}}^{(\mathrm{fc3})}$       & $0.1$ \\
 & \texttt{--CMSE\_HIDDEN} & $c_{\mathrm{score}}^{(\mathrm{conv})}$      & $0.1$ \\
 & \texttt{--CCOV\_OUT}    & $c_{\mathrm{cov}}^{(\mathrm{fc4})}$       & $1\!\times\!10^{-7}$ \\
 & \texttt{--CCOV\_OUT2}   & $c_{\mathrm{cov}}^{(\mathrm{fc3})}$       & $1\!\times\!10^{-7}$ \\
 & \texttt{--CCOV\_HIDDEN} & $c_{\mathrm{cov}}^{(\mathrm{conv})}$      & $0$ \\
 & \texttt{--CL1\_OUT}     & $c_{\ell_1}^{(\mathrm{fc3})}$             & $1\!\times\!10^{-11}$ \\
 & \texttt{--CL1\_HIDDEN}  & $c_{\ell_1}^{(\mathrm{hidden})}$          & $0$ \\
\midrule
\multirow{7}{*}{\emph{Covariance EMA}}
 & \texttt{--Reh\_lambda}            & $\lambda$          & $0.99999$ \\
 & \texttt{--Reh\_lambda2}           & $\lambda_2$        & $0.99999$ \\
 & \texttt{--Reh\_lambda\_drop}      & $\rho$             & $0.04$ \\
 & \texttt{--Reh\_lambda\_drop\_every} & --               & $1$ epoch \\
 & \texttt{--Reh\_gain}              & $\mathrm{std}(\widehat{R}^{(\mathrm{conv})}_{g\varepsilon})$ & $0.01$ \\
 & \texttt{--Reh\_gain\_lin}         & $\mathrm{std}(\widehat{R}^{(\mathrm{FC})}_{g\varepsilon})$   & $0.01$ \\
 & \texttt{--Reh\_ini}               & init diagonal of $\widehat{R}_{hh}$ & $1\!\times\!10^{-8}$ \\
\midrule
\multirow{4}{*}{\emph{Architecture}}
 & \texttt{--P0}, \texttt{--P1}, \texttt{--P2} & channel counts & $128,\,64,\,64$ \\
 & \texttt{--fc\_hidden}   & fc3 width $F_h$   & $1024$ \\
 & \texttt{--use\_bias}    & --                & $0$ (disabled) \\
 & \texttt{--scale\_factor}, \texttt{--init\_dist} & init scale, dist. & $6$, Gaussian \\
\bottomrule
\end{tabular}
\end{table}

\subsection{Accuracy curves}
\label{app:cemsen-accuracy-curves}

Figure~\ref{fig:cemsen-accuracy-curves} shows the CIFAR-10 training and test
accuracy curves for BP, DFA, SBD, and the score-expansion variant of SBD on
the CNN architecture used in \citet{erdoganerror} and described in
Section~\ref{app:cemsen-architecture}. The expansion variant uses the
confidence-weighted and \texttt{roll(5)} blocks defined in
Eq.~\eqref{eq:err-exp}.

\begin{figure}[t]
\centering
\includegraphics[width=0.98\linewidth]{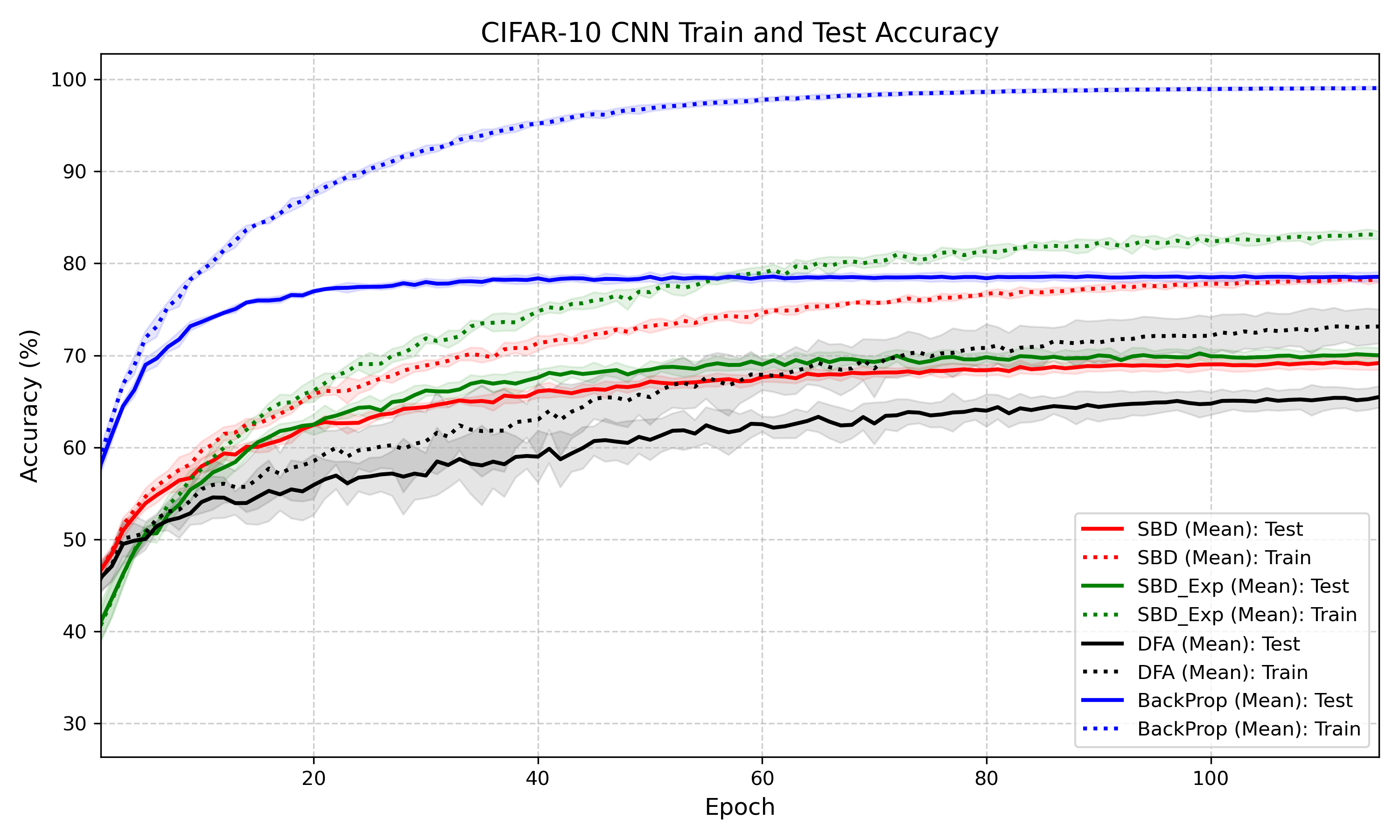}
\caption{Training and test accuracy curves on CIFAR-10 for BP, DFA, SBD, and
SBD with score expansion on the CNN architecture of
Section~\ref{app:cemsen-architecture}. The expanded-broadcast variant uses
the confidence-weighted $\mathbf{p}\odot\boldsymbol{\delta}$ block together
with the $\mathrm{roll}_{5}(\mathbf{p})\odot\boldsymbol{\delta}$ block from
Eq.~\eqref{eq:err-exp}.}
\label{fig:cemsen-accuracy-curves}
\end{figure}

The curves are consistent with the final accuracies reported in
Table~\ref{tab:cifar10-cnn-comparison}. In particular, SBD trained with
cross entropy attains higher test accuracy than conventional DFA trained with
the same loss, indicating that the score-broadcast update is more effective
than the standard direct-feedback alternative in this setting. The
score-expansion variant closely tracks plain SBD early in training and then
provides a modest additional improvement later in training, yielding a small
but consistent gain over using the score vector alone.

Figure~\ref{fig:cemsen-accuracy-curves-quad} shows the corresponding CIFAR-10
training and test accuracy curves for the $4\times$-width CNN of
Table~\ref{tab:cemsen-arch-quad}. In this higher-capacity setting, we report
BP and SBD with expanded broadcast vector, where the expansion again uses the
confidence-weighted and \texttt{roll5} blocks from Eq.~\eqref{eq:err-exp}.

\begin{figure}[t]
\centering
\includegraphics[width=0.98\linewidth]{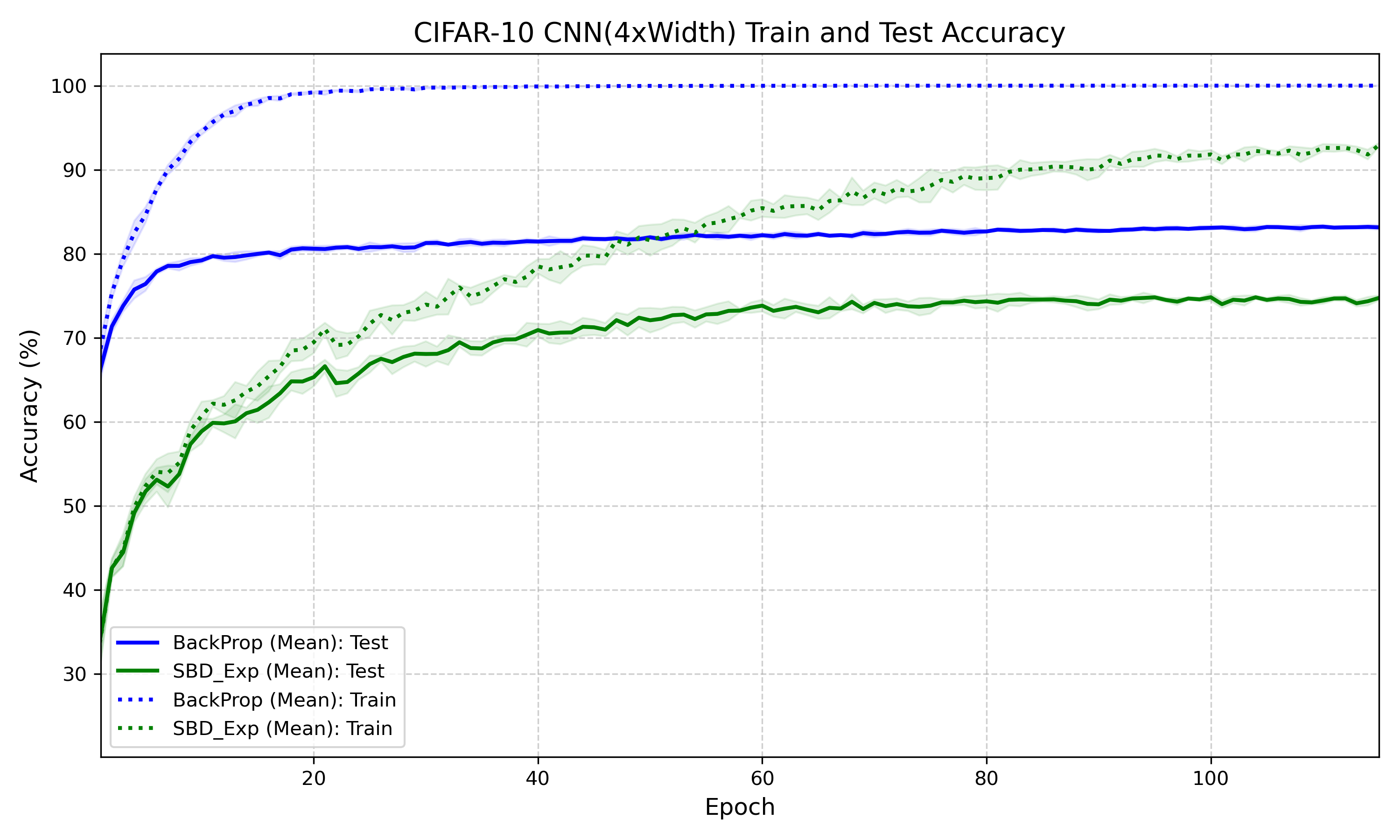}
\caption{Training and test accuracy curves on CIFAR-10 for BP and SBD with
score expansion on the $4\times$-width CNN architecture in
Table~\ref{tab:cemsen-arch-quad}. The SBD variant uses the same expanded
broadcast construction as in the baseline setting, namely the
confidence-weighted $\mathbf{p}\odot\boldsymbol{\delta}$ block together with
the $\mathrm{roll}_{5}(\mathbf{p})\odot\boldsymbol{\delta}$ block.}
\label{fig:cemsen-accuracy-curves-quad}
\end{figure}

Compared with the baseline-width model, both methods benefit substantially
from the $4\times$-width scaling. Using the final accuracies reported in
Section~\ref{sec:numerical_experiments}, BP increases from $78.51\%$ to
$83.1\%$, a gain of $4.59$ percentage points, while SBD with expanded
broadcast increases from $70.03\%$ to $74.46\%$, a gain of $4.43$ points.
Thus the improvement is closer to about $4$--$5$ points for both methods,
rather than only $3$--$4$. The wider network therefore preserves the same
qualitative ordering while improving the absolute performance of both BP and
expanded SBD.

\paragraph{Reproducibility.}
All source code and hyperparameters used in the experiments are provided in the supplementary materials.

\subsection{Design exploration for score-expansion modulators}
\label{app:score-expansion-ablation}

The general theory of Section~\ref{sec:score_expansion} (Appendix~\ref{app:score_expansion_appendix}) shows that any deterministic $X$-measurable map $\phi : \mathcal{X} \to \mathbb{R}^{D_{\mathrm{out}}}$ yields a modulated score $\eta^{\phi}(X) = \phi(X) \odot \boldsymbol{\delta}(Y, \mathbf{a}(X))$ whose population-optimal version preserves the conditional mean-zero property. This leaves substantial freedom in the choice of modulators: they may be functions of the raw input, the logits, the probabilities, hidden activations, or any other deterministic quantity generated from $X$. Different choices lead to broadcast vectors of different effective rank and qualitatively different couplings to the predictive distribution. This appendix documents the design exploration that led to the rank-$3D$ instance reported in Section~\ref{sec:numerical_experiments} and Appendix~\ref{app:score_expansion_appendix}.

\paragraph{Expansion methods.}
In the formulas below, $\vt{a}=\vt{a}(X)$ and $\vt{p}=\vt{p}(X)$ are deterministic functions of the input for fixed network parameters, so each candidate is an $X$-measurable special case of the general construction.
\begin{itemize}
\item[i.] {\texttt{confidence-weighted}}
\begin{equation*}
  \vt{\eta}_{\text{conf}}(X) \;=\; \vt{p}(X)\odot\boldsymbol{\delta}
  \;\in\;\mathbb{R}^{D}.
\end{equation*}
The class-$d$ residual is multiplied by the model's predicted probability
for class $d$. Coordinates corresponding to confidently predicted
classes contribute proportionally more to the broadcast; coordinates
where $p_{d}\!\approx\!0$ are suppressed.

\item[ii.] {\texttt{logit-weighted}}
\begin{equation*}
  \vt{\eta}_{\text{logit}}(X) \;=\; \vt{a}(X)\odot\boldsymbol{\delta}
  \;\in\;\mathbb{R}^{D}.
\end{equation*}
The same idea as \texttt{confidence-weighted} but using the raw logits
$\vt{a}(X)$ instead of the softmax probabilities. Logit magnitudes are
unbounded, so this block can produce comparatively large modulators
once the network has converged.

\item[iii.] {\texttt{rollk}}

\begin{equation*}
  \vt{\eta}_{\text{rollk}}(X;k)
  \;=\; \mathrm{roll}_{k}(\vt{p}(X))\odot\boldsymbol{\delta}
  \;\in\;\mathbb{R}^{D}.
\end{equation*}
A single cyclic shift of $\vt{p}(X)$ by $k$, $k\in\{0,1,\dots,D-1\}$, positions provides cross-class
coupling: coordinate $d$ is modulated by the probability of the class
$k$ positions away in the class index.
For $k=0$ the block is identical to \texttt{confidence-weighted}; for
$k=5$ it is identical to \texttt{roll5}. The pair $(k,\,D-k)$ is
information-equivalent up to a permutation of class pairs.

\item[iv.] {\texttt{full-cyclic}}
\begin{equation*}
  \vt{\eta}_{\text{full}}(X)
  \;=\;
  \Bigl[\,
    \mathrm{roll}_{1}(\vt{p}(X))\odot\boldsymbol{\delta}\,;\
    \dots\,;\
    \mathrm{roll}_{D-1}(\vt{p}(X))\odot\boldsymbol{\delta}
  \Bigr]
  \;\in\;\mathbb{R}^{D(D-1)}.
\end{equation*}
Concatenates all $D-1$ non-trivial cyclic shifts of $\vt{p}(X)$ into one
high-rank block. For $D=10$ this contributes $90$ extra dimensions and
captures all binary class-pairings $(d, (d+k)\bmod D)$ for $k=1,\dots,9$
simultaneously.

\item[v.] \texttt{boundary\_weight}
A band-pass modulator built from two softmax distributions at different
temperatures:
\begin{equation*}
  \vt{\phi}_{\text{bd}}(X)
  \;=\; \softmax(\vt{a}(X)/T_{1})\;-\;\softmax(\vt{a}(X)/T_{2})
  \quad\text{with } T_{1}=0.5,\, T_{2}=2.0,
\end{equation*}
\begin{equation*}
  \vt{\eta}_{\text{bd}}(X) \;=\; \vt{\phi}_{\text{bd}}(X)\odot\boldsymbol{\delta}
  \;\in\;\mathbb{R}^{D}.
\end{equation*}
At a sharp temperature ($T=0.5$) the softmax concentrates on the top
class; at a soft temperature ($T=2.0$) it spreads. Their difference is
small when the network is either very confident or very uniform across
classes, and large when the prediction is genuinely on a class-boundary.
The block therefore band-pass filters the residual by ``where the
prediction is most uncertain on the class-discrimination axis.''

\end{itemize}

\paragraph{Selection protocol.} Because the cost of a full multi-seed sweep over all subsets of these candidates for several values of $M$ would be prohibitive, the design exploration was conducted at a single seed. We evaluated a representative set of subsets on the CIFAR-10 CNN of Appendix~\ref{app:cemsen}, then retained the best-performing rank-$3D$ instance, and only that selected configuration was carried forward to the multi-seed evaluation reported in Table~\ref{tab:cifar10-cnn-comparison}. The single-seed numbers below are intended to document the search rather than to provide statistical comparisons among variants.

\paragraph{Modulator-combination ablation.} Table~\ref{tab:modulator-ablation} reports CIFAR-10 test accuracy for the candidate combinations evaluated. The selected combination $[\boldsymbol{\delta}; \mathbf{p} \odot \boldsymbol{\delta}; \mathrm{roll}_5(\mathbf{p}) \odot \boldsymbol{\delta}]$ used in the main text is highlighted.

\begin{table}[h]
    \centering
    \caption{Single-seed CIFAR-10 test accuracy for candidate score-expansion modulator combinations on the CNN of Appendix~\ref{app:cemsen-architecture}. Entries report top-1 test accuracy (\%) from a single seed; these numbers document the design search and are not intended as statistical comparisons. The configuration selected for the multi-seed evaluation in Table~\ref{tab:cifar10-cnn-comparison} is shown in bold.}
    \label{tab:modulator-ablation}
    \begin{tabular}{lcc}
        \toprule
        Modulator combination & $M$ & Test acc. (\%, 1 seed) \\
        \midrule
        score + confidence-weighted & $2$ & $69.46$ \\ 
        score + logit-weighted & $2$ & $62$ (terminated early) \\
        score + boundary-weighted & $2$ & $69.72$ \\
        full-cyclic & $10$ & $69.24$ \\ 
        score + roll5 + confidence-weighted & $3$ & $\mathbf{70.46}$ \\
        score + confidence-weighted + boundary-weighted + roll5 & $4$ & $69.44$ \\   
        \bottomrule
    \end{tabular}
\end{table}

\subsection{Cosine alignment between SBD updates and backpropagation gradients}
\label{app:cemsen-sbd-bp-cosine}

To quantify the directional relationship between the local SBD gradient and the
standard backpropagation (BP) gradient, we compute a signed cosine similarity
for each hidden layer of the CIFAR-10 CNN during SBD training. Let
$S^{(k)}_{e,b}$ denote the unregularized three-factor local SBD gradient
for layer $k$ at epoch $e$ and mini-batch $b$, i.e., the tensor that is
written into \texttt{param.grad} and consumed by the Adam optimizer
(see Appendix~\ref{app:cemsen-hparams}), evaluated at the current network
parameters before the optimizer step. The actual weight increment is
$-\eta\,\mathrm{Adam}(S^{(k)}_{e,b})$; $S^{(k)}_{e,b}$ itself is the gradient
term consumed by the optimizer, not the update step.
Let $G^{(k)}_{\mathrm{BP},e,b}=\nabla_{W^{(k)}}\mathcal{L}_{\mathrm{CE}}$
denote the BP gradient of the cross entropy loss computed on the same
mini-batch and at the same parameter values. The BP gradient is used only
as a diagnostic; the network weights are still updated only by the SBD rule.
For each hidden layer, both tensors are flattened and compared as
\begin{equation}
    c^{(k)}_{e,b}
    =
    \frac{\left\langle \operatorname{vec}\!\left(S^{(k)}_{e,b}\right),
    \operatorname{vec}\!\left(G^{(k)}_{\mathrm{BP},e,b}\right)\right\rangle}
    {\left\|\operatorname{vec}\!\left(S^{(k)}_{e,b}\right)\right\|_2
    \left\|\operatorname{vec}\!\left(G^{(k)}_{\mathrm{BP},e,b}\right)\right\|_2
    + \epsilon_{\mathrm{num}}},
    \label{eq:sbd-bp-cosine}
\end{equation}
where $\epsilon_{\mathrm{num}}$ is a small numerical stabilizer. We do not take
an absolute value in Eq.~\eqref{eq:sbd-bp-cosine}; hence positive values denote
alignment between the local SBD gradient and the BP gradient, while negative
values would denote anti-alignment. Because both quantities are gradient-type
tensors that the optimizer subtracts, positive cosine corresponds to a shared
descent direction in the resulting weight updates. The corresponding angle,
when reported, is $\arccos(\mathrm{clip}(c^{(k)}_{e,b},-1,1))$, but
Figure~\ref{fig:sbd-bp-cosine-cifar10} plots the cosine similarities themselves.
The mini-batch cosine values are averaged within each epoch and the experiment
is repeated over five independent seeds. Solid lines in the plot show the
across-seed mean, and the shaded envelopes show one standard deviation.

\begin{figure}[t]
\centering
\includegraphics[width=0.82\linewidth]{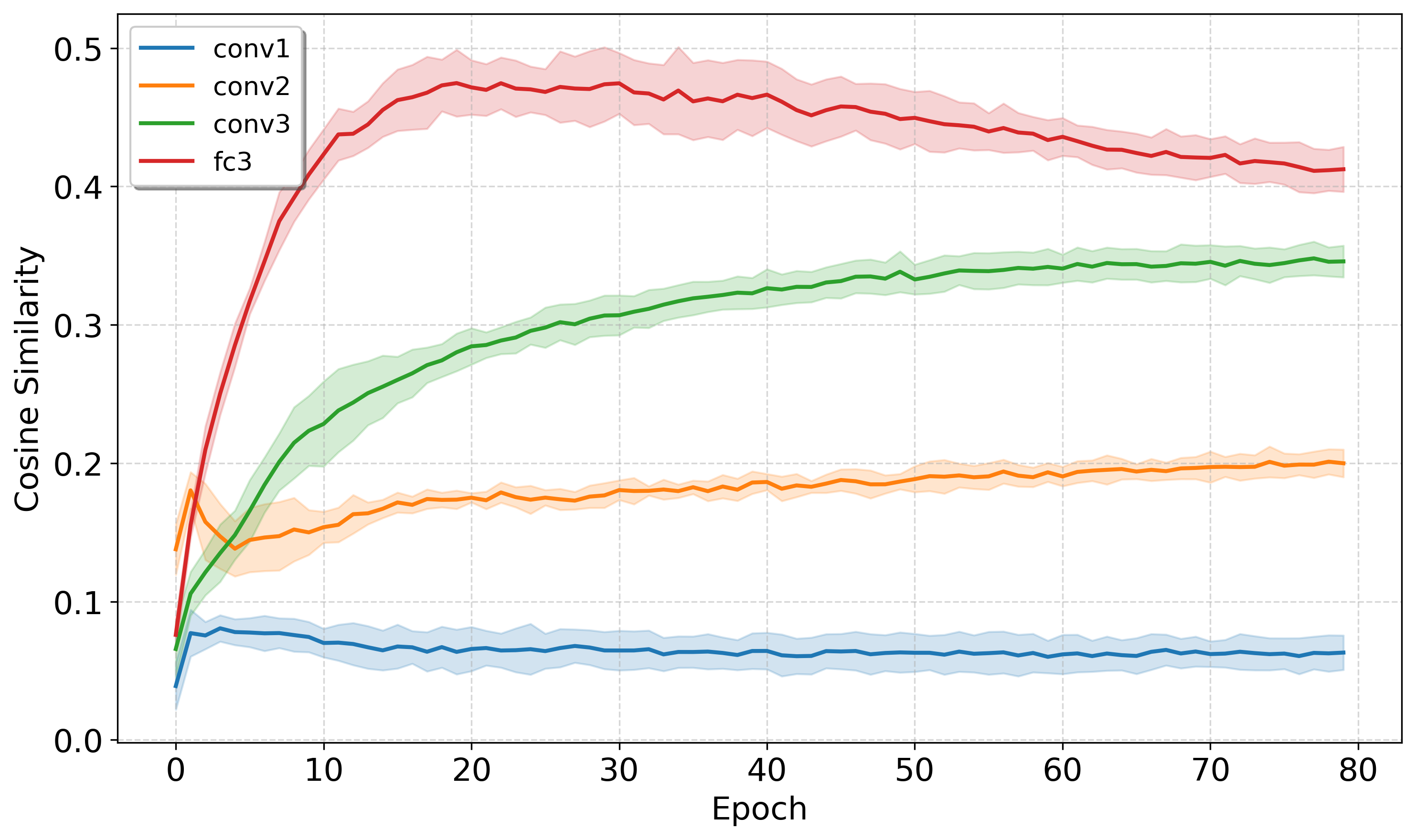}
\caption{Cosine similarity between the local SBD gradient and the diagnostic
BP gradient for each hidden layer of the CIFAR-10 CNN. Curves show means over
five seeds, and shaded envelopes denote one standard deviation. Across all
hidden layers, the local SBD gradient has a consistently positive projection
onto the corresponding BP gradient throughout training.}
\label{fig:sbd-bp-cosine-cifar10}
\end{figure}

Figure~\ref{fig:sbd-bp-cosine-cifar10} shows that the local SBD gradient
remains positively aligned with the BP gradient in every hidden layer. The
deepest hidden fully connected layer exhibits the largest cosine similarity,
while the convolutional layers also maintain positive alignment throughout
training. Since both $S^{(k)}_{e,b}$ and $G^{(k)}_{\mathrm{BP},e,b}$ are
gradient-type quantities that the optimizer subtracts, this positive cosine
implies that the actual SBD weight update 
 shares a non-trivial descent component with the BP update at the same training
instant, even though SBD does not use BP to update the weights.

\section{Tiny ImageNet experimental setup}
\label{app:tinet}

This appendix documents the codebase used for the Tiny ImageNet
experiments reported in the main text, including BP, DFA, 
plain SBD, and score-expanded SBD. Tiny ImageNet~\citep{le2015tiny}
was released as a Stanford CS231N course project, a downsized
200-class subset of ImageNet, without a formal open-source license
declaration on its original distribution page, and is used here for
non-commercial academic research in accordance with the underlying ImageNet
terms of use.

The Tiny ImageNet benchmark consists of $200$ classes with $500$ training
images and $50$ validation images per class, all resized to
$64\!\times\!64$ RGB. The reported SBD configurations are implemented by
per-seed launch scripts in the Tiny ImageNet codebase: plain SBD uses
\texttt{CNN\_tinyImageNet/SBD/run\_tinet\_sbd\_ce\_s*.sh}, while
score-expanded SBD uses
\texttt{CNN\_tinyImageNet/SBD\_exp/run\_tinet\_sbd\_exp\_ce\_s*.sh}.
These scripts call the corresponding Tiny ImageNet trainers,
\texttt{cnn\_tinet.py} for plain SBD and \texttt{cnn\_tinete.py} for
score-expanded SBD, applied to the 6-layer CNN described in
Section~\ref{app:tinet-architecture}.

The forward and local update rules follow the CIFAR-10 SBD formulation of
Section~\ref{app:cemsen-sbd}; this appendix therefore focuses on the
Tiny-ImageNet-specific architecture, data pipeline, and hyperparameter choices.
All experiments run in \texttt{float32} on a single NVIDIA V100 (32GB) GPU. For comparison on a common budget, the approximate times to complete $200$ epochs on Tiny ImageNet were $14$, $20$, $30$, and $32$ hours for BP, DFA, SBD, and SBD with score expansion, respectively.

For plain SBD, the broadcast vector equals the raw cross-entropy score
\[
\boldsymbol{\varepsilon}=\boldsymbol{\delta}=\mathbf{p}-\mathbf{y}
\in\mathbb{R}^{200}.
\]
For score-expanded SBD, the implementation uses the rank-$3D$ expansion
described in Section~\ref{app:cemsen-sbd},
\[
\boldsymbol{\varepsilon}
=
[\boldsymbol{\delta};\;
\mathbf{p}\odot\boldsymbol{\delta};\;
\mathrm{roll}_{5}(\mathbf{p})\odot\boldsymbol{\delta}]
\in\mathbb{R}^{600},
\]
selected by \texttt{--err\_expand=2} in the
\texttt{SBD\_exp} launch scripts. The softmax temperature and broadcast scale
are fixed at $T=1$ and $\alpha=1$ in the Tiny ImageNet implementation.

\vspace{1in}

\subsection{Network architecture}
\label{app:tinet-architecture}

The Tiny ImageNet network is a wider and deeper variant of the
CIFAR-10 backbone. It uses three $5\!\times\!5$ convolutional blocks
with max-pooling, followed by three fully connected layers with
ReLU and dropout. Bias terms are enabled in every convolution and
linear layer. A single \texttt{--width\_multiplier} flag scales
all conv channels and the FC hidden width jointly.

\begin{table}[h]
\centering
\caption{Tiny ImageNet CNN architecture at \texttt{--width\_multiplier}$=1.0$.
Input: Tiny ImageNet images of shape $(3,\,64,\,64)$. Total
trainable parameters: $14{,}130{,}888$.}
\label{tab:tinet-arch}
\small
\begin{tabular}{lllr}
\toprule
Layer & Operation & Output shape & Parameters \\
\midrule
conv1   & $5\!\times\!5$ conv, stride 2, pad 2 & $(96,\,32,\,32)$  & $7{,}296$ \\
act1    & ReLU                                  & $(96,\,32,\,32)$  & -- \\
pool1   & MaxPool $2\!\times\!2$                & $(96,\,16,\,16)$  & -- \\
conv2   & $5\!\times\!5$ conv, stride 1, pad 2 & $(128,\,16,\,16)$ & $307{,}328$ \\
act2    & ReLU                                  & $(128,\,16,\,16)$ & -- \\
pool2   & MaxPool $2\!\times\!2$                & $(128,\,8,\,8)$   & -- \\
conv3   & $5\!\times\!5$ conv, stride 1, pad 2 & $(256,\,8,\,8)$   & $819{,}456$ \\
act3    & ReLU                                  & $(256,\,8,\,8)$   & -- \\
pool3   & MaxPool $2\!\times\!2$                & $(256,\,4,\,4)$   & -- \\
flatten & reshape                               & $(4096,)$         & -- \\
fc3     & linear                                & $(2048,)$         & $8{,}390{,}656$ \\
act4    & ReLU $+$ Dropout($p$)                 & $(2048,)$         & -- \\
fc4     & linear                                & $(2048,)$         & $4{,}196{,}352$ \\
act5    & ReLU $+$ Dropout($p$)                 & $(2048,)$         & -- \\
fc5     & linear                                & $(200,)$          & $409{,}800$ \\
softmax & softmax at $T\!=\!1$                  & $(200,)$          & -- \\
\bottomrule
\end{tabular}
\end{table}

The conv channels $P_0,P_1,P_2$ and FC hidden width $F_h$ are
parameterized as
\begin{equation*}
P_0 \!=\! \mathrm{round}(96\!\cdot\!w),\quad
P_1 \!=\! \mathrm{round}(128\!\cdot\!w),\quad
P_2 \!=\! \mathrm{round}(256\!\cdot\!w),\quad
F_h \!=\! \mathrm{round}(2048\!\cdot\!w),
\end{equation*}
where $w=\mathrm{\texttt{width\_multiplier}}$. The fc3 input dimension $P_2\!\cdot\!4\!\cdot\!4$ is derived automatically
from the conv stack, giving $4096$ at \texttt{--width\_multiplier}$=1.0$. Weights are initialized from a Gaussian
distribution with standard deviation
$\sqrt{2 / \mathrm{(\texttt{scale\_factor} \cdot \text{fan\_in})}}$,
giving a Kaiming-like scaling with $\texttt{scale\_factor}=6$;
biases are initialized to zero.

\newpage

\subsection{Data pipeline and augmentation}
\label{app:tinet-data}

All images are normalized per-channel using the standard ImageNet
statistics
\begin{equation*}
\boldsymbol{\mu} = (0.485,\,0.456,\,0.406),\qquad
\boldsymbol{\sigma} = (0.229,\,0.224,\,0.225).
\end{equation*}
Training augmentation consists of (i) a random crop to
$64\!\times\!64$ from a $4$-pixel reflection-padded image and
(ii) random horizontal flipping, applied independently per sample
and per epoch. No color jittering, mixup, or RandAugment is used in
the main-text configuration.

At evaluation time we apply a deterministic mirror-and-translation
ensemble implemented in \texttt{infer\_mirror\_translate}. For each
validation image, the code first averages the model outputs over the
original image and its horizontal flip. It then applies two deterministic
reflection-padded translation shifts and averages the mirror-averaged
outputs of these translated views. The final prediction used to compute
test accuracy is
\[
0.5\,\bar{\mathbf{y}}_{\mathrm{mirror}}
+
0.5\,\bar{\mathbf{y}}_{\mathrm{trans}},
\]
where $\bar{\mathbf{y}}_{\mathrm{mirror}}$ is the mean of two forward
passes and $\bar{\mathbf{y}}_{\mathrm{trans}}$ is the mean over two
translated mirror pairs. The evaluation data loader itself applies no
stochastic augmentation beyond normalization.

For the results reported in this paper, Tiny ImageNet uses the standard split:
training is performed on the official training set and evaluation is performed
on the official validation set, since the standard Tiny ImageNet release does
not provide labels for the test set. The dataset loader expects the original
Tiny ImageNet directory structure: training images under
\texttt{train/<wnid>/images/} and validation images in the flat
\texttt{val/images/} directory indexed by \texttt{val/val\_annotations.txt}.
The root resolver accepts either the Tiny ImageNet root itself or a parent
directory containing \texttt{tiny-imagenet-200}. If a separate labeled test
split with annotations is supplied, the code can also construct a dedicated
test loader, but none of the reported numbers in this paper use that option.

\vspace{1in}

\subsection{Training hyperparameters}
\label{app:tinet-hparams}

Table~\ref{tab:tinet-hparams} lists the training hyperparameters used by the
Tiny ImageNet score-expanded SBD launch scripts
\texttt{CNN\_tinyImageNet/SBD\_exp/run\_tinet\_sbd\_exp\_ce\_s*.sh}. The \textit{SBD coefficients} block follows the layer indexing of
Section~\ref{app:cemsen-sbd}, with \texttt{CMSE\_OUT} now applied to
the deeper output layer fc5, \texttt{CMSE\_OUT2} shared between the
two hidden FC layers fc3 and fc4, and \texttt{CMSE\_HIDDEN} shared
across all three convolutional layers. The covariance coefficients use
the analogous grouping, while \texttt{CL1\_OUT} is applied to the two
hidden FC layers fc3 and fc4 in the Tiny ImageNet implementation.

The Tiny ImageNet expanded-SBD hyperparameters in Table~\ref{tab:tinet-hparams} were selected by a grid search over the SBD learning-rate and regularization coefficients. The search was managed using Weights \& Biases sweeps~\citep{wandb}, including automatic early termination of clearly underperforming configurations. 

\newpage

\begin{table}[h]
\centering
\caption{Tiny ImageNet expanded-SBD training hyperparameters. Values shown correspond
to the score-expanded SBD CE launch scripts
\texttt{CNN\_tinyImageNet/SBD\_exp/run\_tinet\_sbd\_exp\_ce\_s*.sh}. Column \emph{Flag}
is the CLI name in the launch script; column \emph{Symbol} names the variable
used in the equations of Section~\ref{app:cemsen-sbd}.}
\label{tab:tinet-hparams}
\small
\setlength{\tabcolsep}{3pt}
\renewcommand{\arraystretch}{1.15}
\begin{tabularx}{\linewidth}{l l l X}
\toprule
Group & Flag & Symbol & Value \\
\midrule
\multirow{5}{*}{\emph{General}}
 & \texttt{--logger\_name} & seed & $5$ independent seeds \\
 & \texttt{--n\_epochs}    & --   & $251$ \\
 & \texttt{--batch\_size}  & $B$  & $28$ \\
 & \texttt{--loss\_out}    & --   & cross entropy (\texttt{ce}) \\
 & augmentation            & --   & RandomCrop pad $4$ $+$ HFlip \\
\midrule
\multirow{5}{*}{\emph{Method}}
 & \texttt{--method}      & -- & \texttt{sbd}\\
 & \texttt{--err\_expand} & -- & $2$ \\
 & broadcast vector       & $\boldsymbol{\varepsilon}$ &
 rank-$3D$ expansion of $\boldsymbol{\delta}$ \\
 & softmax temperature    & $T$ & $1.0$ fixed \\
 & effective $D_{\varepsilon}$ & $D_{\varepsilon}$ &
 $600$ \\
\midrule
\multirow{6}{*}{\emph{Optimizer}}
 & \texttt{--lr}              & $\eta_0$ & $8.5\!\times\!10^{-5}$ \\
 & \texttt{--lr\_drop\_rate}  & --       & $0.98$ \\
 & \texttt{--lr\_drop\_every} & --       & $1$ epoch \\
 & \texttt{--weight\_decay}   & --       & $1.25\!\times\!10^{-5}$ \\
 & Adam $\beta_1$, $\beta_2$  & --       & $0.9$, $0.999$ \\
 & precision                  & --       & \texttt{float32} \\
\midrule
\multirow{8}{*}{\emph{SBD coefficients}}
 & \texttt{--CMSE\_OUT}    & $c_{\mathrm{score}}^{(\mathrm{fc5})}$     & $10$ \\
 & \texttt{--CMSE\_OUT2}   & $c_{\mathrm{score}}^{(\mathrm{fc3,fc4})}$ & $0.1$ \\
 & \texttt{--CMSE\_HIDDEN} & $c_{\mathrm{score}}^{(\mathrm{conv})}$    & $0.1$ \\
 & \texttt{--CCOV\_OUT}    & $c_{\mathrm{cov}}^{(\mathrm{fc5})}$       & $1\!\times\!10^{-7}$ \\
 & \texttt{--CCOV\_OUT2}   & $c_{\mathrm{cov}}^{(\mathrm{fc3,fc4})}$   & $6\!\times\!10^{-7}$ \\
 & \texttt{--CCOV\_HIDDEN} & $c_{\mathrm{cov}}^{(\mathrm{conv})}$      & $0$ \\
 & \texttt{--CL1\_OUT}     & $c_{\ell_1}^{(\mathrm{fc3,fc4})}$         & $1\!\times\!10^{-7}$ \\
 & \texttt{--CL1\_HIDDEN}  & $c_{\ell_1}^{(\mathrm{hidden})}$          & $0$ \\
\midrule
\multirow{7}{*}{\emph{Covariance EMA}}
 & \texttt{--Reh\_lambda}              & $\lambda$         & $0.999992$ \\
 & \texttt{--Reh\_lambda2}             & $\lambda_2$       & $0.999992$ \\
 & \texttt{--Reh\_lambda\_drop}        & $\rho$            & $0.015$ \\
 & \texttt{--Reh\_lambda\_drop\_every} & --                & $1$ epoch \\
 & \texttt{--Reh\_gain}                &
 $\mathrm{std}(\widehat{R}^{(\mathrm{conv})}_{g\varepsilon})$ & $0.01$ \\
 & \texttt{--Reh\_gain\_lin}           &
 $\mathrm{std}(\widehat{R}^{(\mathrm{FC})}_{g\varepsilon})$ & $0.01$ \\
 & \texttt{--Reh\_ini}                 &
 init diagonal of $\widehat{R}_{hh}$ & $1\!\times\!10^{-8}$ \\
\midrule
\multirow{5}{*}{\emph{Architecture}}
 & \texttt{--P0}, \texttt{--P1}, \texttt{--P2} & channel counts & $(96,\,128,\,256)$ \\
 & \texttt{--dropout}           & $p$ & $0.08$ \\
 & \makecell[l]{\texttt{--scale\_factor}, \texttt{--init\_dist}}
   & init scale, dist. & $6$, Gaussian \\
\bottomrule
\end{tabularx}
\end{table}

\subsection{Score expansion}
In Tiny ImageNet experiments, we employed the same expansion modulators used in the CIFAR-10 experiments: confidence-weighted $\mathbf{p}\odot\boldsymbol{\delta}$ and $\mathrm{roll}_5(\mathbf{p})\odot\boldsymbol{\delta}$. Note that the numerical indices in Tiny Imagenet labeling are arbitrary and have no semantic meaning. Therefore, the use $5$ as the roll index is also arbitrary (i.e., we do not need to use $100$.). Since there are $200$ outputs, the score size is $200$, and therefore, the hidden-layer broadcast dimension for SBD Exp is $600$.

\begin{figure}[h]
\centering
\includegraphics[width=0.85\linewidth]{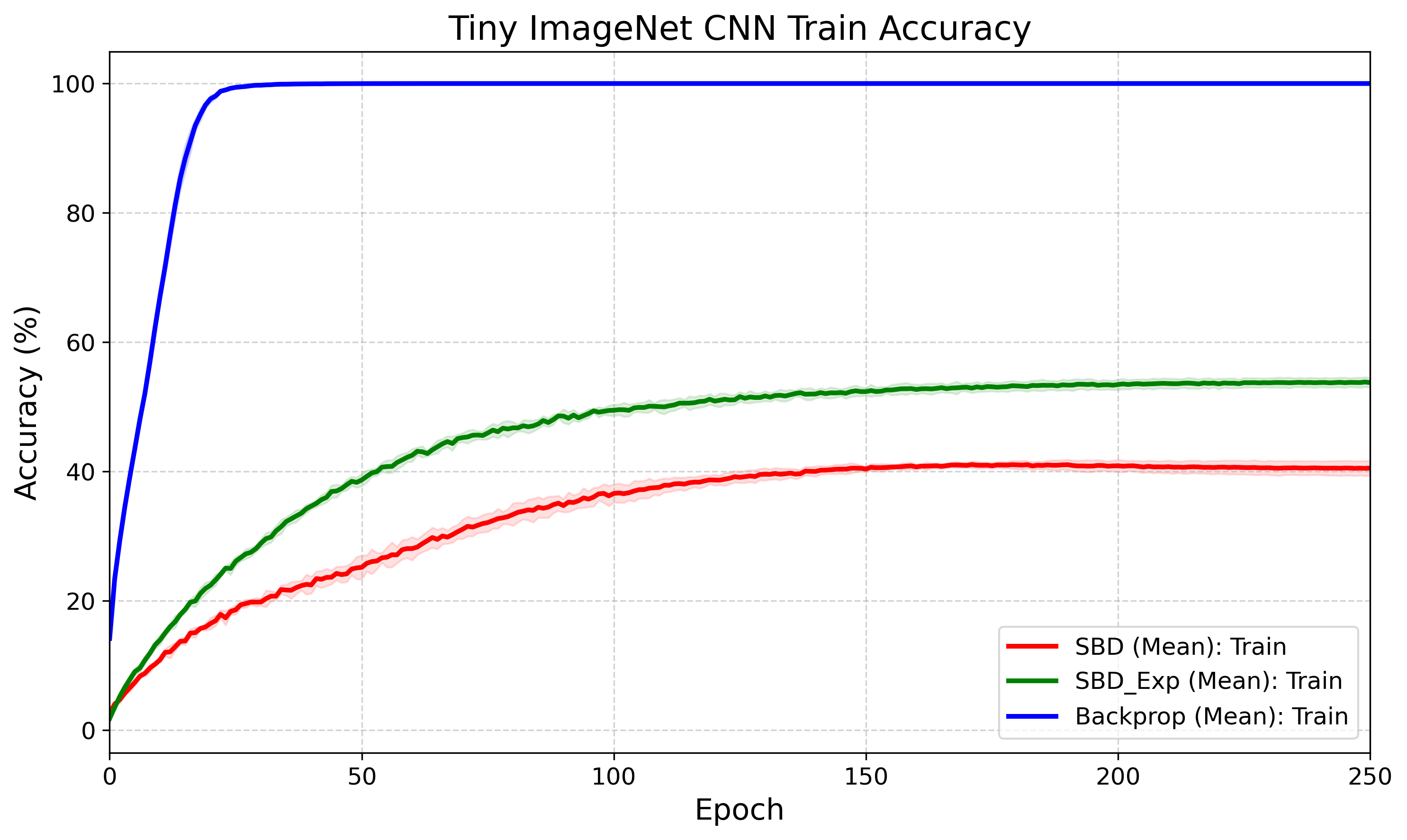}
\caption{Training accuracy on Tiny ImageNet for BackProp, SBD, and SBD with score expansion (SBD\_Exp) across 250 epochs. Each curve represents the mean over five runs, and the shaded region denotes one standard deviation.}
\label{fig:tinet-train-accuracy-curve}
\end{figure}

\subsection{Accuracy curves}
\label{app:tinet-accuracy-curves}

Figure~\ref{fig:tinet-test-accuracy-curve} and
Figure~\ref{fig:tinet-train-accuracy-curve} show the Tiny ImageNet test and
training accuracy curves for BP, SBD, and SBD with score expansion. The curves
show a clear qualitative separation between the methods. BP obtains the highest
test accuracy, reaching $39.89\%$, while SBD with score expansion reaches
$31.43\%$ and plain SBD reaches $28.29\%$. Thus, as in the CIFAR-10 experiments,
expanding the broadcast vector improves the SBD update, yielding a gain of
$3.14$ percentage points over using the score vector alone.

\begin{figure}[h]
\centering
\includegraphics[width=0.85\linewidth]{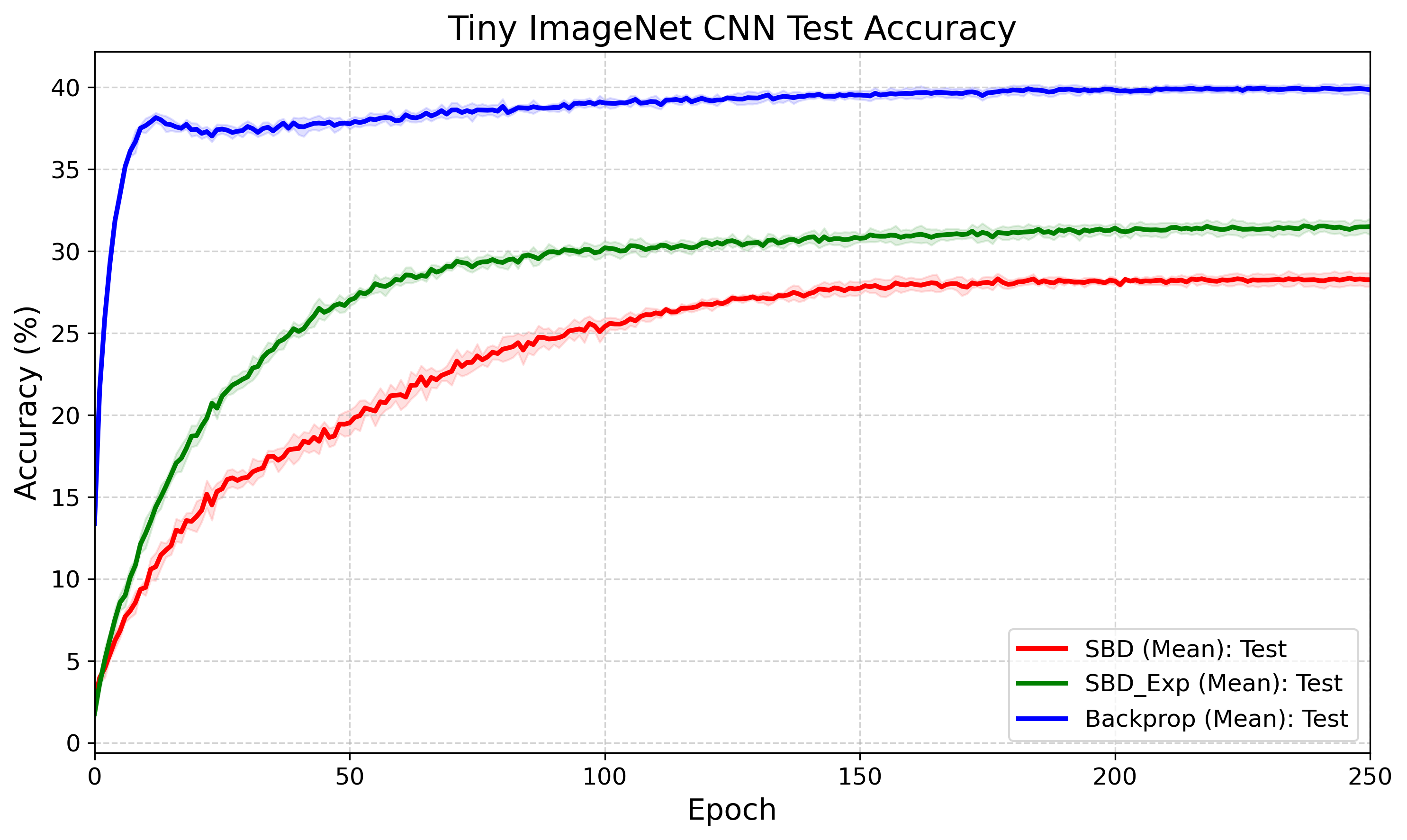}
\caption{Test accuracy on Tiny ImageNet for BackProp, SBD, and SBD with score expansion (SBD\_Exp) across 250 epochs. Each curve represents the mean over five runs, and the shaded region denotes one standard deviation. The expanded SBD variant consistently improves over baseline SBD, while BackProp achieves the highest test accuracy.}
\label{fig:tinet-test-accuracy-curve}
\end{figure}

Figure~\ref{fig:tinet-dfa-accuracy-curve} shows the corresponding DFA training
and test accuracy curves. DFA improves slowly over the longer 500-epoch run,
reaching $17.49\%$ test accuracy and $21.02\%$ training accuracy at the end of
training. Even with this longer training horizon, DFA remains well below the SBD
variants, indicating that the score-broadcast update is more effective than the
standard direct-feedback alternative on the Tiny ImageNet CNN experiment as
well.

\begin{figure}[h]
\centering
\includegraphics[width=0.85\linewidth]{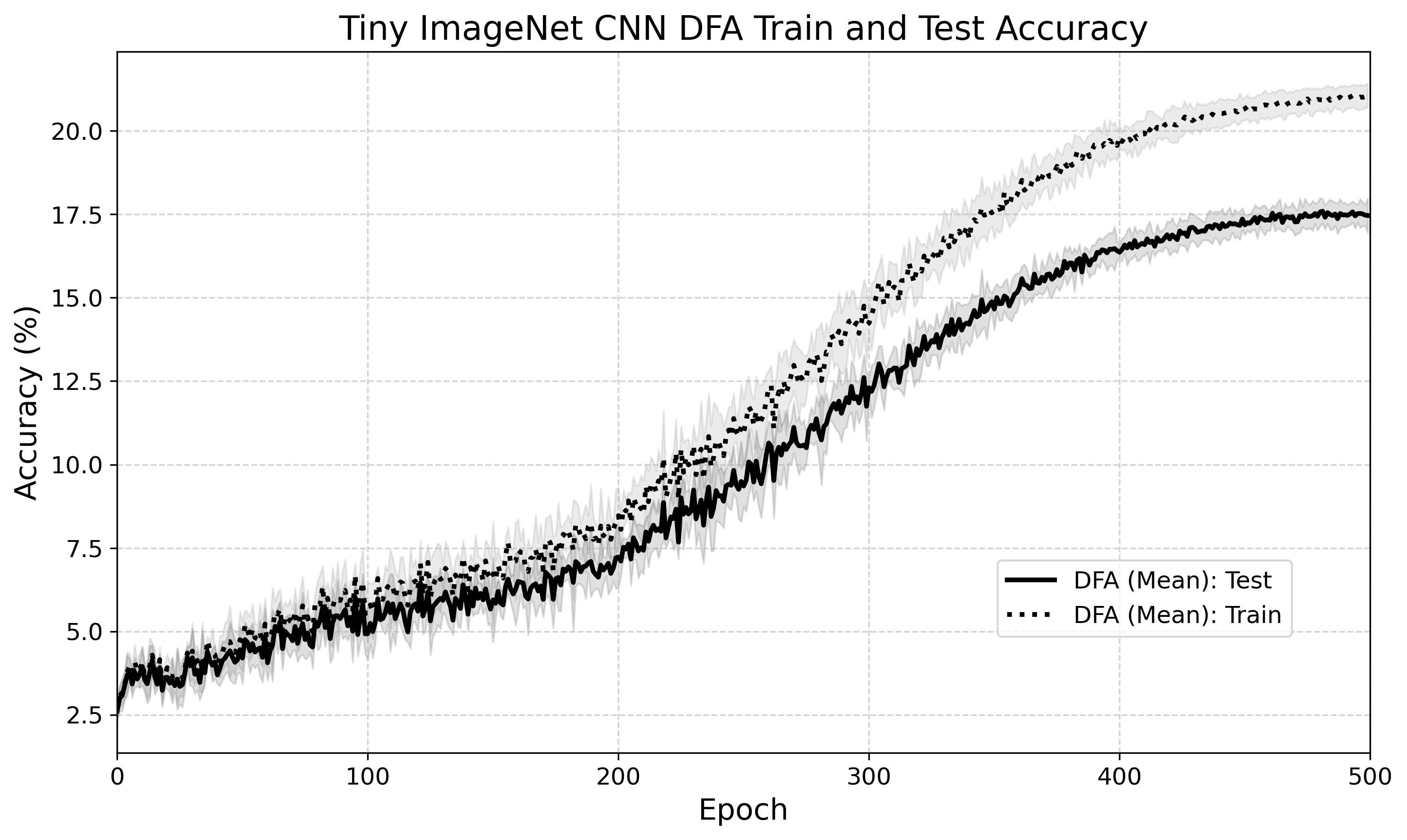}
\caption{Training and test accuracy on Tiny ImageNet for DFA across 500 epochs. Each curve represents the mean over five runs, and the shaded region denotes one standard deviation. DFA improves slowly throughout training and remains substantially below the accuracies obtained by BackProp and the SBD variants.}
\label{fig:tinet-dfa-accuracy-curve}
\end{figure}

A notable difference from the SBD and DFA curves is the strong overfitting
observed for BP. In the training plot, BP rapidly approaches nearly perfect
training accuracy, reaching $99.98\%$, whereas its test accuracy saturates at
$39.89\%$. This corresponds to a train-test gap of $60.09$ percentage points.
In contrast, the SBD variants achieve lower training accuracy and also lower
test accuracy, but exhibit a much less extreme separation between training and
test performance. The expanded SBD model, for example, reaches $53.77\%$
training accuracy and $31.43\%$ test accuracy. This suggests that BP fits the
Tiny ImageNet training set very aggressively in this architecture, while the
broadcast-based learning rules act as a more constrained optimization
procedure.

\end{document}